\documentclass{article}
\pdfoutput=1
\PassOptionsToPackage{numbers}{natbib}

\usepackage[final]{neurips_2023}

\usepackage[utf8]{inputenc} %
\usepackage[T1]{fontenc}    %
\usepackage[table,xcdraw]{xcolor}
\usepackage{hyperref}       %
\usepackage{url}            %
\usepackage{booktabs}       %
\usepackage{amsfonts}       %
\usepackage{nicefrac}       %
\usepackage{microtype}      %
\usepackage{xcolor}         %
\usepackage{xspace}
\usepackage{amsmath}
\usepackage{graphicx}
\usepackage{amsthm}
\usepackage{enumitem}
\usepackage{thmtools, thm-restate}

\newtheorem{theorem}{Theorem}[section]
\newtheorem{prop}{Proposition}[section]

\newtheorem{corollary}[theorem]{Corollary}
\usepackage{dsfont}
\newcommand{\proba}{\mathds{P}}
\newcommand\doubleplus{\!+\kern-1.3ex+\kern0.8ex\!}

\usepackage[frozencache,cachedir=.]{minted}

\usepackage{comment} %
\usepackage{adjustbox}
\PassOptionsToPackage{dvipsnames}{xcolor}
\definecolor{lightgray}{gray}{0.9}
\PassOptionsToPackage{dvipsnames}{xcolor}
\definecolor{lightgray}{gray}{0.9}
\definecolor{Box3Color}{RGB}{255, 214, 193}
\PassOptionsToPackage{dvipsnames}{xcolor}
\definecolor{lightgray}{gray}{0.9}

\usepackage{listings}

\definecolor{codegreen}{rgb}{0,0.6,0}
\definecolor{codegray}{rgb}{0.5,0.5,0.5}
\definecolor{codepurple}{rgb}{0.58,0,0.82}
\definecolor{backcolour}{rgb}{0.95,0.95,0.92}
\lstdefinestyle{mystyle}{
  backgroundcolor=\color{backcolour}, commentstyle=\color{codegreen},
  keywordstyle=\color{magenta},
  numberstyle=\tiny\color{codegray},
  stringstyle=\color{codepurple},
  basicstyle=\ttfamily\footnotesize,
  breakatwhitespace=false,         
  breaklines=true,                 
  captionpos=b,                    
  keepspaces=true,                 
  numbers=left,                    
  numbersep=5pt,                  
  showspaces=false,                
  showstringspaces=false,
  showtabs=false,                  
  tabsize=2
}
\lstset{style=mystyle}

\usepackage{mdframed}
\usepackage{booktabs}
\usepackage{multirow}
\usepackage{lipsum}

\usepackage{csquotes}
\usepackage{titlesec}
\usepackage{wrapfig}
\usepackage{caption}
\usepackage{subcaption}
\usepackage[toc,page]{appendix}

\usepackage{titletoc}
\usepackage[noorphans,vskip=-0.3ex]{quoting}

\newcommand{\eg}{e.g.,\xspace}
\newcommand{\ie}{i.e.,\xspace}

\title{Faith and Fate: \\ Limits of Transformers on Compositionality}

\author{%
 Nouha Dziri$^{1}$\thanks{~First co-authors. \ \ \ \ \ \ $\dagger$~Second co-authors.}\hspace{0.5em}, Ximing Lu$^{1,2*}$, Melanie Sclar$^{2*}$,\\ \textbf{Xiang Lorraine Li$^{1\dagger}$, Liwei Jiang$^{1,2\dagger}$\hspace{0.1em}, Bill Yuchen Lin$^{1\dagger}$%
 ,}\\ \textbf{Peter West$^{1,2}$, Chandra Bhagavatula$^{1}$, Ronan Le Bras$^{1}$, Jena D. Hwang$^{1}$, Soumya Sanyal$^{3}$,} \\
 \textbf{Sean Welleck$^{1,2}$, Xiang Ren$^{1,3}$, Allyson Ettinger$^{1,4}$, Zaid Harchaoui$^{1,2}$, Yejin Choi$^{1,2}$} \\[1ex]
$^{1}$Allen Institute for Artificial Intelligence \quad $^{2}$University of Washington\\ 
 $^{3}$University of Southern California\quad $^{4}$University of Chicago\\[1ex]
\texttt{\href{mailto:nouhad@allenai.org}{nouhad@allenai.org}, {\href{mailto:ximinglu@allenai.org }{ximinglu@allenai.org}}, \href{mailto:msclar@cs.washington.edu}{msclar@cs.washington.edu}}
}

\begin{document}

\maketitle

\vspace{-3mm}
\begin{abstract}

Transformer large language models (LLMs) have sparked admiration for their exceptional performance on tasks that demand intricate multi-step reasoning. Yet, these models simultaneously show failures on surprisingly trivial problems. 
This begs the question: Are these errors incidental, or do they signal more substantial limitations?
In an attempt to demystify transformer LLMs, we investigate the limits of these models across three representative \textit{compositional} tasks---multi-digit multiplication, logic grid puzzles, and a classic dynamic programming problem.
These tasks require breaking problems down into sub-steps and synthesizing these steps into a precise answer. 
We formulate compositional tasks as computation graphs to systematically quantify the level of complexity, and break down reasoning steps into intermediate sub-procedures. 
Our empirical findings suggest that transformer LLMs solve compositional tasks by reducing multi-step compositional reasoning into linearized subgraph matching, without necessarily developing systematic problem-solving skills. 
To round off our empirical study, we provide theoretical arguments on abstract multi-step reasoning problems that highlight how autoregressive generations' performance 
can rapidly decay with\,increased\,task\,complexity.

\end{abstract}
\vspace{-1mm}
\section{Introduction}
\vspace{-0.5mm}
\textit{``It was the epoch of belief, it was the epoch of incredulity.'' -- Charles Dickens, A Tale of Two Cities}

Large-scale transformers such as ChatGPT  \cite{team2022chatgpt}  and GPT4 \cite{openai2023gpt4} demonstrate unprecedented capabilities \cite{team2022chatgpt, thoppilan2022lamda, brown2020language, choi2023chatgpt, zheng2023chatgpt}, even noted as ``sparks of AGI'' \cite{bubeck2023sparks}.
In stark contrast, the same models sometimes struggle with simple, intuitive tasks \cite{DBLP:journals/corr/abs-2303-16421, DBLP:journals/corr/abs-2302-06476, DBLP:journals/corr/abs-2303-17276}. For instance, humans can solve 3-digit by 3-digit multiplication arithmetic after learning basic calculation rules \cite{dehaene2004arithmetic, hiebert2013conceptual}. Yet, off-the-shelf ChatGPT and GPT4 achieve only 55\% and 59\% accuracies on this task, respectively (\S\ref{sec:experiments}). 

The striking discrepancy between the impressive successes of transformer LLMs on \textit{seemingly complex} tasks and the astonishing failures on \textit{seemingly trivial} tasks spark critical open questions about how to faithfully interpret their mixed capabilities.
Under what conditions do transformers succeed, fail, and why? What types of errors do they make? Can transformers uncover implicit problem-solving rules or be taught to follow reasoning paths?

Seeking thorough answers to these questions remains an open research challenge. However, we offer novel insights into the fundamental limits of transformers\footnote{For brevity, we use `transformers' to refer to `autoregressive transformer LLMs' throughout %
the paper.}, centered around \textit{compositional problems} that require strict multi-hop reasoning to derive correct predictions. Applying step-by-step reasoning is fundamental to human intelligence \cite{simon1971human, simon1966complexity}. 
These compositional problems present compelling challenges for AI systems as they require combining basic reasoning operations to follow computational paths that arrive at unique correct solutions.
In particular, we study three 
straightforward and flexible 
representative compositional tasks: long-form multiplication, logic grid puzzles (\ie Einstein's puzzle~\cite{prosser1993hybrid}), and a classic dynamic programming problem.

We propose two hypotheses. \textbf{First}, transformers solve compositional tasks by reducing multi-step compositional reasoning into linearized path matching. 
This contrasts with the systematic multi-step reasoning approach that learns to apply underlying \textit{computational rules} required for building correct answers \cite{stenning2012human, johnson2015logic, evans1989bias}.
Shortcut learning \cite{geirhos2020shortcut} via pattern-matching may yield fast correct answers when similar compositional patterns are available during training but does not allow for robust generalization to uncommon or complex examples. \textbf{Second}, due to error propagation, transformers may have inherent limitations on solving high-complexity compositional tasks that exhibit novel patterns. Errors in the early stages of the computational process can lead to substantial compounding errors in subsequent steps, preventing models from finding correct solutions.

To investigate our hypotheses, we formulate compositional tasks as \textit{computation graphs}.
These graphs break down problem-solving into submodular functional steps, enabling structured measurements of complexity and verbalization of computational steps as input sequences to language models.
Moreover, we leverage information gain to predict patterns that models are likely to learn based on the underlying task distribution without the need to perform full computations within the graph.

Empirical results show that training on task-specific data leads to near-perfect performance on in-domain instances and under low compositional complexity, but fails drastically on instances outside of this region. This substantial gap suggests that systematic problem-solving capabilities do not emerge from maximum likelihood training \cite{bengio2000neural} on input-output sequences, even when prompted or trained with human-like reasoning steps (\ie a linearization of computation graphs; \S\ref{sec:input_output}). 
Models' success can be attributed, in part, to their exposure to training examples sub-graphs that involve the same computations required for solving test examples (see Section~\ref{pattern_matching})
 In order to gain a deeper understanding of models' failures, we conduct a comprehensive analysis by decomposing their computation graphs and examining different error types. 
We find that while models can memorize single-step operations, they fail to compose them into correct reasoning paths, suggesting that they mostly make predictions based on shallow, rote learning rather than a deep, holistic task understanding~(\S\ref{error analysis}).
Importantly, we provide theoretical evidence of exponential error accumulation using abstract compositional tasks. All tasks analyzed empirically in this paper are instantiations of these abstractions (\S\ref{sec:theory}). We argue that transformers could be inherently limited in solving compositionally complex tasks out-of-the-box\footnote{Code and data are available at \url{https://github.com/nouhadziri/faith-and-fate}}.

As transformers continue to make tangible real-world impacts, it is pressing to interpret their remarkable performance critically. Our work takes a realistic look at the limitations of transformers in the context of compositional tasks. 
To shed light on practical future steps, we identify directions for addressing these limitations, such as using transformers for tasks that could be decomposed into few reasoning steps, tasks where evaluation may afford some leniency, and using transformers in combination with planning modules or refinement methods to improve their generations. To advance language AI, fundamental innovations are required to address or complement these limitations.

\section{Measuring Limitations of Transformers in Compositional Tasks}
\label{sec2:hypotheses}
\vspace{-5mm}

\begin{quoting}

\end{quoting}

Human problem-solving skills can be conceptualized as a graph structure, where each vertex represents a partial solution and the edges represent operators that can be applied to modify these solutions. 
As we will outline next and illustrate in Figure \ref{fig:multiplication_execution_graph}, we use computation graphs and corresponding metrics to methodically evaluate transformers' reasoning abilities.

\subsection{Computation Graph Definition}\label{def_computation_graph}
Let $A$ be a deterministic algorithm (function), and let $\mathcal{F}_A$ be a set of primitives (functions) the algorithm uses in its execution. Assuming the inputs $\mathbf{x}$ to algorithm $A$ are given, we define $A(\mathbf{x})$'s static computation graph $G_{A(\mathbf{x})}$. $G_{A(\mathbf{x})}\,=\,(V,\,E,\,s,\,op)$ is a directed acyclic graph. 
Nodes $V$ represent all variables' values during $A$'s execution: each node $v\,\in\,V$ has a value $s(v)\,\in\,\mathbb{R}$ associated. Edges $E$ represent the function arguments involved in some computation: for each non-source node $v\in V$, let $U\,=\,\{u_1,\ldots,u_j\}\,\subset\,V^j$ be its parent nodes. %
Then, $s(v) = f(u_1, \ldots, u_j)$ for some $f \in \mathcal{F}_A$%
. Since each node $v$ is uniquely defined by the computation of a single primitive $f$, we define $op: V \rightarrow \mathcal{F}_A$ as $op(v)=f$.

Let $S \subset V$ be the source nodes of $G_{A(\mathbf{x})}$ and without loss of generality, let $o \in V$ be its sole leaf node. By definition, $S \equiv \mathbf{x}$ and $A(\mathbf{x}) = s(o)$, representing the input and output of $A$ respectively.

To\,be\,able\,to\,train\,and\,evaluate\,a\,language\,model's\,ability\,to\,follow\,algorithm\,$A$\,we\,must\,linearize\,$G_{A(\textbf{x})}$. Since we only consider autoregressive models, this linearization must also be a topological ordering.

\begin{figure}[t]
\centering
\includegraphics[width=\linewidth]{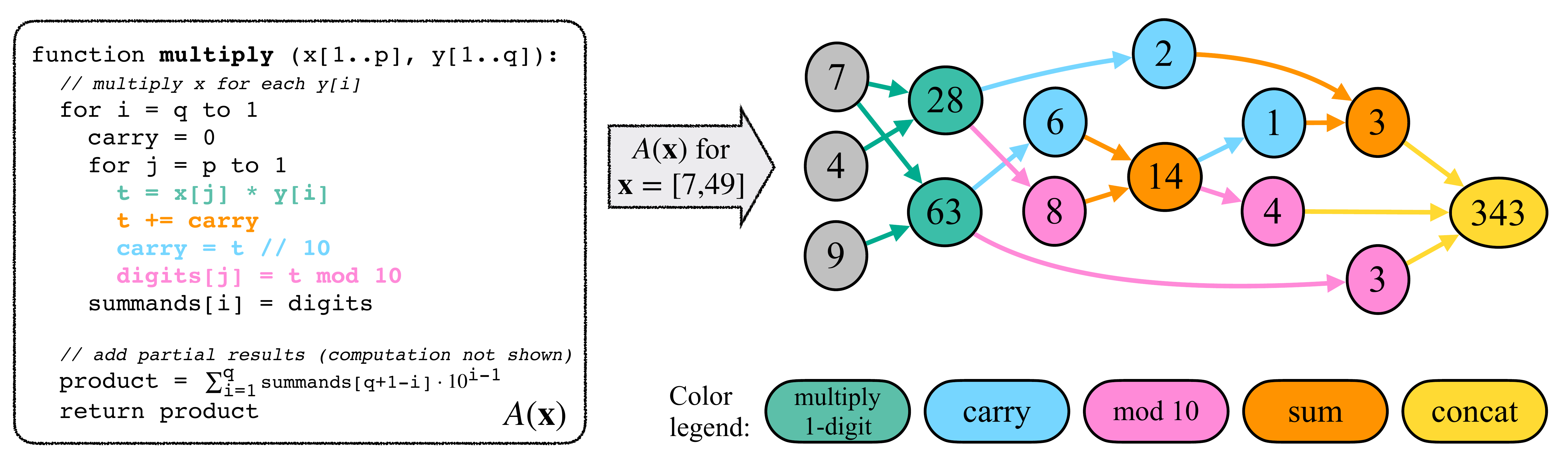}
\caption{\small Transformation of an algorithm $A$ to its computational graph $G_{A(\mathbf{x})}$. The depicted example is of long-form multiplication algorithm $A$, for inputs $\mathbf{x} = [7, 49]$ (i.e. computing $7 \times 49$). %
}
\label{fig:multiplication_execution_graph}
\end{figure}

\subsection{Quantifying Compositional Complexity using Graph Metrics}
\label{complexity}
$A$'s\,representation\,as\,a\,computation\,graph\,$G_{A(\mathbf{x})}$\,enables\,measuring\,task\,complexity\,from\,many\,angles.%

We define a node $v\in V$'s \textit{layer number} as the length of the longest path from a source node to $v$ in the directed acyclic graph $G_{A(\mathbf{c})}$. We then define the \textbf{reasoning depth} as the largest layer number in the graph.
In computation graphs, reasoning depth is a proxy for the maximum level of multi-hop reasoning required to solve the task.

Let $d_S:\,V\!\rightarrow\!\mathbb{N}_0$ be the shortest distance to any of $G$'s source nodes $S\,\subset\,V$. We define the \textbf{reasoning width} of a graph as the mode of $\{d(v)\,:\,v\,\in\,V\}$. This metric aims to measure the maximum number of %
variables required to maintain in parallel during the computation. Relatedly, we also define the \textbf{average parallelism} of a graph as the ratio between $|V|$ and its reasoning depth. This aims to compute the average width in computation through the graph, and not just in its mode.%

\subsection{Predicting Surface Patterns through Relative Information Gain}
\label{theory_inf_gain}

When evaluating model performance, we may observe partially correct answers even in an overall incorrect response. 
To understand model strategies in these partial successes, we use Relative Information Gain to predict surface patterns that models are likely to recognize.
We represent task $T$ as a distribution $(X_1,\ldots, X_n, Y_1, \ldots, Y_m)$ and %
measure the amount of (normalized) information gained about an output element $Y_j$ by observing a subset of input random variables $X\subset \{X_1,\ldots, X_n\}$: %
\begin{align}
    \text{RelativeIG}(Y_j,X) &= \frac{H(Y_j) - H(Y_j|X)}{H(Y_j)} \in [0, 1]
\end{align}

RelativeIG may be used to analyze the influence of any node in the computation graph (as defined in \S\ref{def_computation_graph}) with respect to a set of its ancestors; in particular, output nodes with respect to input nodes. %

\subsection{Exploring Three Representative Compositional Tasks: Definitions}
\label{sec:tasks}
\label{tasks}

\paragraph{Multiplication} Multi-digit multiplication requires executing operations with numerical symbols based on procedural rules \cite{hiebert2013conceptual}.
This task has multiple algorithmic solutions; 
in constructing computation graphs, we use the well-known $O(k_1 k_2)$ long-form multiplication algorithm for computing $x \cdot y$, where $x$ has $k_1\leq 5$ digits and $y$ has $k_2\leq 5$ digits in base 10.
See \S\ref{appendix:multi} for data construction details.

To instantiate $G_{A(\mathbf{x})}$, let $\mathcal{F}_A=\{\text{one-digit multiplication, sum, mod 10, carry over, concatenation}\}$. Source nodes $S$ are digits of input numbers, leaf node $o$ is the final output, and intermediate nodes $v$ are partial results generated during execution of the long-form multiplication algorithm (see Figure~\ref{fig:multiplication_execution_graph}).

\paragraph{Einstein's Puzzle}
Einstein's puzzle is a well-known logic puzzle often used as a benchmark for solving constraint satisfaction problems~\cite{prosser1993hybrid}. It involves a list of houses with different attributes (\eg owner's name, pets), and the goal is to determine which attributes belong to each house by combining a set of pre-defined natural language clues or constraints. The solution to the puzzle is a matrix of size $K \times M$, where $K$ represents the number of houses and $M$ the number of attributes. As $K$ and $M$ increase, synthesizing different partial solutions that satisfy individual constraints becomes highly compositionally complex. To construct the computation graph, 
we consider 
a greedy algorithm that iteratively eliminates possible solutions by filling at least one cell each time. It deterministically fills the cell(s) that requires the minimum number of clues among all current unfilled cells. We refer to this as the \textit{elimination function}. See \S\ref{appendix:puzzle_data_construction} for examples, data construction, and algorithm details.

To instantiate $G_{A(\mathbf{x})}$, let $\mathcal{F}_A\!=\!\{\text{elimination function}\}$. %
The source nodes are the clues, all intermediate nodes are partially-filled matrices, and the output node is a fully-filled solution matrix.

\paragraph{Dynamic Programming Problem}

Dynamic programming (DP) recursively breaks down complex problems into simpler sub-problems, so problems solved using this technique are compositional. We analyze a classic relaxation of the NP-complete Maximum Weighted Independent Set problem \citep{kleinberg_tardos_2005_book}: \textit{Given a sequence of integers, find a subsequence with the highest sum, such that no two numbers in the subsequence are adjacent in the original sequence}. This relaxation may be solved in $O(n)$ time using DP. See the solution in \S\ref{appendix:dp_solution}. In the experiments, we restrict each integer to the $[-5,5]$ range.

To instantiate $G_{A(\textbf{x})}$, let $\mathcal{F}_A = \{\text{equals, and, not}, \text{indicator function, sum, max}\}$. Source nodes are elements of the input list, and the output node is a list that for each element indicates whether it should be selected%
. We select an $O(n)$ algorithm since $G_{A(\mathbf{x})}$'s size is proportional to $A$'s complexity.%

\section{Testing the Limits of Transformers: Empirical Evidence}
\label{sec:experiments}

\paragraph{Experimental Setup}

\begin{figure}
\setlength{\belowcaptionskip}{-.5\baselineskip}
\setlength{\abovecaptionskip}{0.3\baselineskip}
\centering
   \begin{minipage}[t]{0.68\textwidth}
     \centering
\includegraphics[width=\linewidth]{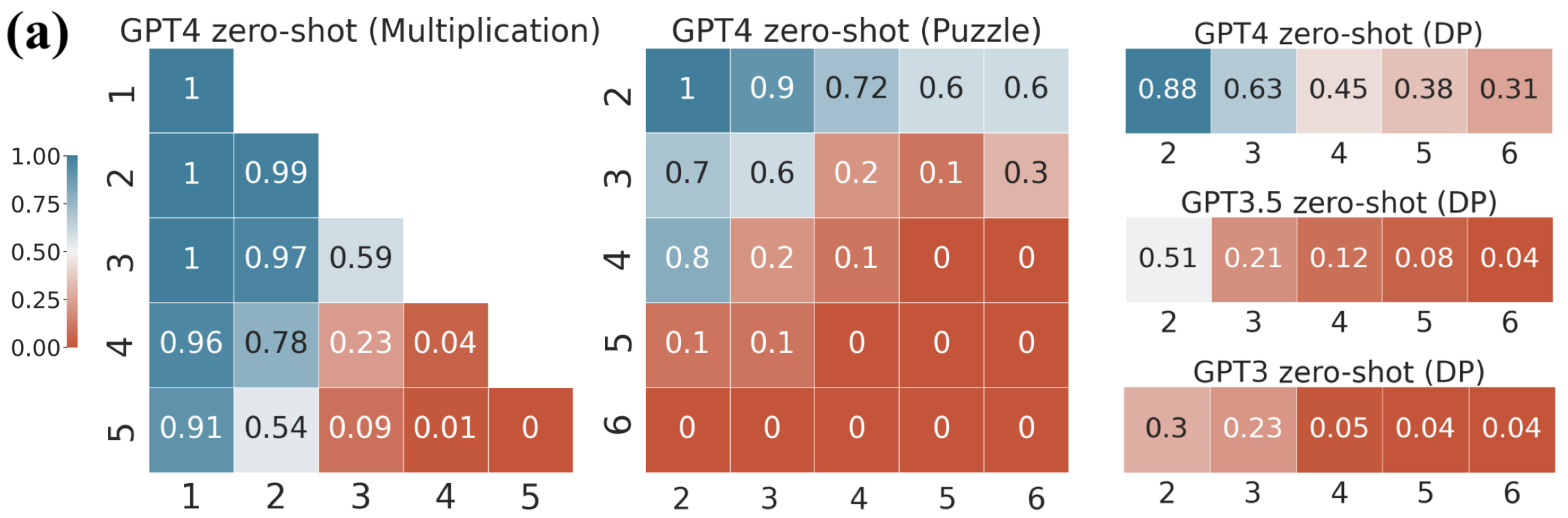}
   \end{minipage}
   \begin{minipage}[t]{0.26\textwidth}
     \centering
\includegraphics[width=\linewidth]{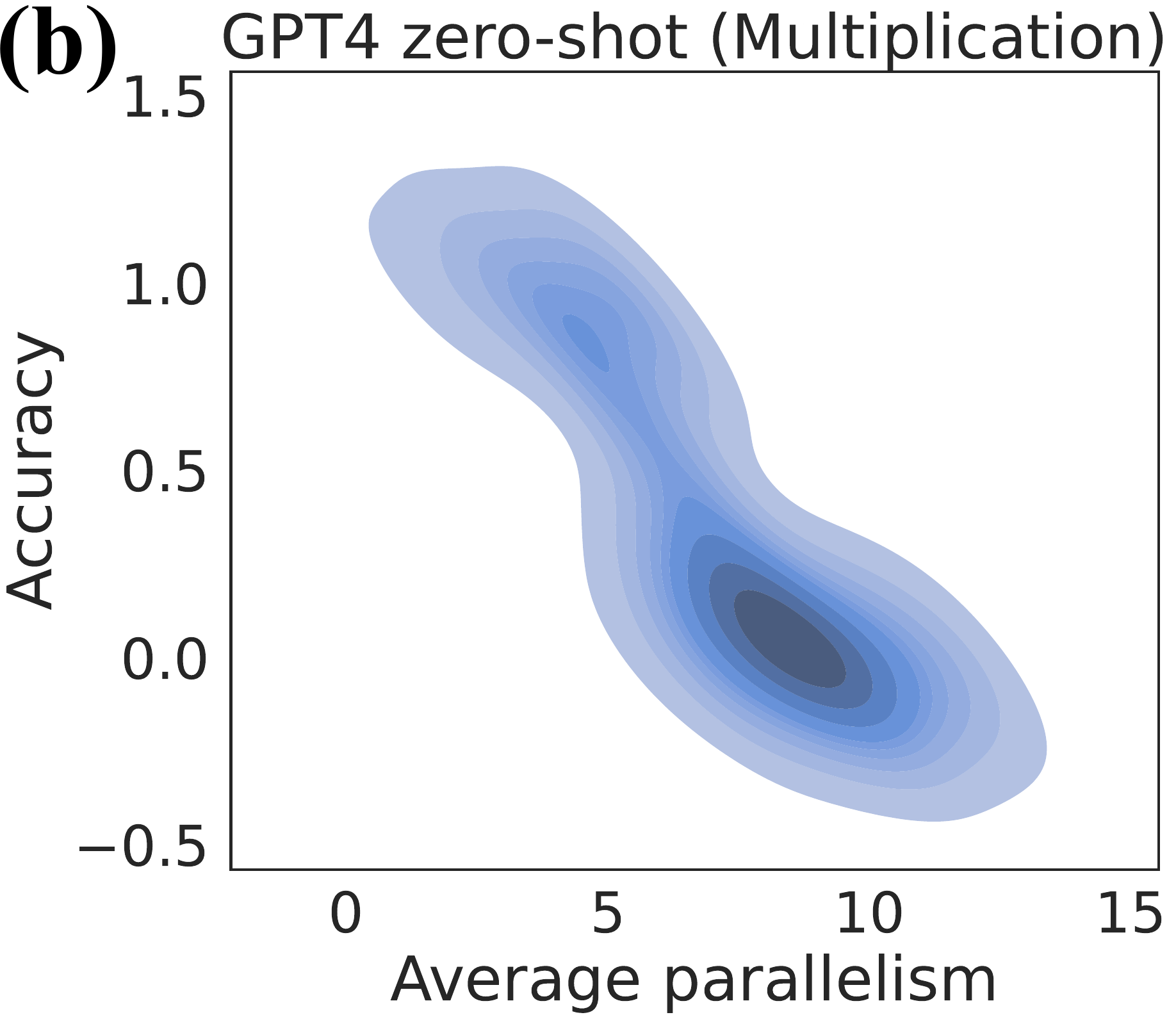}
   \end{minipage}
    \caption{(a) \small \textbf{Zero-shot accuracy}. Axes refer to problem sizes (number of digits in multiplication, number of houses and attributes in puzzle, and sequence length in the DP task). Transformers' accuracy decreases to near zero as task complexity increases, measuring task complexity by the problem size. (b) \textbf{Average parallelism} negatively correlates with accuracy.%
    }
    \label{fig:zero-shot}
\end{figure} 

To understand the capabilities of LLMs, we evaluate GPT3 (\texttt{text-davinci-003}) \cite{brown2020language}, ChatGPT (\texttt{GPT-3.5-turbo}) \cite{team2022chatgpt} and GPT4 (\texttt{gpt-4}) \cite{openai2023gpt4} using zero-shot, few-shot, and finetuning techniques. 
To enable the generation of computation graphs beyond the final answers, we use the concept of \textit{scratchpads} \cite{nye2022show}. Scratchpads are a verbalization of the computation graphs (\ie a linearized representation of a topological ordering of $G_{A(\textbf{x})}$). %
Overall, we consider \textit{question-answer} and \textit{question-scratchpad} formats for few-shot and finetuning settings to gauge models' capabilities for learning with and without explicit reasoning. See details of additional models and experimental configurations in \S\ref{appendix:exp-setup} and examples of scratchpad in \S\ref{app:compositional_tasks}.

\subsection{Testing the Limits of Transformers with Zero-shot, Few-shot and Finetuning}
\label{sec:input_output}

\begin{figure}[t]
   \begin{minipage}[t]{0.48\textwidth}
     \centering
\includegraphics[width=0.93\linewidth]
{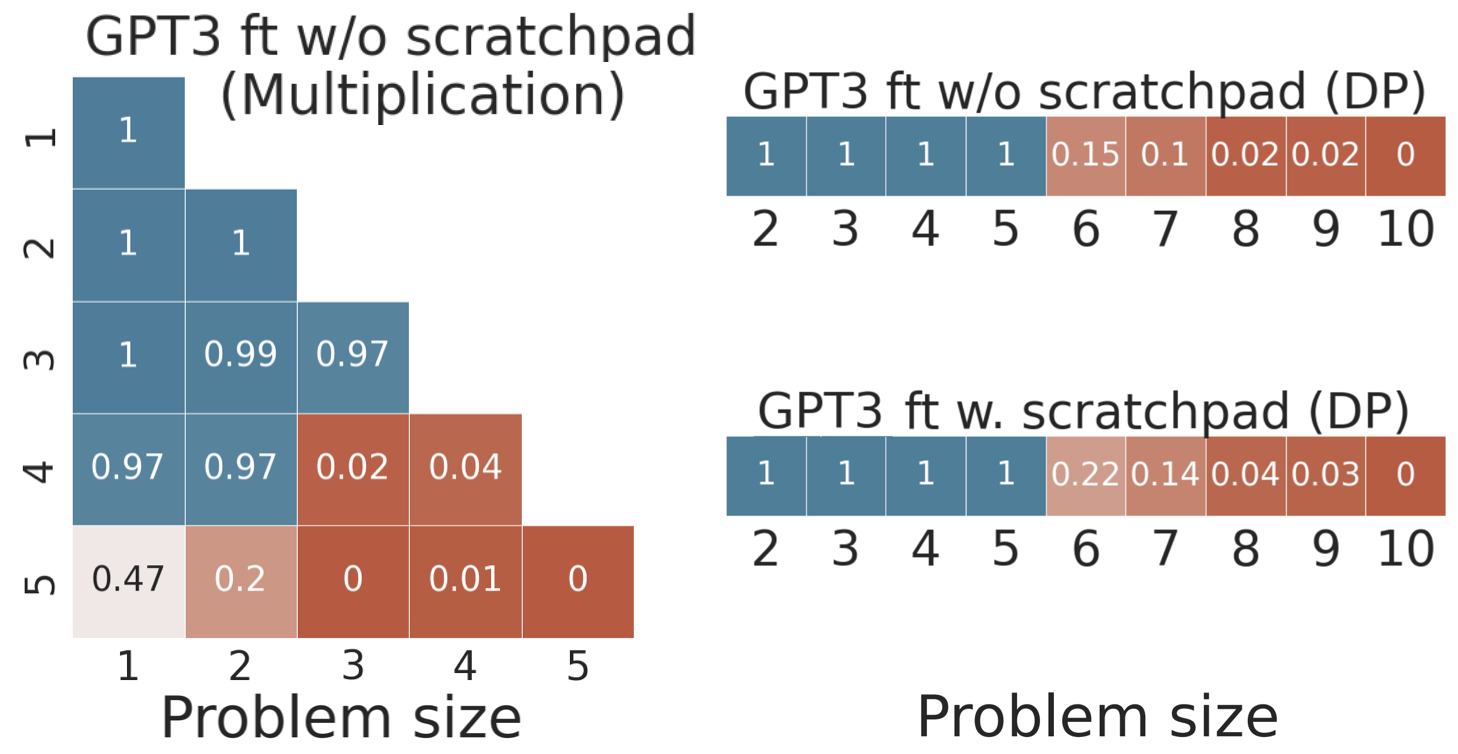}
\caption{\small GPT3 finetuned exhaustively on task-specific data up to a certain problem size. The \textbf{blue} region represents the in-distribution examples and the \textbf{red} region refers to OOD examples. The same trend is observed for the puzzle task (See \S\ref{app:results_all_tasks})}
\vspace{-10pt}
\label{fig:heatmap-zeroshot}   
   \end{minipage}
   \hfill 
   \vspace{3mm} 
      \begin{minipage}[t]{0.48\textwidth}
     \centering
\includegraphics[width=\linewidth]{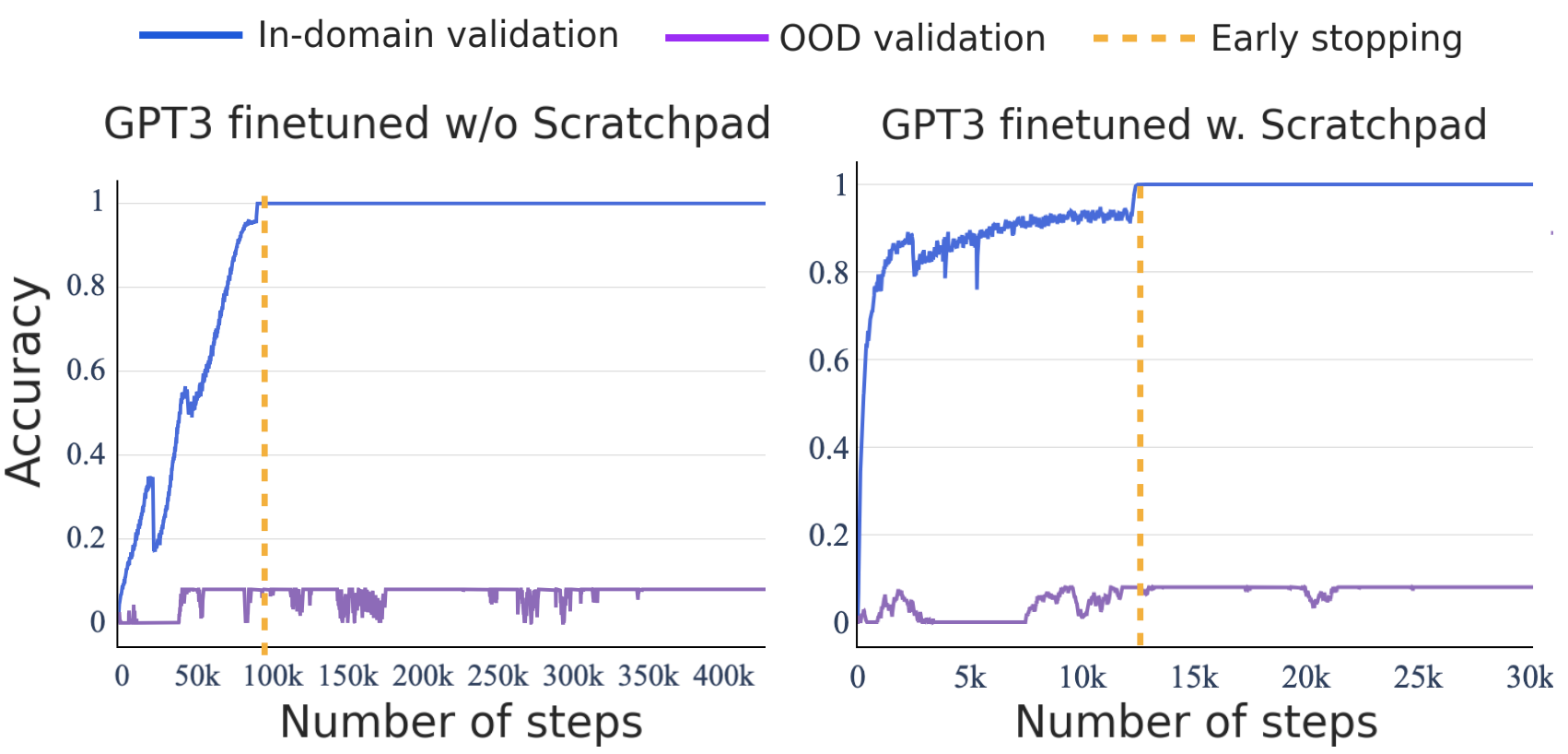}
\caption{\small Results of training beyond the overfitting point for the multiplication task with the goal of exploring whether OOD generalization capabilities (i.e., \textit{grokking}) arise.}
\label{fig:grokking}
     
   \end{minipage}
   \vspace{-2mm}
\end{figure}

\paragraph{Limits of transformers in zero- and few-shot settings}
 To investigate the inherent problem-solving capabilities of LLMs, we begin by analyzing models' zero-shot and few-shot performances on our compositional tasks.  %
As shown in Figure~\ref{fig:zero-shot}, task performances deteriorate significantly from near perfection to zero with increasing complexity when measured by either problem size (Figure~\ref{fig:zero-shot}(a))%
or average parallelism (Figure~\ref{fig:zero-shot}(b)).%
The trend remains the same for few-shot prompting (see \S\ref{app:results_all_tasks}).
These results indicate that pre-training is in fact not sufficient to teach models how to combine basic operations to solve compositional problems, especially as problems grow more complex.

\paragraph{Limits of transformers with question-answer training}

The limited performance of models may be attributed to the lack of task-specific data during pre-training. To fully bring out models' potentials in solving these tasks, we next exhaustively finetune GPT3 with question-answer pairs.
In multiplication and DP, we finetune models with all enumerations of questions up to the maximum problem size\footnote{We consider all $k_1$-by-$k_2$ digit multiplications with $1\leq k_1, k_2 \leq 4$ and $k_1 \cdot k_2 \leq 9$; 
and all DP problems up to $5$ elements. We selected sizes based on budget constraints for GPT3 finetuning, see \S\ref{app:gpt3_finetuning} for cost details.}
within reasonable training budget, leaving out 10\% for validation and 10\% for testing. In puzzles, we train on a subset of all instances up to $(K,M)\leq(4,4)$ due to combinatorial explosion.
We separately finetune GPT3 models on $\sim$1.8M multiplication pairs, $\sim$142K DP pairs, and $\sim$41K puzzle pairs (see details in \S\ref{app:gpt3_finetuning}). Additionally, to examine problems of different complexity, we consider different training splits based on the depth and width of computation graphs.

Figure~\ref{fig:heatmap-zeroshot} and Figure~\ref{raw-finetuning-mult-dp} show high accuracy for examples with splits seen during training, \ie \textit{in-domain}. However, the performance sharply declines when evaluating unseen splits during training, \ie \textit{out-of-domain (OOD)}. Similar trends hold in all tasks (see \S~\ref{app:gpt3_finetuning}), suggesting that systematic problem-solving capabilities do not emerge via exhaustive training on task-specific data.

\paragraph{Limits of transformers with explicit scratchpad training}
Next, we test whether we can explicitly teach models the required computational operations via \textit{scratchpads}. To do so, 
we finetune GPT3 with question-scratchpad pairs for all tasks. We consider the same distribution splits as before. 
The results, presented in Figure~\ref{fig:scratchpad-mult-dp}, show that once again GPT3 achieves near-perfect performance on in-distribution, but fails entirely in generalizing to OOD cases---in particular, wider or deeper computation graphs. 
These results indicate that even when training directly with guidance on the computation steps, models still fail to learn component operations in a generalizable manner. This observation holds for all tasks (See details in \S~\ref{app:gpt3_finetuning_Scra}).  Similarly, prompting transformers with question-scratchpad pairs enhances the performance compared to the zero-shot setting (refer to \S~\ref{app:gpt3_prompting_Scra}). However, this performance boost diminishes to zero as complexity increases.
These findings suggest that the autoregressive characteristic of transformers, which forces them to tackle problems sequentially, presents a fundamental challenge that cannot be resolved by instructing the model to generate a step-by-step solution.  Instead, models depend on a greedy process of producing the next word to make predictions without a rigorous global understanding of the task.

\paragraph{Limits of transformers with grokking}
We explore whether extended training beyond overfitting leads to improved generalization abilities, a phenomenon known as grokking \cite{power2022grokking, murty-etal-2023-grokking}. Due to budget constraints, we only experiment on the multiplication task. %
Following \cite{murty-etal-2023-grokking}, we fine-tune GPT3 with question-answer pairs for 420K steps and separately finetune GPT3 with question-scratchpad pairs for 30K steps. Both models' training far exceeded the point at which in-domain accuracy plateaus\footnote{The training duration for question-answer pairs is equivalent to 60 epochs and costs 50,000 USD. Training on question-scratchpad pairs was conducted for 40 epochs and costs 40,000 USD.}. Figure~\ref{fig:grokking} shows no improvement in generalization for OOD cases beyond the overfitting point, even after extensive training periods. 
We hypothesize that the absence of grokking may be due to the level of difficulty of the task. We speculate that increased task difficulty significantly impedes learning a well-structured representation, which, according to \cite{liu2022towards}, aligns with achieving grokking.
Even if grokking were to emerge through more prolonged training, such an approach would prove inefficient and unscalable%
. Future work is required to accurately explain when and how grokking occurs.

\subsection{Breaking Down Successes and Failures of Transformers}
\label{error-analysis}

\subsubsection{Information Gain Explains Where Transformers Partially Excel}
\label{sec:experiments_info_gain}
At times 
transformers predict partially correct answers even when the overall response is incorrect. We speculate that this may be due to particularities in the task distribution that allow for guessing partial answers without performing the full multi-step reasoning that the task requires. %

Using relative information gain (defined in \S\ref{theory_inf_gain}), we can predict surface patterns that a model is likely to learn and contrast them empirically. %
For multiplication, relative information gain shows that the first digit (two digits) of the output highly correlates with the first digit (two digits) of each input number (see \S\ref{appendix:inf_gain_multiplication}). Hence, this spurious pattern is likely to be learned by a model. %
Similarly, the prediction of the last digit (or two digits) of the output is observed to solely rely on the last digit (or two digits) of each input number. This pattern holds true due to the principles of modulo arithmetic, which ensures the validity of this relationship in all cases.
Empirically, we verify that models indeed learn the patterns we predicted and other patterns as well (e.g., order of magnitude of the answer, number of trailing zeros for multiplication) in all the settings with and without scratchpad. See details for multiplication, plus dynamic programming task analysis in %
\S\ref{appendix:surface_patterns}.

\begin{figure}
     \centering
     \begin{subfigure}[b]{0.485\linewidth}
         \centering
         \includegraphics[trim=0cm 0cm 0cm 0.0cm,clip,width=\linewidth]{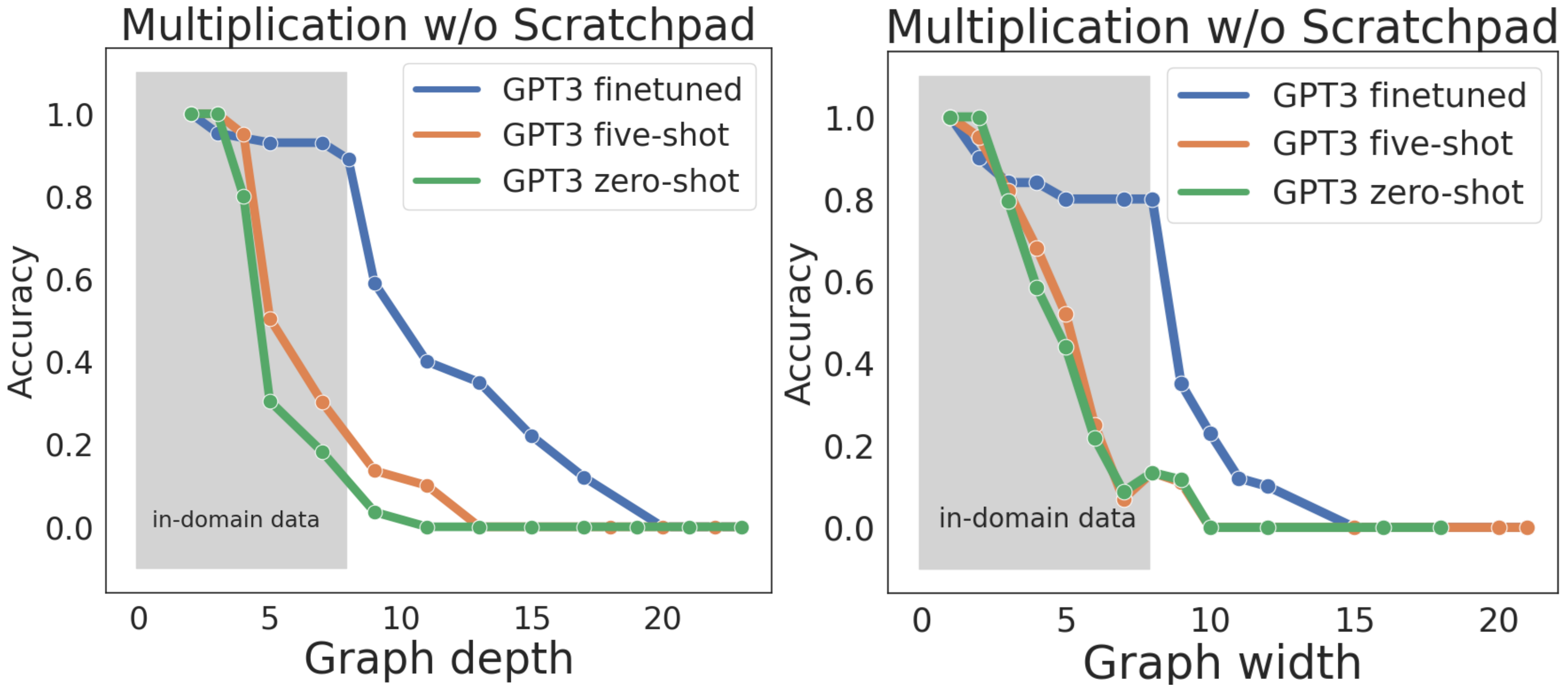}
         \caption{\footnotesize Results on \textbf{question-answer} pairs.}\label{raw-finetuning-mult-dp}
         \label{fig:y equals x}
     \end{subfigure}
     \hfill
     \begin{subfigure}[b]{0.485\linewidth}
         \centering
         \includegraphics[trim=0cm 0cm 0cm 0.0cm,clip,width=\linewidth]{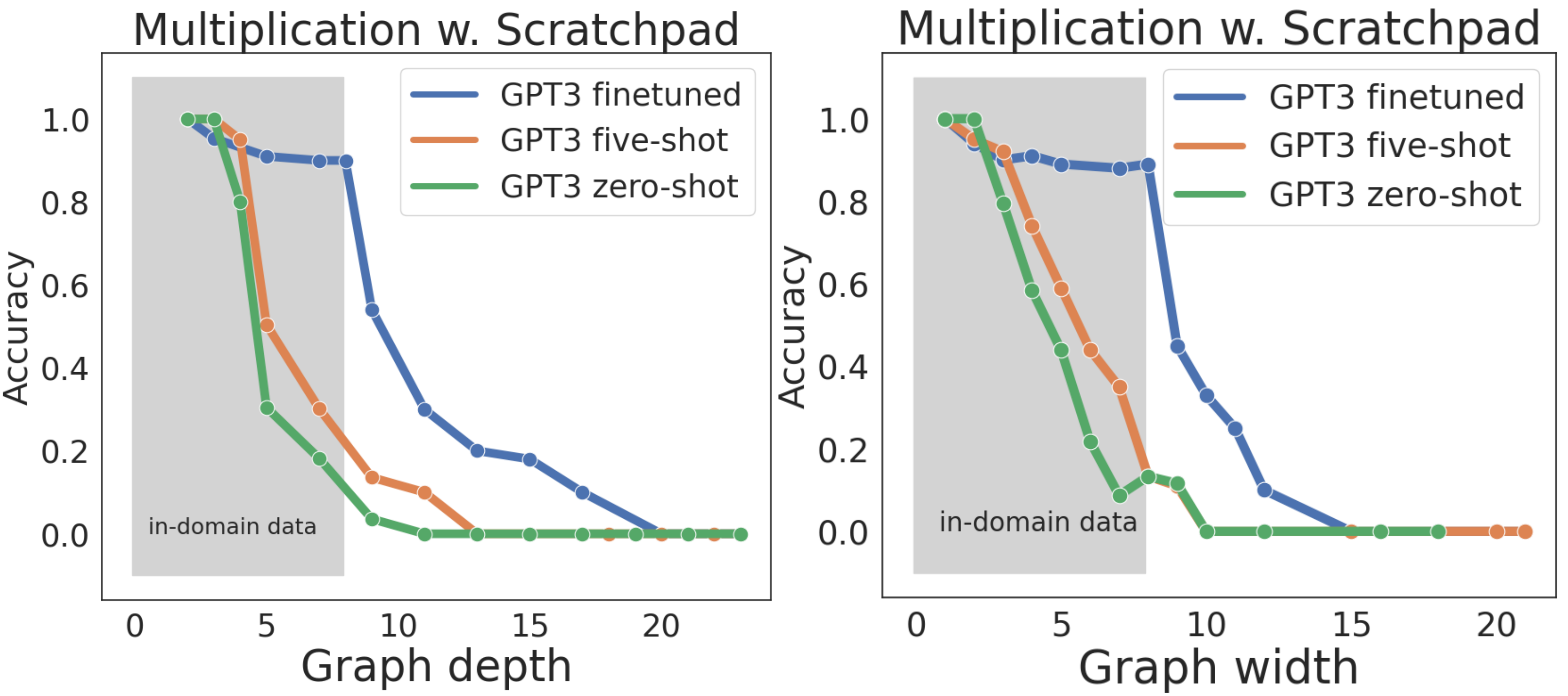}
         \caption{Results on \textbf{question-scratchpad} pairs.}\label{fig:scratchpad-mult-dp}
         \label{fig:three sin x}
     \end{subfigure}
     \hfill
         \caption{\small \textbf{GPT3 finetuning and prompting accuracy} on different data splits. Although the in-distribution performance is almost perfect, GPT3 exhibits poor generalization with increasing graph depth and width. Refer to \S\ref{app:gpt3_finetuning} and \S\ref{app:gpt3_finetuning_Scra} for results on the puzzle and DP tasks. }
        \label{fig:three graphs}
        \vspace{-15pt}
\end{figure}

These experiments suggest that if an output element heavily relies on a single or a small set of input features, transformers are likely to recognize such correlation during training and directly map these input features to predict the output element in testing, without going through the rigorous multi-hop reasoning and giving a false illusion of performing compositional reasoning.%

\subsubsection{Transformers Reduce Multi-Step Compositional Reasoning into Linearized Subgraph Matching}

\label{pattern_matching}
We now explore whether models' correct predictions on unseen test data are due to learning the underlying algorithm or, instead, explainable by exposure to similar training examples.
We hypothesize that, beyond simple memorization, transformers largely rely on pattern matching for solving these tasks. To test this, we calculate the average frequency with which partial computations needed to solve an instance appear in the training data, for both correctly and wrongly predicted examples.%

Given a model-generated computation graph $\widehat{G}_{A(\mathbf{x})}$ we analyze how often the full computation of each node $v\in\widehat{V}$ is seen in training. We define $v$'s \textit{full computation} as the subgraph induced by all ancestors of $v$ including $v$, denoted $FC_{\widehat{G}_{A(\mathbf{x})}}(v)$. We say that $FC_{\widehat{G}_{A(\mathbf{x})}}(v)$ is seen during training if $FC_{\widehat{G}_{A(\mathbf{x})}}(v) \cong FC_{G_{A(\mathbf{x'})}}(w)$ for some computation graph $G_{A(\mathbf{x'})}$ in training, and for some $w\in V$. %
We characterize complexity of a full computation subgraph by its depth, as defined in \S\ref{def_computation_graph}.

Figure~\ref{fig:compositional-rollout} shows that full computation subgraphs appear significantly more frequently in the training data for correctly predicted test examples than for incorrectly predicted ones, for both the multiplication and DP task (both frequencies tend to zero for large depths since we ensured a disjoint train/test split).
This high correlation suggests that pattern matching---and not general reasoning capabilities---may be the cause behind correct model outputs.
This type of learning could be largely effective when the compositional complexity of tasks is low but it becomes less efficient when tasks are increasingly complex. This may elucidate the observed performance gain in low-complexity and in-domain cases and the striking performance drop in OOD and highly complex cases.

\subsubsection{What Types of Errors do Transformers Make at Different Reasoning Depths?}
\label{error analysis}

For clearer understanding of where transformers fall short, we analyze the types of errors that transformers make for nodes at different layers in the computation graph. %
For every input $\mathbf{x}$, we compare the ground truth computation graph $G_{A(\mathbf{x})}$ with the (possibly incorrect) model-generated computation graph $\widehat{G}_{A(\mathbf{x})}$. We consider a node $v$ as having a \textit{correct value} if and only if $s(v)=\widehat{s}(v)$.\footnote{If a node $v$ does not appear in the ground truth graph $G$, we consider it to have an incorrect value.%
}. We consider a node $v$ to be derived from a \textit{correct computation} if given that $U=\{u_1,\ldots, u_k\}$ are the immediate predecessors of $v$ in $\widehat{G}_{A(\mathbf{x})}$ and that $\widehat{op}(v) = f$, we have that $f(u_1,\ldots, u_k) = \widehat{s}(v)$. %
Note that the notion of \text{correct computation} is independent of $G$,
and that a node $v$ derived from a \text{correct computation} may not have the \text{correct value} if an error occurred in some of its ancestors.

We classify each node $v\in\widehat{V}$ into one of four categories. Node $v$ is \textbf{fully correct} if $v$ and its ancestors have correct values and are derived from correct computations. If a node $v$ is not fully correct, its error can be of the following types: $v$ has a \textbf{local error} if its parent nodes have correct values but $v$ is derived from an incorrect computation (i.e., a one-hop reasoning error); $v$ has a \textbf{propagation error} if $v$ is derived from a correct computation but some of its parent nodes have incorrect values;  $v$ has a \textbf{restoration error} if it has a correct value but is derived from an incorrect computation.%

\begin{figure}[t]
   \begin{minipage}[t]{0.335\textwidth}
     \centering
\includegraphics[width=0.96\linewidth]{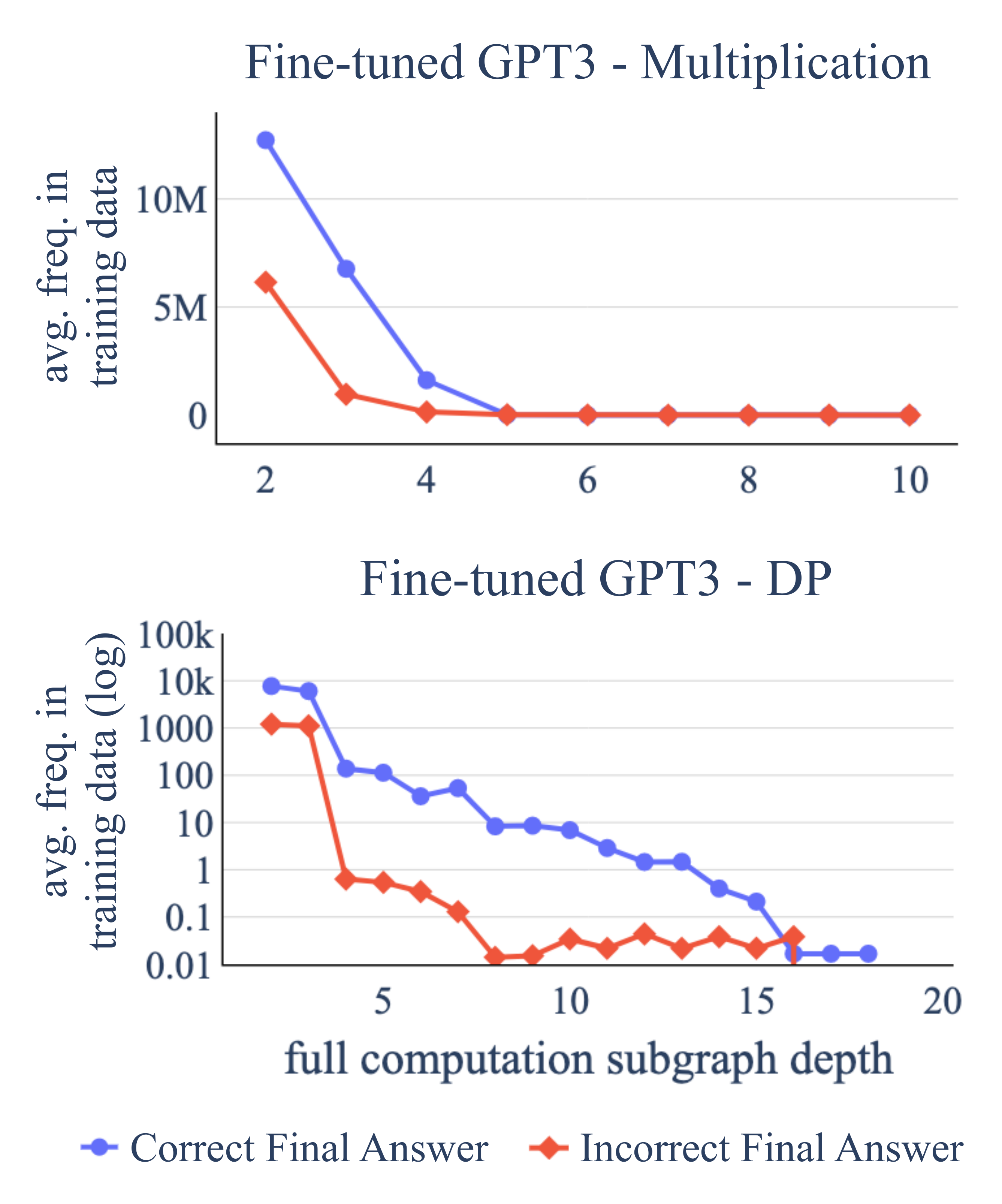}
\caption{\small Average frequency in which test examples' full computations subgraph appear in the training data w.r.t. the subgraph depth, grouped by final answer.}
\label{fig:compositional-rollout}
   \end{minipage}
   \hfill 
   \vspace{3mm} 
      \begin{minipage}[t]{0.63\textwidth}
     \centering
\includegraphics[width=\linewidth]{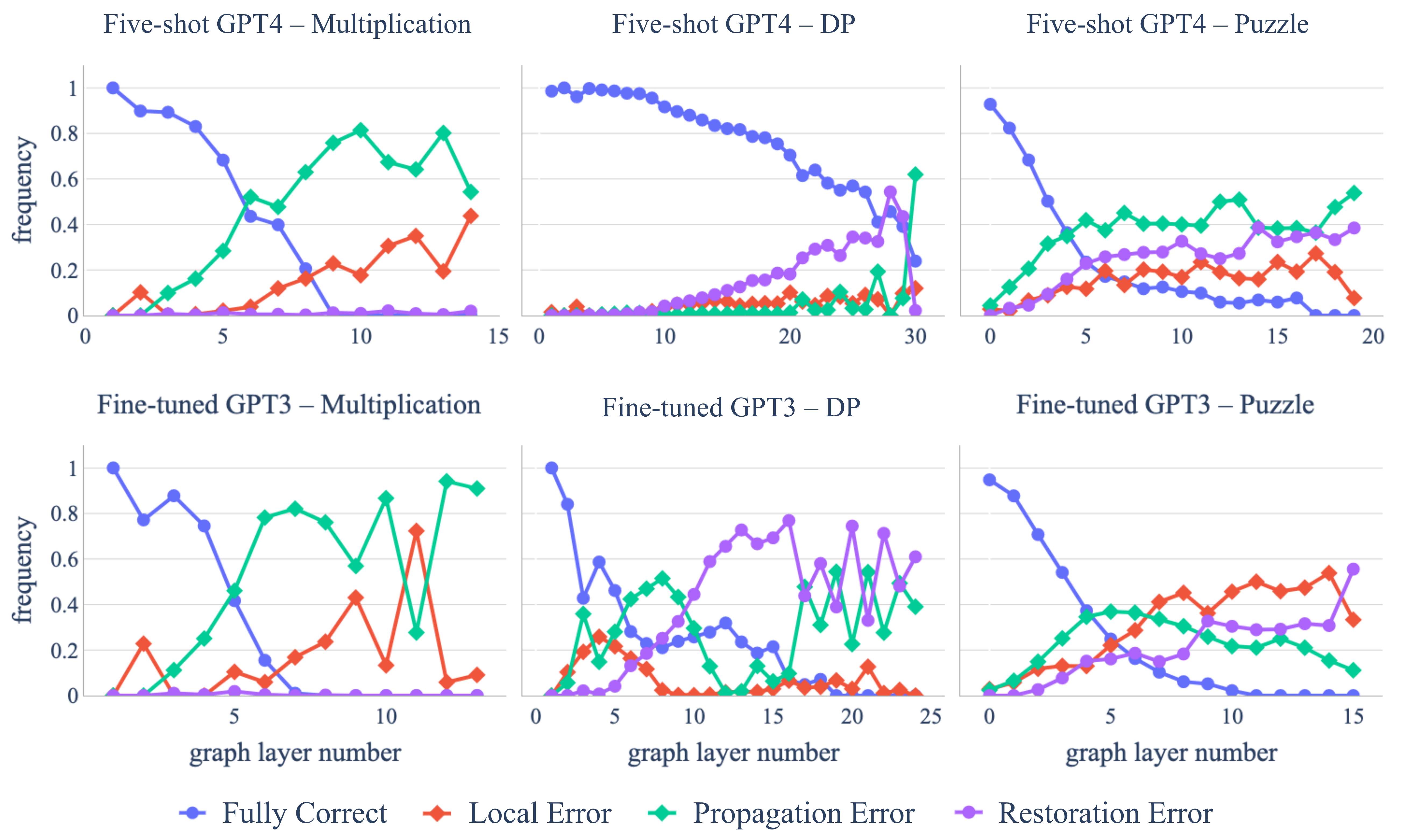}

\caption{\small Ratio of nodes in each of the four correct/error categories for each layer in computation graph. Results shown are for few-shot prompting and fine-tuning with scratchpad.
}
\label{fig:error_analysis_main}
   \end{minipage}
   \vspace{-7mm}
\end{figure}

Figure \ref{fig:error_analysis_main} shows results for few-shot GPT4 and fine-tuned GPT3 with scratchpad, with respect to graph layer number for each node. In all settings, the ratio of fully correct nodes is almost perfect but sharply decreases toward zero with increasing graph layers. Moreover, the ratio of propagation errors is usually higher than the ratio of local errors. Both phenomena suggest that models are able to correctly perform single-step reasoning, potentially due to memorizing such single-step operations during training, but fail to plan and compose several of these steps for an overall correct reasoning.

Both the DP and the puzzle tasks have a high ratio of restoration errors, suggesting memorization since correct outputs are produced despite incorrect computations. There are signs of memorization even when restoration errors are near zero: 82.3\% of the final correct answers for 4-digit by 2-digit multiplications (a setting unseen during training) had at least one error in the computation graph, but still produced correct answers. These patterns are possibly due to high frequency of (input, output) multiplication pairs in the pretraining data, in contrast to intermediate reasoning steps.

\section{Error Propagations: The Theoretical Limits}
\label{sec:theory}
\vspace{-2mm}
Experiments (\S\ref{sec:experiments}) highlight the limitations of current transformers in handling complex, multi-step reasoning tasks. %
Concretely, we show that errors rapidly escalate as the problem size grows~(\S\ref{error analysis}). %
Here, we aim to provide theoretical insights into why autoregressive transformer LLMs can perform significantly worse in compositional tasks as the problem size increases, making explicit the different ways in which compounding stochastic errors affect final performance. 
We argue using stylized examples that transformers may be too limited to solve compositionally complex tasks. Formal statements and full proofs are provided in \S\ref{sec:theory_app}.%

Algorithms designed to solve compositional tasks typically involve multiple independent applications of a function and/or iterated applications of the same function. A transformer executing such an algorithm acts as an estimator of these functions. 
In this context, we examine the probability of such an estimator %
reaching the correct answer as the problem size increases. 
We first consider a scenario where a transformer estimates an algorithm requiring $n$ independent applications of a function:

\vspace{1mm}
\begin{prop}[informal]\label{lemmawidth} Let\,$f_n$\,involve\,the\,combination\,$h_n$\,of\,$n$\,independent\,applications of a function\,$g$.\,%
Let\,$\widehat{f}$,\,$\widehat{g}$,\,$\widehat{h}_n$\,be their estimators.\,Assume that\,$\widehat{h}_n$\,is a perfect estimator of $h_n$ and that $h_n$\,has low collision, with $c_n$ being an upper bound of $h_n$'s collision rate ($c_n<c \ \forall n$, with $c\ll 1$). %
If $\proba(g\!\neq\!\widehat{g})\!=\epsilon>\!0$ where $\widehat{g}$'s errors are independent, then $\proba(f_n\!\neq\!\widehat{f}_n)> 1 - c_n - (1-\epsilon)^n \cdot (1 - c_n)$. This implies that $\proba(f_n\!\neq\!\widehat{f}_n)$ decreases \textbf{exponentially} as $n$ increases, with $\displaystyle\liminf_{n\rightarrow+\infty} \proba(f_n\!\neq\!\widehat{f}_n) \geq 1-c$.
Moreover, if $c_n\leq \beta \alpha^n$ for some $\alpha\in(0,1), \beta >0$, $\proba(f_n\!\neq\!\widehat{f}_n)$ tends exponentially\,to\,$1$\,as\,$n$\,increases.

\end{prop}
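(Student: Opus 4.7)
The plan is to condition on the event that every one of the $n$ independent applications of $\widehat{g}$ agrees with $g$, and then bound the error probability case by case. Define $A_i=\{\widehat{g}(x_i)=g(x_i)\}$ for $i=1,\ldots,n$. By the independence hypothesis, $\proba\bigl(\bigcap_{i=1}^n A_i\bigr)=(1-\epsilon)^n$. On this intersection the tuple of arguments fed into $\widehat{h}_n$ coincides with the one fed into $h_n$, and since $\widehat{h}_n$ is a perfect estimator of $h_n$, we obtain $\widehat{f}_n=f_n$ almost surely.

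On the complement $\bigcup_i A_i^c$, at least one coordinate of the argument tuple of $\widehat{h}_n$ differs from the corresponding coordinate in the argument tuple of $h_n$, so the equality $\widehat{f}_n=f_n$ can only occur through a collision of $h_n$ on two distinct tuples. By hypothesis, $c_n$ upper-bounds $h_n$'s collision rate, so this conditional probability is at most $c_n$. Assembling the two cases via the law of total probability gives
\begin{equation*}
\proba(\widehat{f}_n=f_n)\;\leq\;(1-\epsilon)^n\;+\;c_n\bigl(1-(1-\epsilon)^n\bigr),
\end{equation*}
which rearranges to $\proba(\widehat{f}_n\neq f_n) \geq (1-c_n)\bigl(1-(1-\epsilon)^n\bigr) = 1-c_n-(1-\epsilon)^n(1-c_n)$, the claimed inequality.

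The two asymptotic consequences then follow immediately. Since $\epsilon>0$, the factor $(1-\epsilon)^n$ decays exponentially to $0$, so the lower bound tends to $1-c_n$; combining this with $c_n<c$ yields $\liminf_n \proba(\widehat{f}_n\neq f_n)\geq 1-c$, close to $1$ because $c\ll 1$. For the second asymptotic statement, I substitute $c_n\leq \beta\alpha^n$ into the upper bound on $\proba(\widehat{f}_n=f_n)$; setting $\rho=\max(1-\epsilon,\alpha)\in(0,1)$, this gives $\proba(\widehat{f}_n=f_n)\leq (1+\beta)\rho^n$, an exponential decay, and hence $\proba(\widehat{f}_n\neq f_n)\to 1$ exponentially fast.

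The principal obstacle is not algebraic but definitional: the statement invokes a ``collision rate'' whose formal meaning has to be pinned down as an upper bound on the conditional probability $\proba\bigl(h_n(y_1,\ldots,y_n)=h_n(y'_1,\ldots,y'_n)\bigm| (y_1,\ldots,y_n)\neq(y'_1,\ldots,y'_n)\bigr)$ under the joint law induced by the pair $(g,\widehat{g})$ on the arguments. Once this is formalized, together with a justification that this bound transfers cleanly from ``distinct tuples'' to the event ``at least one $A_i$ fails,'' the rest reduces to the two-line total-probability decomposition above, so I would devote most of the appendix write-up to spelling out the probability space and the transfer step before invoking the decomposition.
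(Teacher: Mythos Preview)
Your proposal is correct and matches the paper's approach essentially line for line: condition on whether all $n$ applications of $\widehat{g}$ agree with $g$, use independence to get $(1-\epsilon)^n$, bound the ``wrong tuple but correct output'' case by the collision rate $c_n$, and combine via total probability. Your concern about formalizing the collision rate is exactly what the paper addresses in its formal restatement, defining it as $\proba(h_n(X)=h_n(Y)\mid X\neq Y)<c_n$; and your exponential bound $(1+\beta)\rho^n$ for the second claim is a slightly cleaner packaging than the paper's direct limit computation, but both arrive at the same conclusion.
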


Prop.~\ref{lemmawidth}'s proof (\S\ref{derivations:lemmawidth}) shows the rate of convergence is exponential, thus concluding that transformers will rapidly fail with increasing $n$. Let's now analyze the iterated application function\,scenario.

\vspace{1mm}
\begin{prop}[informal]\label{lemmadepth}
    Let $f_n(\mathbf{x})\!=\!g^n(\mathbf{x})$ involve the repeated application of $g$. Assume that the probability of recovering from a mistake due to the randomness of applying the estimator on an incorrect input has probability at most $c$. If $\proba(g\!\neq\!\widehat{g})\!=\!\epsilon\!>\!0$, then $\proba(f_n\!\neq\!\widehat{f}_n)$ decreases \textbf{exponentially} with $n$. Precisely, $\proba(f_n\!\neq\!\widehat{f}_n)\geq 1 - (1-\epsilon-c)^{n-1}(1-\epsilon-c/(c+\epsilon))$, implying $\displaystyle\liminf_{n\rightarrow+\infty} \proba(f_n\!\neq\!\widehat{f}_n) \geq 1 - c/(c+\epsilon)$.
\end{prop}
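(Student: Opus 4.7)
The plan is to analyze the $n$-step success probability through a one-step recursion that can be solved in closed form. Let $p_n := \proba(\widehat{f}_n(\mathbf{x}) = f_n(\mathbf{x}))$ denote the probability that $n$ iterated applications of $\widehat{g}$ reach the same value as $n$ applications of $g$.

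First, I would derive the affine one-step inequality
\[
p_{n+1} \leq (1-\epsilon)\,p_n + c\,(1-p_n),
\]
by conditioning on whether the intermediate state after $n$ steps is correct. On the event that the intermediate state equals $g^n(\mathbf{x})$ (probability $p_n$), the next iterate is correct exactly when $\widehat{g}$ does not err, i.e.\ with probability $1-\epsilon$. On the complementary event (probability $1-p_n$), the recovery assumption bounds by $c$ the conditional probability that $\widehat{g}$ applied to an incorrect intermediate value still lands on $g^{n+1}(\mathbf{x})$. The base case is $p_1 = 1-\epsilon$.

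Next, I would identify the fixed point $p^\star := c/(c+\epsilon)$ of this affine recurrence and rewrite the inequality in centered form as $p_{n+1} - p^\star \leq (1-\epsilon-c)(p_n - p^\star)$. Iterating this contraction $n-1$ times yields
\[
p_n - p^\star \leq (1-\epsilon-c)^{n-1}\bigl(1-\epsilon-\tfrac{c}{c+\epsilon}\bigr),
\]
which rearranges to $\proba(f_n\neq \widehat{f}_n) = 1 - p_n \geq 1 - \tfrac{c}{c+\epsilon} - (1-\epsilon-c)^{n-1}\bigl(1-\epsilon - \tfrac{c}{c+\epsilon}\bigr)$. Under the hypotheses $\epsilon>0$ and $c+\epsilon<1$ we have $|1-\epsilon-c|<1$, so the geometric factor vanishes exponentially in $n$, yielding both the advertised exponential convergence rate and the limit inequality $\liminf_{n\to\infty} \proba(f_n\neq \widehat{f}_n) \geq 1 - c/(c+\epsilon)$.

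The principal obstacle is formalizing the recovery hypothesis: the bound $c$ must hold \emph{uniformly} over every possible incorrect intermediate state, since otherwise conditioning on the event ``the intermediate is wrong'' requires knowing the precise conditional law over which wrong value is reached. A secondary subtlety is an independence-of-errors hypothesis for $\widehat{g}$ across successive iterations, analogous to the one invoked in Proposition~\ref{lemmawidth}, which is what allows a single-step bound to be chained through the recursion without residual cross terms.
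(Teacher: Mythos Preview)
Your proposal is correct and follows essentially the same approach as the paper: derive the one-step affine recursion $s_n \leq (1-\epsilon-c)\,s_{n-1} + c$ via the law of total probability, solve it, and take the limit. The only difference is cosmetic: the paper proves the closed form by induction and then sums the resulting geometric series, whereas you center at the fixed point $p^\star = c/(c+\epsilon)$ and iterate the contraction directly; both yield the identical bound $p_n \leq (1-\epsilon-c)^{n-1}\bigl(1-\epsilon - \tfrac{c}{c+\epsilon}\bigr) + \tfrac{c}{c+\epsilon}$.
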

\vspace{-3mm}
The argument is as follows. Let $s_n := \proba(f_n = \widehat{f}_n)$, where $s_1 = 1 - \epsilon$ by definition. Derive $s_n \leq (1- \epsilon - c) \cdot s_{n-1} + c$ using law of total probability. Then, prove by induction a non-recursive upper bound for $s_n$ with limit $\frac{c}{c + \epsilon}$ when $n\rightarrow+\infty$. See formal statement and derivation in \S\ref{derivations:lemma_depth}. %

Prop.~\ref{lemmadepth}'s proof also shows an exponential rate of convergence. Note that if $c\!\ll\!\epsilon$ then $\displaystyle\liminf_{n\rightarrow+\infty} \proba(f_n\,\neq\,\widehat{f}_n) \approx 1$. It is reasonable to assume $c\!\ll\!\epsilon$ when $g$ has low collision, since $c$ represents the probability of the estimator $\widehat{g}(y)$ arriving at the correct output $g(x)$ by chance when given the wrong input $y \neq x$. More details in \S\ref{app:low_collision_in_high_depth_proof}. %

Moreover, repeated applications of a function often imply unbounded errors: if $g(x)$ can be expressed as an affine transformation $Fx\!+\!c$%
, then it may be viewed as a first-order vector autoregression, which are known to be unstable when %
$|\lambda| \geq 1$ for at least one $\lambda$ eigenvalue of $F$ 
\cite[Prop. 10.1]{hamilton1994time}.
While we make these arguments with affine maps, similar behaviors, possibly even more acute, could occur with nonlinear maps \cite{douc2014nonlinear}---but their study is beyond the scope of this paper. %

In Prop.~\ref{lemmadepth}'s current form, we implicitly assume that there is a single valid reasoning for each input since $g$ is a function. We can potentially generalize this assumption with a state-transition framing, where the probability of transitioning from a valid state to an invalid one is $\epsilon$, and the probability of recovering from an invalid state is at most $c$. See formal statement in \ref{appendix:lemmadepth_alternative}.

All tasks evaluated in the present work can be seen as instances of the results just proven. Prop.~\ref{lemmawidth} directly applies to multiplication, since $m$-by-$n$ digit multiplication can be seen as $n$ independent instances of $m$-by-$1$ digit multiplication (see Cor.~\ref{appendix:corollary_multiplication_width}). Prop.~\ref{lemmadepth} directly applies to the recursive function of the dynamic programming task, as well as to $m$-by-$1$ digit multiplication, and to the puzzle through its elimination function (%
details in \ref{app:corollary_depth_tasks}). %
They are also all low collision settings. 

Note that Prop~\ref{lemmawidth} and \ref{lemmadepth} apply to any high-performant estimator of reasoning tasks. We focus on out-of-the-box transformers to align with the scope of our experiments and with the goal of framing empirical results. In \S\ref{sec:discussion}, we discuss how these propositions may inform future research directions.

\vspace{-1mm}
\section{Discussion}\label{sec:discussion}
\vspace{-1mm}

\paragraph{Collapsed Compositionality and Robustness Implications}
Transformers today demonstrate undeniably powerful empirical results. Yet, our study suggests that they may have fundamental weaknesses in certain intellectual tasks that require true multi-step compositional operations such as multiplications and logic puzzles. Our careful study based on the computation graph and analyses demonstrates that transformers can often solve multi-step compositional problems by collapsing the depth of the compositional operations via analogical pattern matching.
More broadly, our findings suggest that the strong performance of transformers should be taken with a certain grain of salt: Despite initially appearing challenging, certain tasks may not possess the inherent compositionality they seem to have. This is due to the fact that desired solutions could be readily derived from input-output sequences present in the training data, allowing for shortcut pattern matching to produce acceptable solutions. However, such an approach can ultimately result in poor generalization as shown in our study. For example, fine-tuning GPT3 on our tasks both with and without explicit reasoning graphs shows that models' learning fails to generalize beyond levels of complexity seen\,in\,training.

\vspace{-1mm}
\paragraph{Theoretical Findings and their Empirical Implications}
The proofs presented in \S\ref{sec:theory} show that, under reasonable assumptions, the probability of incorrect predictions converges exponentially to $\approx 1$ for abstract compositional tasks. Importantly, these proofs apply to autoregressive LMs in general. Our insights indicate that the current configuration of transformers, with their reliance on a greedy process for predicting the next word, constrains their error recovery capability and impedes the development of a comprehensive global understanding of the task. 
Building on these findings, we suggest several empirical strategies for harnessing the potential of transformers. Firstly, transformers may be employed in ways that require chaining only a few compositional steps to reach a solution rather than lengthy reasoning steps (e.g., \citep{jiang2023draft}). %
Secondly, transformers may be best suited for compositional tasks where evaluation metrics can afford some leniency%
; for example, finding approximate solutions that do not require executing the whole graph, such as identifying the most significant digit in a multiplication. Finally, we suggest augmenting transformers with planning modules as well as using refinement methods, that can iteratively improve their generations \cite{welleck2023generating,madaan2023self}.

\vspace{-1mm}
\paragraph{Call for Broad Participation to Investigate Limitations}
Identification of limitations is an important step towards achieving greater robustness. 
Our study suggests fundamental limitations that impede transformers from fully mastering certain compositional operations. However, we acknowledge that due to our compute budget constraints as well as limited access to the largest language models such as GPT4, we are unable to push the empirical limits of transformers even further in terms of training data size and number of epochs. We invite the broader research community, particularly those with more extensive resources at their disposal, to investigate these possibilities\,further.

\vspace{-1mm}

\section{Related Work}

\vspace{-1mm}
\paragraph{Reasoning abilities in transformer LLMs} 
Recently, transformers~\cite{brown2020language, openai2023gpt4, team2022chatgpt, flan_t5, chowdhery2022palm, Rae2021ScalingLM, taylor2022galactica, thoppilan2022lamda} have demonstrated impressive reasoning abilities across a wide range of tasks, even outperforming humans in certain cases \cite{wei2022emergent, fu2022gptroadmap, choi2023chatgpt, zheng2023chatgpt}.
 This success has been largely attributed to the scaling effect, where larger models and training datasets result in improved performance \cite{kaplan2020scaling, henighan2020scaling, aghajanyan2023scaling}. 
 However, these models have also been shown to struggle across multiple domains \cite{DBLP:conf/akbc/HelweCS21}, including algorithmic reasoning \cite{velivckovic2021neural}, commonsense reasoning \cite{DBLP:journals/corr/abs-2302-06476, DBLP:journals/corr/abs-2303-17276}, theory of mind \cite{sap-etal-2022-neural}, planning \cite{llm_cannot_plan},  logical reasoning \cite{srivastava2022beyond}, and ethical reasoning \cite{jiang2021delphi}. 
These difficulties have motivated us to take a step back and thoroughly examine both the successes and failures of transformers from empirical and theoretical perspectives on compositional reasoning tasks.

\vspace{-1mm}
\paragraph{Challenges of transformers in compositional tasks}
Transformers perform fairly well in single-step reasoning tasks \cite{srivastava2022beyond}, but face challenges when it comes to effectively combining multiple steps to solve compositionally complex problems~\cite{DBLP:journals/corr/abs-2206-04301, nye2021work,saparov2023language, welleck2022naturalprover}. 
Recent research has focused on overcoming these limitations through various approaches.
First, fine-tuning transformers to directly generate the final answer while keeping the reasoning implicit~
\cite{betz2021critical, clark2021transformers}. 
Second, encouraging transformers to generate reasoning steps explicitly within a single generation~\cite{nye2021work, wei2022chain, ling2017program, liang2023explainable}. For example, Nye et al. \cite{nye2021work} and Zhou et al. \cite{zhou2023teaching} used scratchpads to teach transformers how to perform algorithmic reasoning tasks such as addition by splitting the task into intermediate steps \cite{ling2017program, wei2022chain}. 
Further, leveraging LLMs to generate each reasoning step iteratively via a selection and inference mechanism~\cite{creswell2022selectioninference, creswell2022faithful, tafjord2021proofwriter}. 
Lastly,  choosing a training split that maximizes the number of observed patterns between the train and test data \cite{bogin-etal-2022-unobserved}, or diversifying in-prompt examples to cover the maximum of patterns \cite{levy-etal-2023-diverse}, ultimately enhancing generalization.
 The primary focus of these studies is to enhance model performance on compositional problems without striving for complete mastery. In contrast, our work explores the fundamental limits of vanilla transformers in achieving full mastery, striving for 100\% performance in both in-domain and OOD settings. Our findings show that reaching full mastery is inherently challenging, providing insights into the complexities involved.

\paragraph{Challenges of transformers in generalization}
Extensive research has been done to investigate the generalization capabilities of transformers~\cite{anilexploring, newman2020eos, dubois2020location, razeghi-etal-2022-impact, BhattamishraAG20, liu2023transformers}. This encompasses various facets of generalization, including easy-to-hard generalization \cite{schwarzschild2021can, bansal2022end}, length generalization \cite{NEURIPS2022_fb7451e4, presstrain, bueno2022induced, newman2020eos, BhattamishraAG20}, and generalization on symbolic mathematical integration~\cite{welleck2022symbolic}. Schwarzschild et al. \cite{schwarzschild2021can} and Bansal et al. \cite{bansal2022end} employ weight-tied neural networks to generalize from easy to hard examples. Liu et al., \cite{liu2022transformers} found that shallow transformers learn shortcuts during training, leading to poor OOD generalization. 
Razeghi et al.~\cite{razeghi-etal-2022-impact} revealed a positive correlation between the frequency of training terms and their test performance. Building upon this line of inquiry, we present a more rigorous examination of sub-graph matching between training and test instances for complex compositional tasks where we demonstrate how pattern matching can hinder generalization. We complement our empirical results with theoretical insights on transformers' limits.

\paragraph{Grokking} The phenomena of models' gaining generalization capabilities when training significantly beyond overfitting, known as \textit{grokking} was recently introduced in \cite{power2022grokking}. %
Subsequent works focus on characterizing when and why grokking arises: \cite{liu2022towards} show that perfect generalization in an arithmetic addition task happens when there is sufficient data to determine the appropriate structured representation, later extended to sparse parity in \cite{merrill2023tale} where a sparse subnetwork of neurons is shown responsible for generalization behavior. Recently, \cite{varma2023explaining} propose that grokking occurs when a task admits a generalizing and a memorizing solution, and the former is slower to learn.
In this present work, our aim is not to explain grokking but rather to observe its emergence. We do not observe grokking arising in the context of multiplication, and we leave it to future work to explore whether this may be due to task difficulty hindering the learning of well-structured representations.%

\paragraph{Transformers' theoretical expressiveness} Lin et al. \cite{lin2021limitations} study autoregressive models' limitations from a computational complexity theory perspective. Transformer-specific work has focused on quantifying the class of problems that (not necessarily autoregressive) transformers can express assuming perfect parameters \citep[inter alia]{merrill-sabharwal-2023-parallelism, merrill2022transformers, chiang2023tighter, merrill2023expresssive}. All tasks analyzed in our work belong to a class expressible by transformers, suggesting that known upper bound might not be tight. Importantly, Hahn \citep{hahn2020theoretical} shows that transformers cannot robustly model noncounter-free regular languages even when allowing infinite precision. In contrast, our focus is on error accumulation, which enables to investigate if reasoning tasks theoretically solvable by transformers are likely to be solved by them.

Additional literature and societal impact discussion can be found in \S\ref{sec:socieal_impact}.

\vspace{-1mm}
\section{Conclusions}
\label{sec:conclusions}
\vspace{-2mm}

On a broader scope, as transformers continue to gain widespread deployment with significant real-world impacts, it is ever more urgent to understand their successes and failures. Our study critically investigates transformers' limitations and emphasizes the need to develop models capable of robust generalization and systematic problem-solving. By examining the compositional capabilities of these models, we aspire to work towards more reliable AI systems that excel not only in tasks where abundant training examples are sufficient, but also in cases requiring precise compositional reasoning.%

\vspace{-1mm}
\section{Limitations}
\label{sec:limit}
\vspace{-2mm}

We focus on analyzing compositional reasoning capabilities through the lens of computation graphs. Although they are a useful way to systematically represent rigorous reasoning processes,
it is important to note that for the scratchpad approach, we are limited to only establishing a correlation between the model generation and its preceding context, as we cannot inspect the exact tokens model attends to when making the prediction.
This limitation arises from our lack of access to the activations of the studied models. Furthermore, we posit that alternative approaches to linearizing reasoning processes may yield different performances and provide opportunities for further exploration.

\section*{Acknowledgements}
We thank members of the Mosaic team at AI2 for valuable feedback on this project, as well as
 Agust\'in Santiago Guti\'errez and Kawin Ethayarajh for valuable discussions.  This research was supported by the NSF DMS-2134012, DARPA MCS program through NIWC Pacific (N66001-19-2-4031),  and the Allen Institute for AI.

\newpage

\newpage
\clearpage
\bibliographystyle{plain}
\bibliography{_references}

\newpage
\begin{appendices}

\startcontents[sections]
\printcontents[sections]{l}{1}{\setcounter{tocdepth}{2}}

\newpage
\section{Compositional Tasks}
\label{app:compositional_tasks}

\subsection{Multiplication}
\label{appendix:multi}
\paragraph{Data Construction}
We exhaustively generate  multiplication problems as question-answer pairs (\eg Q: ``What is 4 times 32?'' A: ``128''). We focus on multiplications of two numbers $x= (x_1, x_2, \dots, x_k)$ and $y= (y_1, y_2, \dots, y_k)$ where each number can have up to $k$ digits,
amounting to $9 \times 10^{(k-1)} $ combinations per each number. We set $k$ to 5 in our experiments. Figure~\ref{fig:no_scratchpad_mult} showcases an example prompt for performing few-shot learning without the inclusion of a scratchpad, while  Figure~\ref{fig:scratchpad} demonstrates an example prompt using a scratchpad. Throughout our experimentation, we explored various versions of the scratchpad, ranging from verbose and detailed to more concise alternatives. Among these variations, the scratchpad version depicted in Figure~\ref{fig:scratchpad} ultimately produced the most favorable outcomes. Listing~\ref{code:multiplication_task} shows the Python code for solving the task. 

 \begin{figure}[h]
    \centering
        \begin{minted}[fontsize=\footnotesize, frame=single,linenos=false,breaklines,breaksymbol=,escapeinside=||,bgcolor=Box3Color]{text}
To multiply two numbers, start by multiplying the rightmost digit of the multiplicand by each digit of the multiplier, writing down the products and carrying over any remainders.  Repeat this process for each digit of the multiplicand, and then add up all the partial products to obtain the final result.

Questions: what's 22 times 2? Answer 44.
\end{minted}
\caption{Example prompt for the multiplication task used for the few-shot setting.}
    \label{fig:no_scratchpad_mult}
\end{figure}

 \begin{figure}[h]
    \centering
        \begin{minted}[fontsize=\footnotesize, frame=single,linenos=false,breaklines,breaksymbol=,escapeinside=||,bgcolor=Box3Color]{text}
|\textbf{Question:}| What is 35 times 90?
    

|\textbf{Scratchpad:}| Let's perform the multiplication step by step:

Let's multiply 35 by the digit in the ones place of 90, which is 0.

1. Multiply 0 by the digit in the ones place of 35, which is 5. This gives 5 x 0 = 0. Write down the result 0.
2. Multiply 0 by the digit in the tens place of 35, which is 3. This gives 3 x 0 = 0. Write down the result 0.
3. The partial product for this step is A=0 which is the concatenation of the digits we found in each step.

Now, let's multiply 35 by the digit in the tens place of 90, which is 9.

4. Multiply 9 by the digit in the ones place of 35, which is 5. This gives 5 x 9 = 45. Write down the result 5 and carry over the 4 to the next step.
5. Multiply 9 by the digit in the tens place of 35, which is 3. Add the carryover from the previous step to account for this. This gives (3 x 9) + 4 = 31. Write down the result 31.
6. The partial product for this step is B=315 which is the concatenation of the digits we found in each step.

Now, let's sum the 2 partial products A and B, and take into account the position of each digit: A=0 (from multiplication by 0) and B=315 (from multiplication by 9 but shifted one place to the left, so it becomes 3150). The final answer is 0 x 1 + 315 x 10 = 0 + 3150 = 3150.


        \end{minted}
\caption{A sample scratchpad for the multiplication task.}
    \label{fig:scratchpad}
\end{figure}

\begin{lstlisting}[language=Python, caption=Example Python code for solving the multiplication task., label={code:multiplication_task}]
def multiply(x, y):
    summands = [0] * len(y)
    for i in range(len(y) - 1, -1, -1):
        digits = [0] * len(x)
        carry = 0
        for j in range(len(x) - 1, -1, -1):
            t = x[j] * y[i]
            t += carry
            carry = t // 10
            digits[j] = t %
        digits.insert(0, carry)
        summands[i] = sum(digits[-k] * (10 ** (k - 1)) for k in range(1, len(digits) + 1))
        
    product = sum(summands[-i] * (10 ** (i - 1)) for i in range(1, len(y) + 1))
    return product


\end{lstlisting}

\subsection{Einstein's Puzzle}
\label{appendix:puzzle_data_construction}

\begin{figure}[b!]
    \centering
    \includegraphics[width=\linewidth]{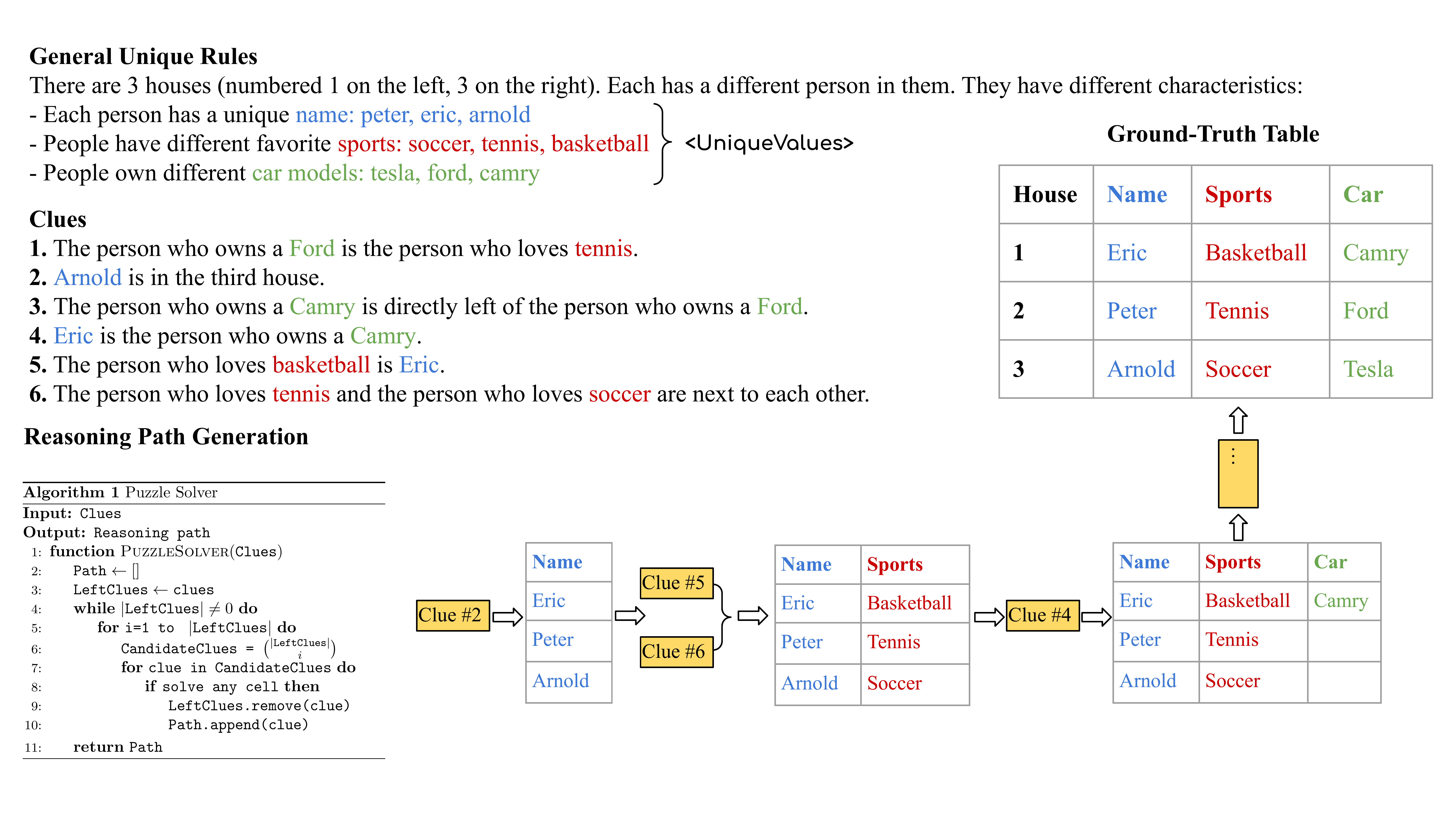}
    \caption{A sample of the puzzle task and the reasoning path to reach a solution.}
    \label{fig:puzzle_execution_graph}
\end{figure}

 \begin{figure}[h!]
    \centering
        \begin{minted}[fontsize=\footnotesize, frame=single,linenos=false,breaklines,breaksymbol=,escapeinside=||,bgcolor=Box3Color]{text}
This is a logic puzzle. There are 3 houses (numbered 1 on the left, 3 on the right). Each has a different person in them. They have different characteristics:
- Each person has a unique name: peter, eric, arnold
- People have different favorite sports: soccer, tennis, basketball
- People own different car models: tesla model 3, ford f150, toyota camry

1. The person who owns a Ford F-150 is the person who loves tennis.
2. Arnold is in the third house.
3. The person who owns a Toyota Camry is directly left of the person who owns a Ford F-150.
4. Eric is the person who owns a Toyota Camry.
5. The person who loves basketball is Eric.
6. The person who loves tennis and the person who loves soccer are next to each other.

Let's think step by step. Please first briefly talk about your reasoning and show your final solution by filling the blanks in the below table.

$ House: ___ $ Name: ___ $ Sports: ___ $ Car: ___ 
$ House: ___ $ Name: ___ $ Sports: ___ $ Car: ___ 
$ House: ___ $ Name: ___ $ Sports: ___ $ Car: ___ 

Reasoning: 
Step 1: First apply clue <Arnold is in the third house.> We know that The Name in house 3 is arnold.
Step 2: Then combine clues: <The person who loves tennis and the person who loves soccer are next to each other.> <The person who loves basketball is Eric.>  Unique Values Rules and the fixed table structure. We know that The Name in house 1 is eric. The FavoriteSport in house 1 is basketball. The Name in house 2 is peter.
Step 3: Then apply clue <Eric is the person who owns a Toyota Camry.> We know that The CarModel in house 1 is toyota camry.
Step 4: Then apply clue <The person who owns a Toyota Camry is directly left of the person who owns a Ford F-150.> and Unique Values We know that The CarModel in house 2 is ford f150. The CarModel in house 3 is tesla model 3.
Step 5: Then apply clue <The person who owns a Ford F-150 is the person who loves tennis.> and Unique Values We know that The FavoriteSport in house 2 is tennis. The FavoriteSport in house 3 is soccer.
The puzzle is solved.

Final solution:
$ House: 1 $ Name: Eric   $ Sports: Basketball $ Car: Camry 
$ House: 2 $ Name: Peter  $ Sports: Tennis     $ Car: Ford 
$ House: 3 $ Name: Arnold $ Sports: Soccer     $ Car: Tesla 

        \end{minted}
\caption{A sample scratchpad for the puzzle task.}
    \label{fig:puzzle_scratchpad_demo}
\end{figure}

\paragraph{Data Construction}

In our experiments, we initially establish a set of properties, such as Color, PhoneModel, Pet, and so forth, along with their corresponding values expressed in natural language templates (e.g., ``The house has a red color.'').
We then devise a fundamental and straightforward set of clue types: 1) `found\_at', e.g., ``Alice lives in House 2'', 2) `same\_house', e.g., ``The person who is a cat lover lives in the house that has a red color.'', 3) `direct\_left', e.g., ``The person who has a dog as a pet lives to the left of the person who lives in a red house.'', and 4) `besides', e.g., ``The person who has a dog as a pet and the person who has a red house live next to each other.'' 
In addition, we also set up harder clue types such as `not\_at', `left\_of' (not necessarily directly left of), `two\_house\_between', etc. which are only used in auxiliary experiments.

 The solution to the puzzle is a matrix of size $K \times M$, where $K$ represents the number of houses and $M$ the number of attributes.
During the puzzle generation, the $M$ properties are randomly selected from the candidate pool, followed by the random sampling of $K$ values for each property. The sampled values are then randomly permuted and assigned within the table to create the solution. It is important to note that we ensure one of the sampled properties is `Name' to enhance the readability and comprehensibility of the puzzles.
To construct the clues, we initially over-generate all valid clues based on the solution and subsequently remove redundant clues at random until we obtain a set with a unique solution, as previously sampled. This process ensures a coherent and engaging puzzle-solving experience. Refer to Figure~\ref{fig:puzzle_execution_graph} for an example.

\paragraph{Graph Construction Algorithm}
\label{appendix:puzzle_graph_construction}
To solve the complex compositional reasoning process for a logical grid puzzle, we use existing puzzle solvers~\cite{de2008z3} to generate the computation graph.
It follows the basic greedy principle of applying the minimum number of rules to solve any cell, i.e., if using only one rule to solve any given cell, then apply this rule. This algorithm iterates through all clues in the clue set until one or a set of clue combinations can solve any cell in the table. While it may not be the most efficient way to solve the puzzle, it provides models with explicit scratchpad verbalization through an intuitive computation graph. Refer to Figure~\ref{fig:puzzle_execution_graph} for the pseudo-code of the process, and Figure~\ref{fig:puzzle_scratchpad_demo} for a scratchpad example.

\newpage
\subsection{Dynamic Programming Problem}\label{appendix:dp_solution}

\subsubsection{Solution to this problem}\label{dp_solution_explained}
Let $a=[a_1,\ldots,a_n]$ be an input. Let $dp_i$ be the maximum sum of a subsequence that does not include adjacent elements, when considering only the elements of the input from the $i$-th position onwards. 

Trivially, $dp_{n} = \max(a_n, 0)$ since we only want to choose a number if it is non-negative. Moreover, $dp_{n-1} = \max(a_n, a_{n-1}, 0)$ since we cannot choose adjacent numbers.

For any given $dp_i$ with $i\leq n-2$, we can express it in terms of $dp_{i+1}$ and $dp_{i+2}$. Concretely, the maximum sum of a subsequence starting at position $i$ may or may not include the element in the $i$-th position, $a_i$. If the subsequence includes $a_i$, then the maximum sum is $a_i+dp_{i+2}$, since using $a_i$ blocks us from using the next element. If the subsequence does not include $a_i$, then its sum is $dp_{i+1}$. Moreover, the answer may never be less than zero, because otherwise we would select the empty sequence\footnote{We don't need to explicitly check for this since $dp_{n}\geq 0$. However, we include the condition to ease the scratchpad logic.}. In summary,
\begin{align*}
    dp_i = \max(dp_{i+1}, \ a_i + dp_{i+2}, \ 0)
\end{align*}
We now have a recursion with its base cases $dp_{n} = \max(a_n, 0)$ and $dp_{n-1} = \max(a_n, a_{n-1}, 0)$, and we can therefore compute all values in $O(n)$. It now only rests to reconstruct the lexicographically smallest subsequence that maximizes the desired sum, based solely on the computed $dp$ values.

Starting from $dp_1$ and iterating sequentially through $dp_{n-2}$, we choose an item if and only if $dp_i = a_i + dp_{i+2}$ (that is, the maximum sum comes from choosing the current element) and we have not chosen the previous element. This helps disambiguate cases where choosing or not choosing $a_i$ yields the same sum, but possibly only one of those will not incur in choosing adjacent numbers. Similarly, for positions $i=n-1$ and $i=n$ we choose the element if $dp_i=a_i$ (that is, choosing the element yields the maximum sum) and we have not chosen the immediately previous element. See an example Python solution in \ref{code:dp_task}.

 \begin{figure}[h]
    \centering
        \begin{minted}[fontsize=\footnotesize, frame=single,linenos=false,breaklines,breaksymbol=,escapeinside=||,bgcolor=Box3Color]{text}
Given a sequence of integers, find a subsequence with the highest sum, such that no two numbers in the subsequence are adjacent in the original sequence.

Output a list with "1" for chosen numbers and "2" for unchosen ones. If multiple solutions exist, select the lexicographically smallest. input = [3, 2, 1, 5, 2].
\end{minted}
\caption{Example prompt for the DP task, used for zero-shot and few-shot settings.}
    \label{fig:no_scratchpad_dp}
    \vspace{-3mm}
\end{figure}

\begin{lstlisting}[language=Python, caption=Example Python code for solving the DP task. We chose this implementation because the computation graph has always the same topology for any given input length., label={code:dp_task}]
def maximum_sum_nonadjacent_subsequence(arr):

    N = len(arr)
    dp = [0 for _ in range(N)]

    dp[N - 1] = max(arr[N - 1], 0)
    dp[N - 2] = max(max(arr[N - 1], arr[N - 2]), 0)

    for i in range(N - 3, -1, -1):
        dp[i] = max(max(dp[i + 1], arr[i] + dp[i + 2]), 0)

    # reconstruct the answer with a fixed-size graph
    result = []
    can_use_next_item = True

    for i in range(N - 2):
        if dp[i] == arr[i] + dp[i + 2] and can_use_next_item:
            result.append(1)
            can_use_next_item = False
        else:
            result.append(2)
            can_use_next_item = True

    if dp[N - 2] == arr[N - 2] and can_use_next_item:
        result.append(1)
        can_use_next_item = False
    else:
        result.append(2)
        can_use_next_item = True

    if dp[N - 1] == arr[N - 1] and can_use_next_item:
        result.append(1)
    else:
        result.append(2)

    return result
\end{lstlisting}

 \begin{figure}[h]
    \centering
        \begin{minted}[fontsize=\footnotesize, frame=single,linenos=false,breaklines,breaksymbol=,escapeinside=||,bgcolor=Box3Color]{text}
Question: Let's solve input = [3, 2, 1, 5, 2].


Scratchpad: dp[4] = max(input[4], 0) = max(2, 0) = 2
dp[3] = max(input[3], input[4], 0) = max(5, 2, 0) = 5
dp[2] = max(dp[3], input[2] + dp[4], 0) = max(5, 1 + 2, 0) = 5
dp[1] = max(dp[2], input[1] + dp[3], 0) = max(5, 2 + 5, 0) = 7
dp[0] = max(dp[1], input[0] + dp[2], 0) = max(7, 3 + 5, 0) = 8

Finally, we reconstruct the lexicographically smallest subsequence that fulfills the task objective by selecting numbers as follows. We store the result on a list named "output".

Let can_use_next_item = True.
Since dp[0] == input[0] + dp[2] (8 == 3 + 5) and can_use_next_item == True, we store output[0] = 1. We update can_use_next_item = False.
Since dp[1] != input[1] + dp[3] (7 != 2 + 5) or can_use_next_item == False, we store output[1] = 2. We update can_use_next_item = True.
Since dp[2] != input[2] + dp[4] (5 != 1 + 2) or can_use_next_item == False, we store output[2] = 2. We update can_use_next_item = True.
Since dp[3] == input[3] (5 == 5) and can_use_next_item == True, we store output[3] = 1. We update can_use_next_item = False.
Since dp[4] != input[4] (2 != 2) or can_use_next_item == False, we store output[4] = 2.

Reconstructing all together, output=[1, 2, 2, 1, 2].

\end{minted}
\caption{A sample scratchpad for the DP task used for fine-tuning with few-shot settings.}
    \label{fig:scratchpad_dp}
\end{figure}

\paragraph{Data Construction} We exhaustively generate data for this DP task. For question-answer setting, we include a thorough explanation of the task before asking to generate a solution (see Figure~\ref{fig:no_scratchpad_dp}). We use all lists up to 5 elements as training, and we consider only lists where elements are in the range $[-5,5]$ (giving a total of $11^n$ lists for an input list of size $n$). For out-of-domain evaluation, we use lists of sizes 6 to 10 inclusive. Example scratchpads and zero-shot prompts are shown in Figure \ref{fig:scratchpad_dp} and \ref{fig:no_scratchpad_dp} respectively. The scratchpad is generated automatically through templates. We considered five exemplars for the few-shot setup.

\section{Experimental Setups \& Empirical Results}
\label{appendix:exp-setup}

\subsection{Models}

For our experiments, we evaluate the performance of 6 LLMs: GPT4 (\texttt{gpt-4}) \cite{openai2023gpt4}, ChatGPT (\texttt{GPT3.5-turbo}) \cite{team2022chatgpt}, GPT3 (\texttt{text-davinci-003}) \cite{brown2020language}, FlanT5 \cite{flan_t5} and LLaMa \cite{DBLP:journals/corr/abs-2302-13971}. The evaluations were conducted from January 2023 to May 2023 using the OpenAI API. We perform fine-tuning on GPT3 (\texttt{text-davinci-003}) for the three tasks, observing faster convergence when training on question-scratchpad pairs rather than question-answer pairs. For question-answer pairs fine-tuning, we train separately the model for \{14, 12, 4\} epochs for multiplication, puzzle, and DP respectively, saving the best model based on the validation set. Regarding training on question-scratchpad pairs, we train the model for \{16, 8, 2\} epochs for multiplication, puzzle, and DP. The batch size is set to approximately 0.2\% of the number of examples in the training set. Generally, we observe that larger batch sizes tend to yield better results for larger datasets. For the learning rate multiplier, we experiment with values ranging from 0.02 to 0.2 to determine the optimal setting for achieving the best results and chose 0.2. During inference, we set nucleus sampling $p$ to 0.7 and temperature to 1.  For each task, we evaluate the performance of each model on 500 test examples.

\subsection{Limits of Transformers in Zero- and Few-shot Settings}
\label{app:results_all_tasks}
Figure~\ref{fig:zero-shot-multi}, Figure~\ref{fig:zero-shot-puzzle} and Figure~\ref{fig:few-shot-zer-shot-dp-noscra} show the zero-shot performance of GPT4, ChatGPT, LLaMA and FlanT5 on the three tasks. Overall, there is a notable decline in performance as the task complexity increases (measured by graph parallelism for multiplication and DP, and propagation steps for puzzles as shown in Figure\ref{fig:graph_paral}). 
The few-shot performance with question-answer pairs results in minimal improvement over the zero-shot setting as depicted in Figure~\ref{fig:few-shot-no-scra-mult} and Figure~\ref{fig:few-shot-zer-shot-dp-noscra} for the multiplication and DP tasks. In contrast, the few-shot setting did not lead to any improvement in the puzzle task.

\begin{figure}[h]
\centering
\includegraphics[width=0.8\linewidth]{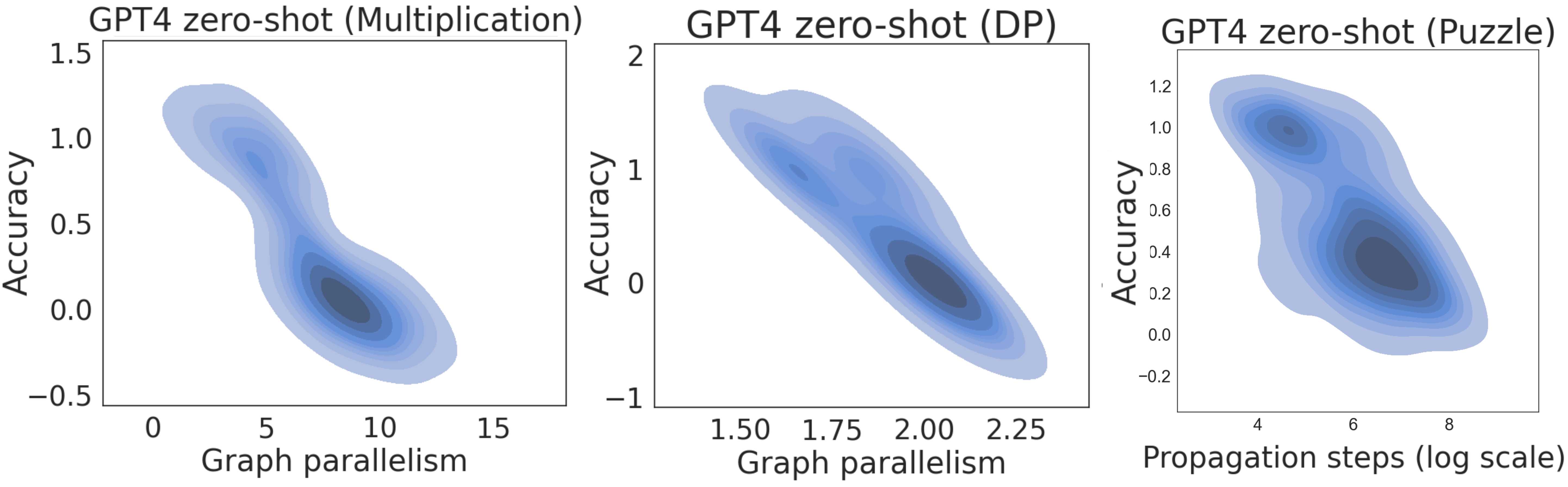}
\caption{ \small Graph parallelism vs accuracy. The accuracy decreases as the complexity increases. }
\label{fig:graph_paral}
\vspace{-15pt}
\end{figure}

\begin{figure}[h]
\centering
\includegraphics[width=\linewidth]{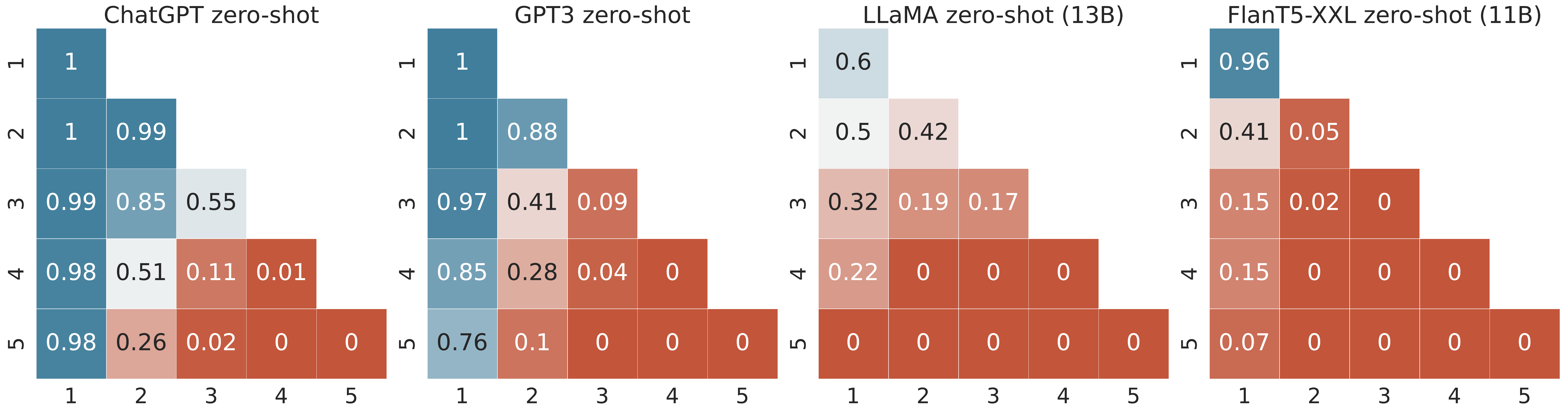}
\caption{ \small \textbf{Zero-shot accuracy}. Performance of ChatGPT, GPT3, LLaMA and FlanT5 on the \textbf{multiplication} task. }
\label{fig:zero-shot-multi}
\vspace{-15pt}
\end{figure}

\begin{figure}[h]
\centering
\includegraphics[width=\linewidth]{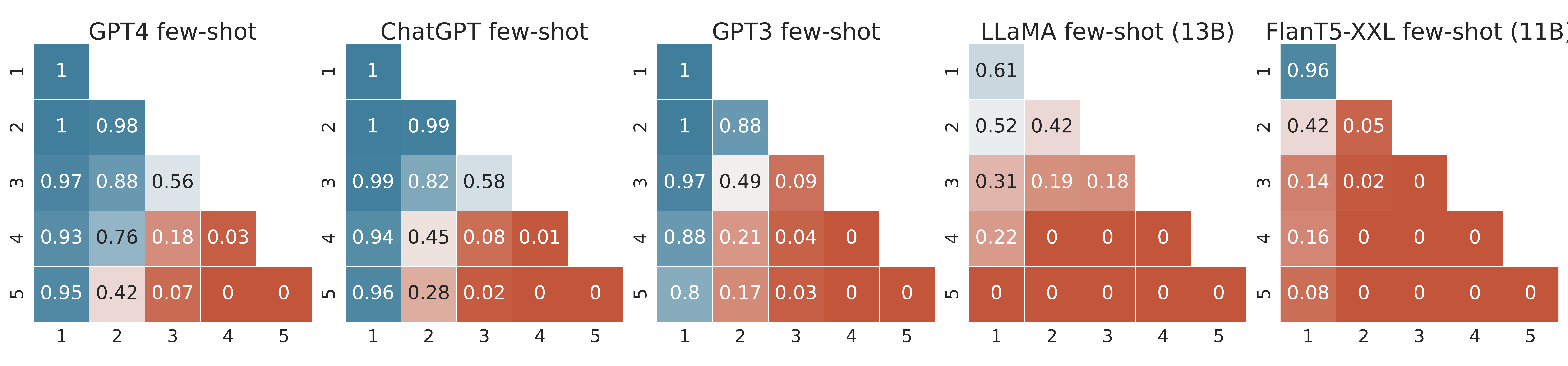}
\caption{ \small \textbf{Few-shot accuracy} with \textbf{question-answer} pairs. Performance of GPT4, ChatGPT, GPT3, LLaMA and FlanT5 on the \textbf{multiplication} task. }
\label{fig:few-shot-no-scra-mult}
\vspace{-15pt}
\end{figure}

\begin{figure}[h]
\centering
\includegraphics[width=\linewidth]{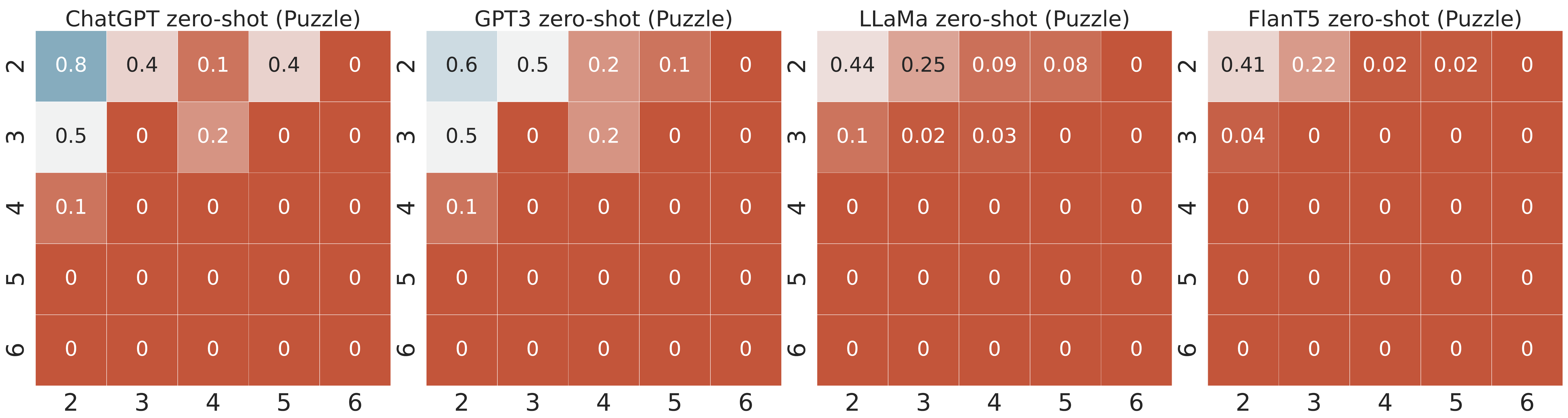}
\caption{ \small \textbf{Zero-shot accuracy}. Performance of ChatGPT, GPT3, LLaMA and FlanT5 on the \textbf{puzzle} task. Few-shot performance led to worse performance.}
\label{fig:zero-shot-puzzle}
\vspace{-10pt}
\end{figure}

\subsection{Limits of Transformers with question-answer Training}
\label{app:gpt3_finetuning}
Figure~\ref{fig:raw-finetune-dp-puzzle1} and Figure~\ref{fig:dp_split_no-scra-new} show the performance of GPT3 finetuned on question-answer pairs. The model was trained on various splits, considering the problem size, depth, and width of the computation graph. Specifically, for the multiplication task, the model was fine-tuned on a range of multiplication problems, spanning from 1-digit by 1-digit multiplication to 4-digit by 2-digit multiplication amounting to 1.8M pairs. As for the puzzle task,  the model was fine-tuned on puzzles of sizes ranging from 2x2 to 4x4  resulting in a total of 142k pairs. Additionally, for the DP task, the model was fine-tuned on problems with a sequence length of 5 resulting in 41K pairs. In an additional setup, we  divided those  datasets based on the depth and width of the computation graph for all the tasks and finetuned on different splits.
The results indicate a lack of generalization for out-of-domain (OOD) examples while showcasing near-perfect performance for in-domain examples. 
One hypothesis on why the model exhibit such a poor generaliztion is tokenization. So we train GPT2-XL from scratch on up to 4x4 (90M data points), we assign each digit to one token and each math symbol as well. However, the performance is still low and GPT2-XL fails to answer correctly 3x3 test examples.

 \begin{figure}[!t]
\centering
\includegraphics[width=\linewidth]{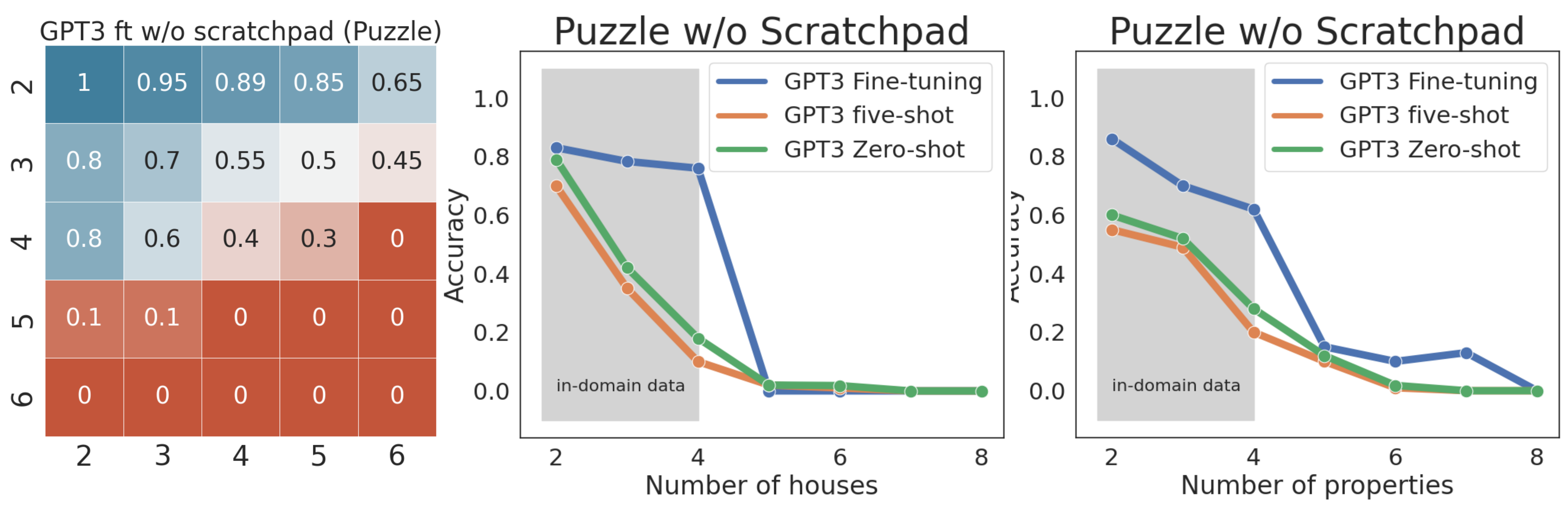}
\caption{GPT3 finetuned on the puzzle task using \textbf{question-answer} pairs. The training data consisted of puzzles of size 4x4, and the model was subsequently evaluated on larger puzzle sizes for OOD testing.}
\label{fig:raw-finetune-dp-puzzle1}
\end{figure}

 \begin{figure}[!t]
\centering
\includegraphics[width=0.7\linewidth]{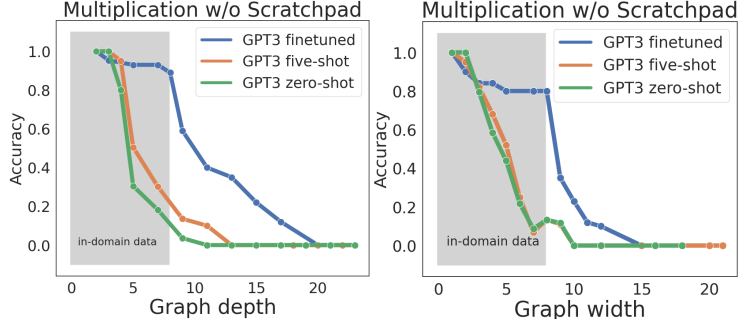}
\caption{GPT3 finetuned on the multiplication task using \textbf{question-answer} pairs}
\label{fig:raw-finetune-multi-no-scra}
\end{figure}

\begin{figure}[t]
\centering
\includegraphics[width=\linewidth]{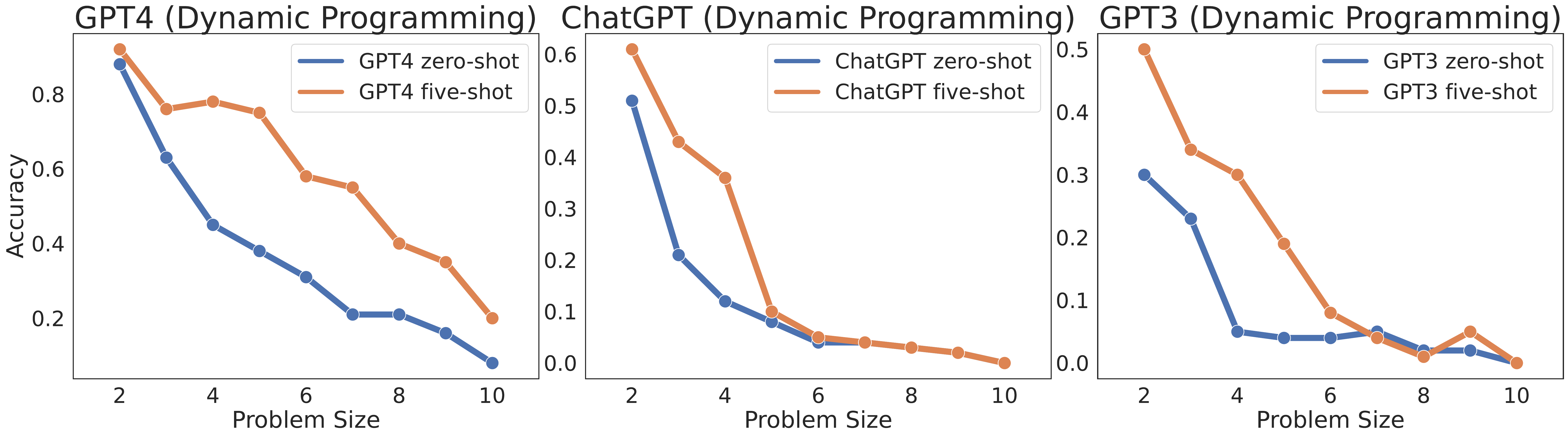}
\caption{ \small \textbf{Zero-shot} and \textbf{Few-shot accuracy} using \textbf{question-answer} pairs. Performance of GPT4, ChatGPT, and GPT3 on the \textbf{dynamic programming} task. LLaMA and FlanT5 results are near zero for all problem sizes.}
\label{fig:few-shot-zer-shot-dp-noscra}
\end{figure}

 \begin{figure}[t]
\centering
\includegraphics[width=\linewidth]{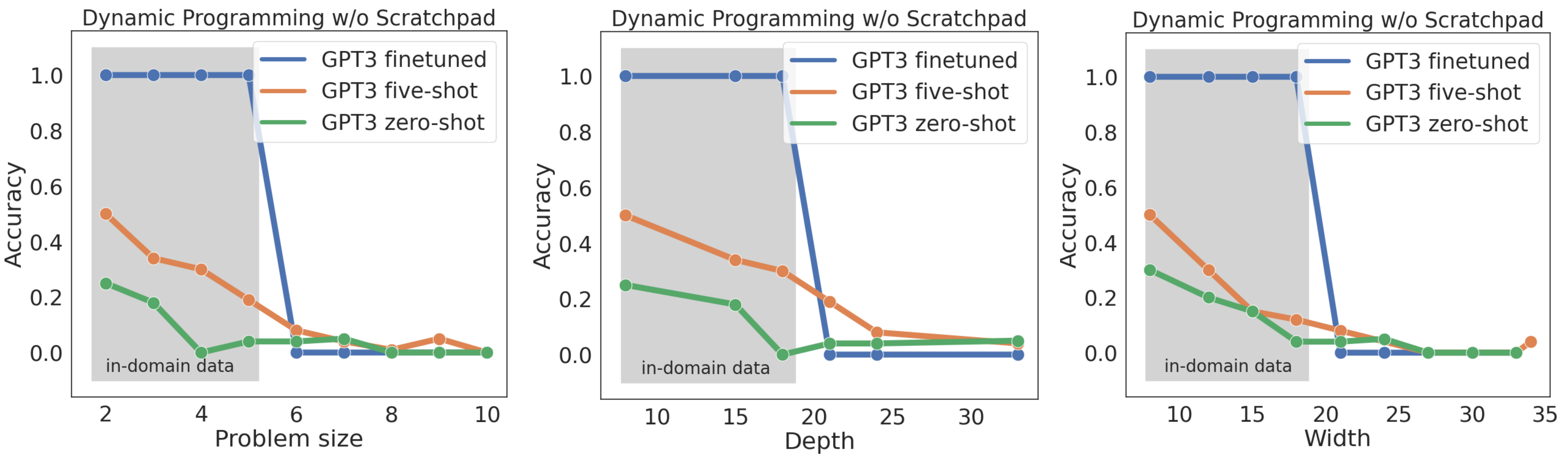}
\caption{GPT3 finetuned on  the \textbf{dynamic programming} task using \textbf{question-answer} pairs. We consider different data splits: problem size, depth, and width of the graph. Specifically, the model was trained with a problem size of 5, and the graph's depth and width were set to 18.}
\label{fig:dp_split_no-scra-new}
\end{figure}

 \begin{figure}[!t]
\centering
\includegraphics[width=\linewidth]{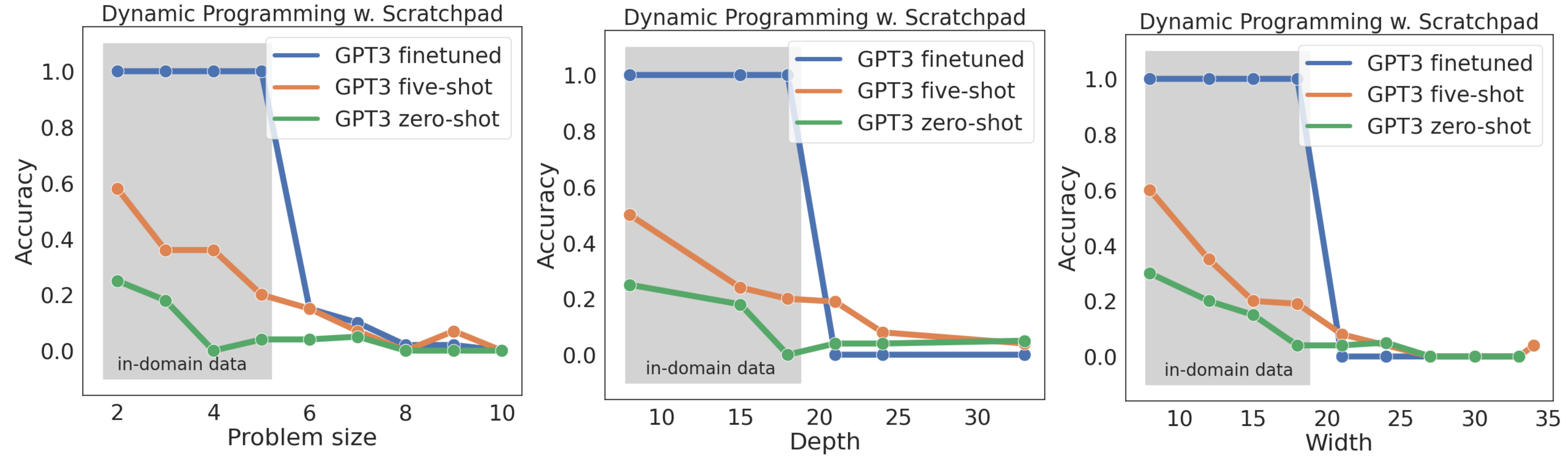}
\caption{GPT3 finetuned on  the \textbf{dynamic programming} task using \textbf{question-scratchpad} pairs. We consider different data splits: problem size, depth, and width of the graph. Specifically, the model was trained with a problem size of 5, and the graph's depth and width were set to 18.}
\label{fig:scra-finetune-dp}
\vspace{-10pt}
\end{figure}

\paragraph{GPT3 finetuning cost}
We will discuss here the approximate cost of fine-tuning GPT3 for the multiplication task. When fine-tuning with question-answer pairs, each example typically consists of around 20 tokens, and 250 tokens for question-scratchpad pairs. The cost for utilizing the \texttt{text-davinci-003} model amounts to \$0.02 (USD) per 1,000 tokens. With this particular setup, the total number of training examples required for multiplication up to 5 digits by 5 digits reaches an astonishing figure of approximately 9.1 billion examples. Should we choose to fine-tune GPT3 for 4 epochs on question-answer pairs, the cost would amount to \$12 million and \$700 million for question-scratchpad training. For a more comprehensive breakdown of the cost per problem size, please refer to Table~\ref{tab:cost_data}.

\begin{table*}[t]
\scriptsize
\centering
  \label{tab:docBetter}
\begin{tabular}{ l | l l l}
    \toprule
 \multirow{2}{*}{\textbf{Problem size}} & \multirow{2}{*}{\textbf{\# examples}} & \multicolumn{2}{c}{\textbf{   GPT3 Cost}}\\
 &  &  without scratchpad  & with scratchpad   \\
    \midrule
    1 x 1 &  {\small 81}  & {\small \$0.12} & {\small \$7.44 }\\
   2 x 1 &{\small 810}  & {\small \$1.28}  & {\small \$74.4}  \\
  2 x 2 & {\small 8100}  & {\small \$12.96} & {\small \$744 }  \\
 3 x 1  & {\small 8100} & {\small \$12.96} & {\small \$744 }\\
    3 x 2 & {\small 81000}  & {\small \$129.6} & {\small \$7440 } \\

     3 x 3 & {\small 810000}  & {\small \$1296} & {\small \$74,404 }   \\
     4 x 1  & {\small 81000}  & {\small \$129.6}  & {\small \$7440}   \\
       4 x 2  & {\small 810000}  & {\small \$1296}  & {\small \$74,404}   \\
         4 x 3  & {\small 8100000}  & {\small \$12,960}  & {\small \$744,040}  \\

      4 x 4  & {\small 81000000}  & {\small \$129,600}  & {\small \$7,440,400}   \\
    5 x 1  & {\small 810000}  & {\small \$1296}  & {\small \$74,404  } \\
    5 x 2  & {\small 8100000}  & {\small \$12,960}  & {\small \$744,040  } \\
     5 x 3  & {\small 81000000}  & {\small \$129,600}  & {\small \$7,440,400}   \\
    5 x 4  & {\small 810000000}  & {\small \$1,296,000}  & {\small \$70,440,400 }  \\
     5 x 5 & {\small 8100000000}  & {\small \$12,960,000} & {\small \$700,440,400}     \\
    \bottomrule
  \end{tabular}
  \caption{Finetuning cost of GPT3 model on the multiplication data.}
    \label{tab:cost_data}
\end{table*}

\newpage

\subsection{Limits of Transformers with Explicit Scratchpad Training}
\label{app:gpt3_finetuning_Scra}

\begin{wrapfigure}{r}{0.45\textwidth}
\includegraphics[width=\linewidth]{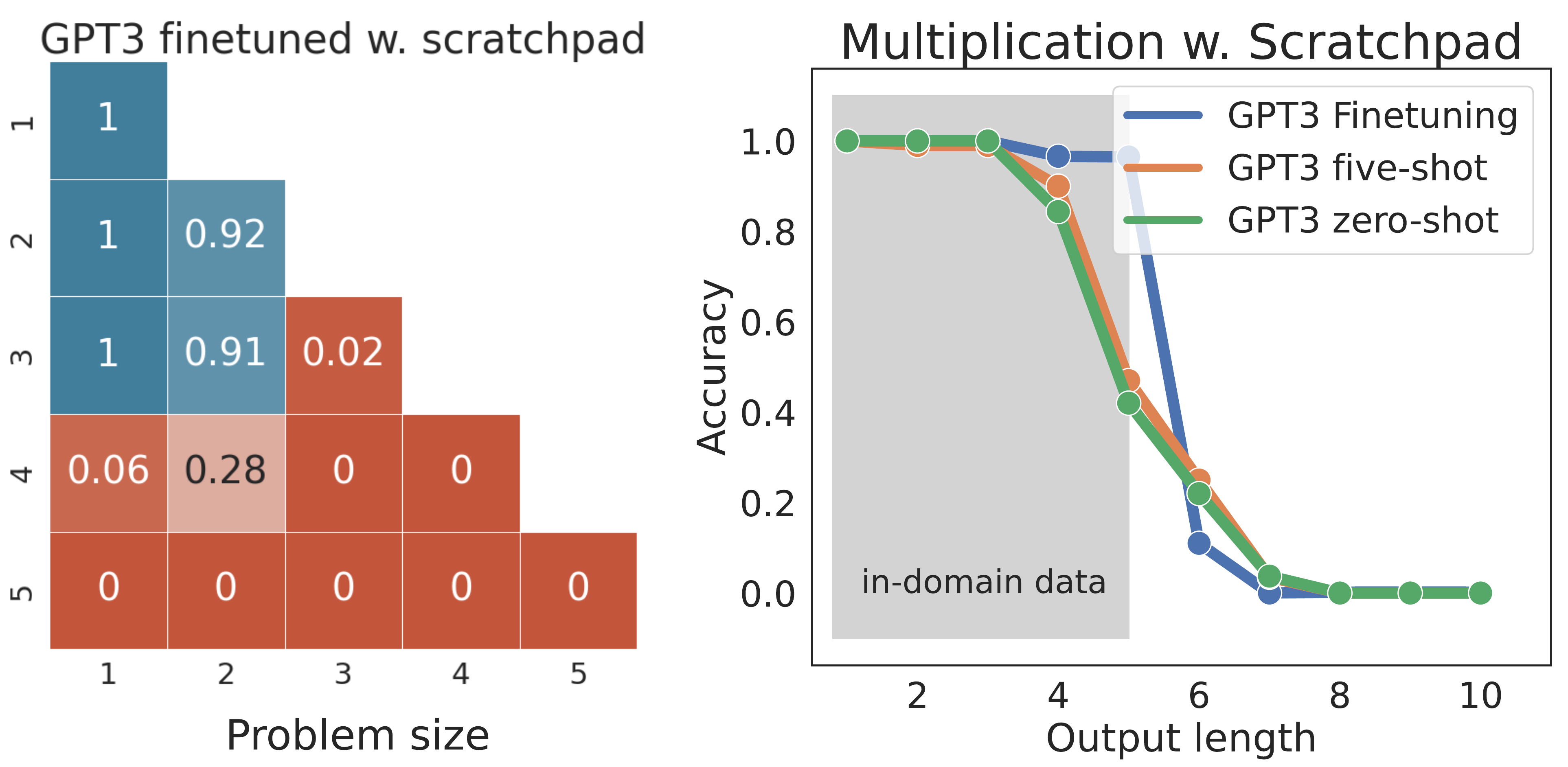}
\vspace{-20pt}
\caption{\small GPT3 finetuned exhaustively on task-specific data up to a certain problem size. In particular, we train on examples up to 3-digit by 2-digit multiplication (left) and on examples that have up to 5 digits in the output response (right). The \textbf{blue} region represents the in-distribution examples and the \textbf{red} region refers to OOD examples.}
\vspace{-10pt}
\label{fig:mult-scra-appen}
\end{wrapfigure}

Figure~\ref{fig:mult-scra-appen}, \ref{fig:scra-finetune-dp-puzzle}, \ref{fig:scra-finetune-dp} show the performance of GPT3 finetuned on different splits of the tasks using question-scratchpad pairs. Specifically, for the multiplication task, the model was fine-tuned on a range of multiplication problems, spanning from 1-digit by 1-digit multiplication to 3-digit by 2-digit multiplication.

As for the puzzle task,  the model was fine-tuned on puzzles of sizes ranging from 2x2 to 4x4. Additionally, for the DP task, the model was fine-tuned on problems with a sequence length of 5. 
Furthermore, different data splits were considered, including variations based on the number of hours, number of properties, depth and width of the graph, and the number of digits in the multiplication output. On all tasks, we can see that the model fails to generalize to OOD data while achieving perfect accuracy on in-domain data, indicating that it cannot learn the underlying computational rules.

 \begin{figure}[!t]
\centering
\includegraphics[width=\linewidth]{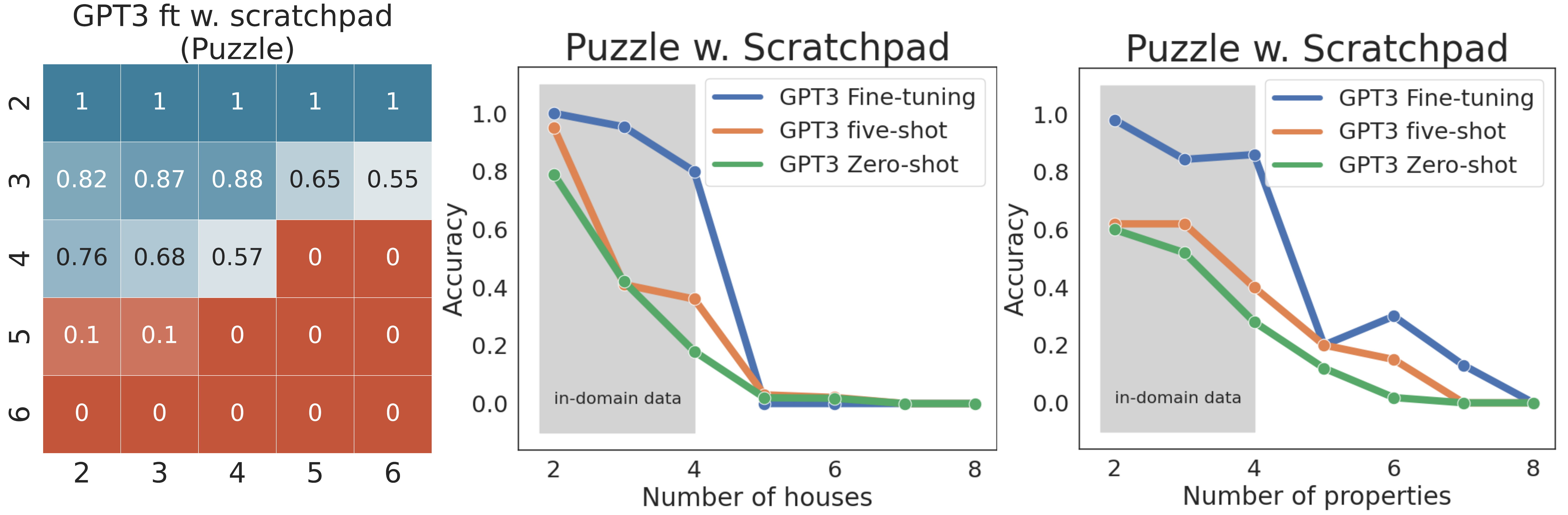}
\caption{GPT3 finetuned on the puzzle task using  \textbf{question-scratchpad} pairs. The training data consisted of puzzles of size 4x4, and the model was subsequently evaluated on larger puzzle sizes for OOD testing.}
\label{fig:scra-finetune-dp-puzzle}
\end{figure}

\subsection{Limits of Transformers with Explicit Scratchpad Prompting}

\label{app:gpt3_prompting_Scra}

Figure~\ref{fig:scra-few-shot-dp-puzzle} shows the results. GPT-4 exhibits an increase in few-shot accuracy in most problem sizes when using question-scratchpad pairs of few-shot examples across the three tasks. While its performance surpasses that of zero-shot and few-shot with question-answer pairs, it tends to decline as the complexity of the tasks increases. The same applies for the rest of the models.

 \begin{figure}[!t]
\centering
\includegraphics[width=\linewidth]{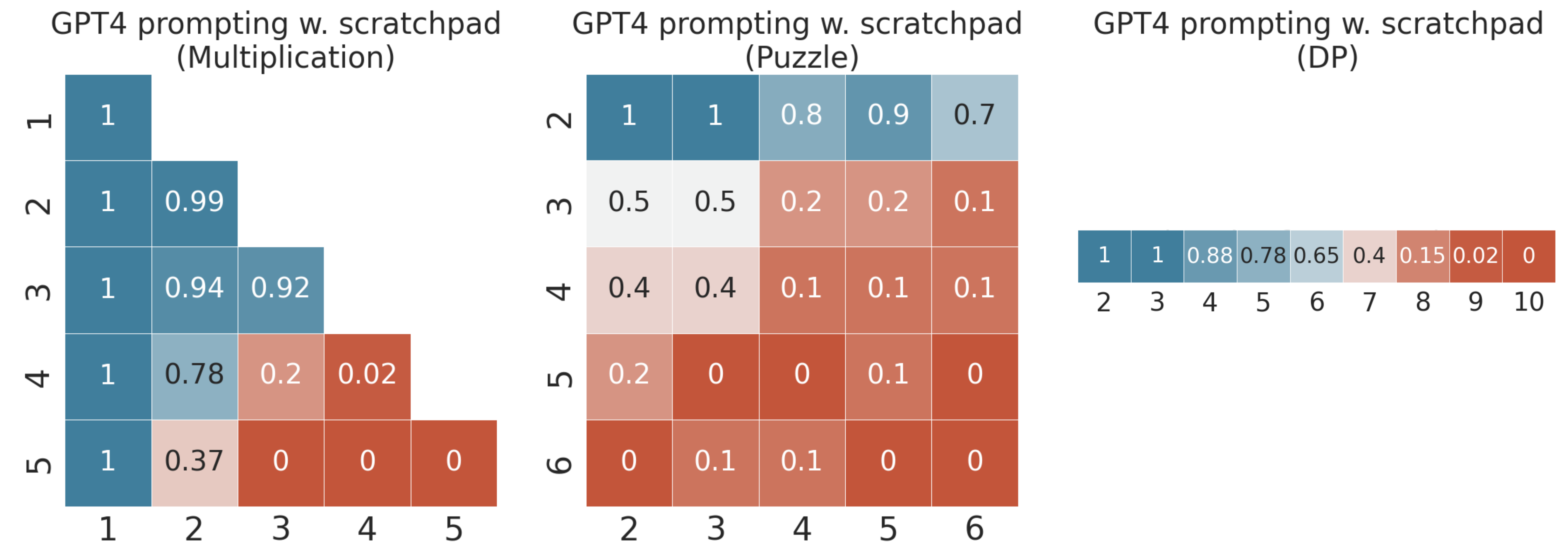}
\caption{GPT4 few-shot accuracy with question-scratchpad pairs on the 3 tasks. The performance improves for most problem sizes compared to zero-shot performance and few-shot using question-answer pairs but degrades to zero as the complexity increases.}
\label{fig:scra-few-shot-dp-puzzle}
\vspace{-15pt}
\end{figure}

\clearpage
\section{Surface Patterns}\label{appendix:surface_patterns}

\subsection{Relative Information Gain Predictions for Multiplication}
\label{appendix:inf_gain_multiplication}

\begin{table}[h]
\centering
\begin{tabular}{@{}cccccc@{}}
\toprule
                                                          &                                                           & \multicolumn{4}{c}{Relative Information Gain} \\ \midrule
\begin{tabular}[c]{@{}c@{}}Input \\ variable\end{tabular} & \begin{tabular}[c]{@{}c@{}}Output\\ variable\end{tabular} & 2x2       & 3x3       & 4x4       & 5x5       \\ \hline \midrule
$x_n$                                                & $z_{2n}$                                                 & 0.223    & 0.223    & 0.223    & 0.223    \\
$y_n$                                                 & $z_{2n}$                                                 & 0.223    & 0.223    & 0.223    & 0.223    \\ \midrule
$x_1$                                                      & $z_1$                                                      & 0.198    & 0.199    & 0.199    & 0.199    \\
$y_1$                                                      & $z_1$                                                      & 0.198    & 0.199    & 0.199    & 0.199    \\ \midrule
$x_n$ $y_n$                                                                   & $z_{2n}$                                                                    & 1.000     & 1.000     & 1.000     & 1.000     \\
$x_{n-1}$ $x_n$                                                                   & $z_{2n}$                                                                    & 0.223     & 0.223     & 0.223     & 0.223     \\
$y_{n-1}$ $y_n$                                                                   & $z_{2n}$                                                                    & 0.223     & 0.223     & 0.223     & 0.223     \\ \midrule
$x_n$ $y_n$                                                                   & $z_{2n-1}$                                                                    & 0.110     & 0.101     & 0.101     & 0.101     \\
$y_{n-1}$ $y_n$                                                                   & $z_{2n-1}$                                                                    & 0.032     & 0.036     & 0.036     & 0.036     \\
$x_{n-1}$ $x_n$                                                                   & $z_{2n-1}$                                                                    & 0.032     & 0.036     & 0.036     & 0.036     \\
$x_{n-1}$ $y_{n-1}$                                                                   & $z_{2n-1}$                                                                    & 0.018     & 0.025     & 0.025     & 0.025     \\ \midrule
$x_1$ $y_1$                                                                     & $z_2$                                                                     & 0.099     & 0.088     & 0.088     & 0.088     \\
$x_2$ $y_2$                                                                     & $z_2$                                                                     & 0.025     & 0.016     & 0.016     & 0.016     \\ \midrule
$x_1$ $y_1$                                                                     & $z_1$                                                                     & 0.788     & 0.792     & 0.793     & 0.793     \\
$y_1$ $y_2$                                                                     & $z_1$                                                                     & 0.213     & 0.211     & 0.211     & 0.211     \\
$x_1$ $x_2$                                                                     & $z_1$                                                                     & 0.213     & 0.211     & 0.211     & 0.211    \\ \bottomrule
\end{tabular}

\caption{\textbf{Highest Relative Information Gain Elements and Pairs of Elements}, for multiplications between $x=(x_1,\ldots,x_{n})$ and $y=(y_1,\ldots,y_n)$, with $2\leq n \leq 5$. We define $z:= x \cdot y$, which will always have size $2n$ (with possibly a leading zero). $z_{2n}$ denotes the least-significant digit of $z$, and $z_1$ denotes the left-most digit. Only (input, output) pairs above 0.01 are shown. Note that since multiplication is commutative, several pairs of input variables (e.g. $a_0$ and $b_0$) exhibit the same relative information gain.
}
\end{table}

\newpage

\subsection{Empirical Surface Pattern Analysis for Multiplication with GPT4, ChatGPT and GPT3}

\begin{figure}[h]
\centering
\includegraphics[width=0.9\linewidth]{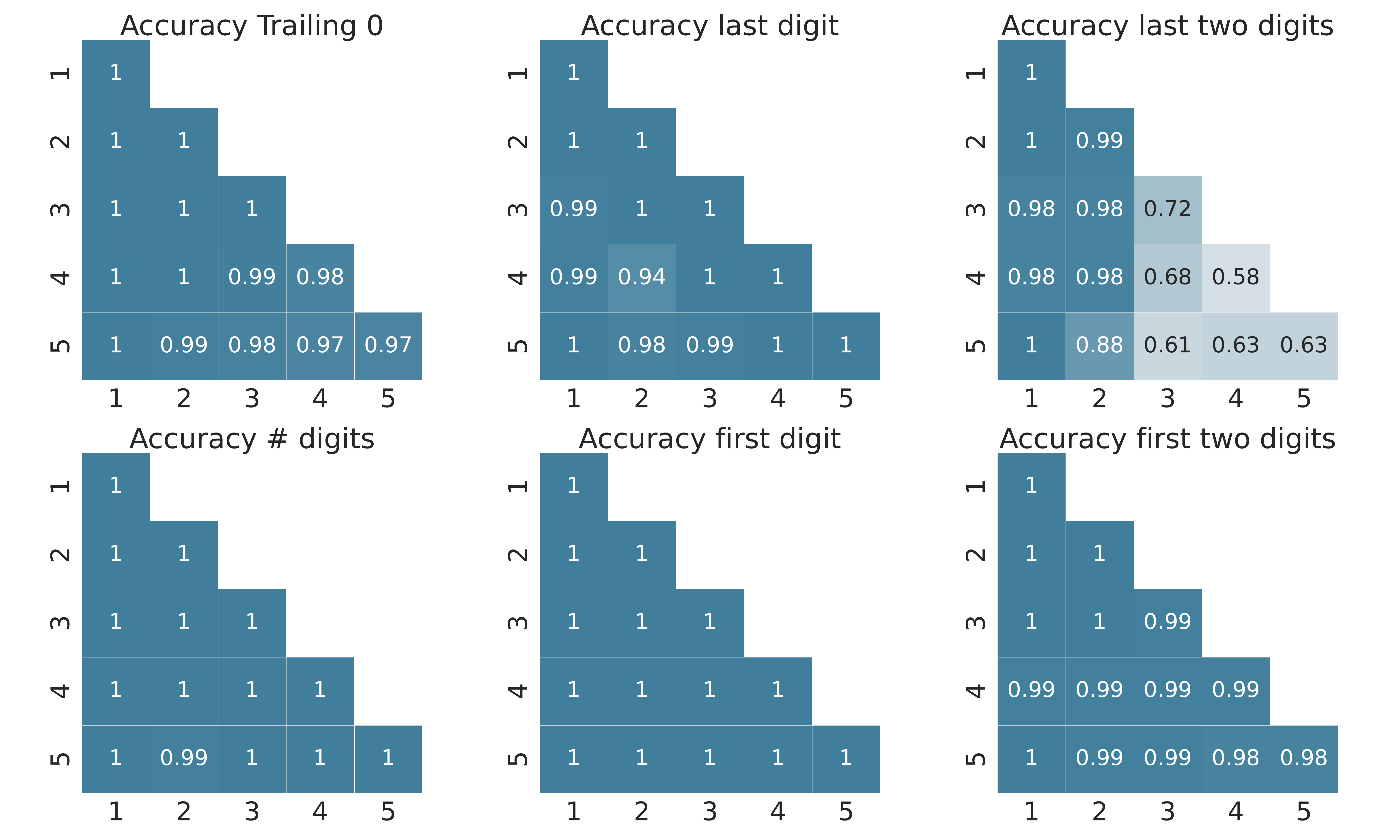}
\caption{GPT4 zero-shot accuracy in predicting partially correct responses. This evidences surface pattern learning, since the accuracy of full answer prediction is significantly lower--and often near zero (see Figure \ref{fig:zero-shot}). Specifically, `accuracy trailing zeros' pertains to accurately predicting the number of zeros in the output number, which is known to be relatively easy to predict based on arithmetic calculations.}
\label{fig:shallow-patterns-gpt3}
\end{figure}

\begin{figure}[h]
\centering
\includegraphics[width=0.9\linewidth]{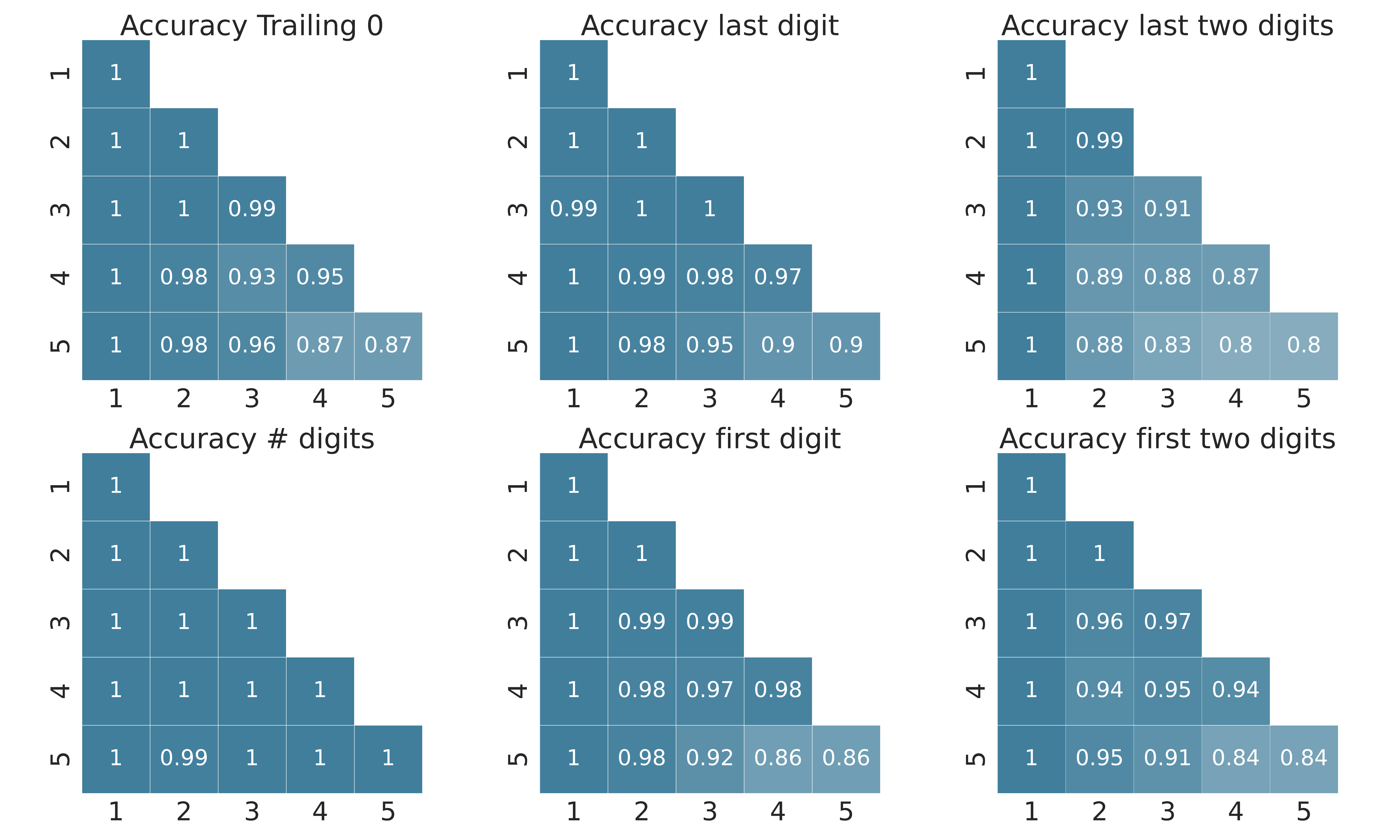}
\caption{ChatGPT zero-shot accuracy in predicting partially correct responses. We observe the same trend for GPT3 predictions. }
\label{fig:shallow-patterns-gpt4}
\end{figure}

\begin{figure}[h]
\centering
\includegraphics[width=0.9\linewidth]{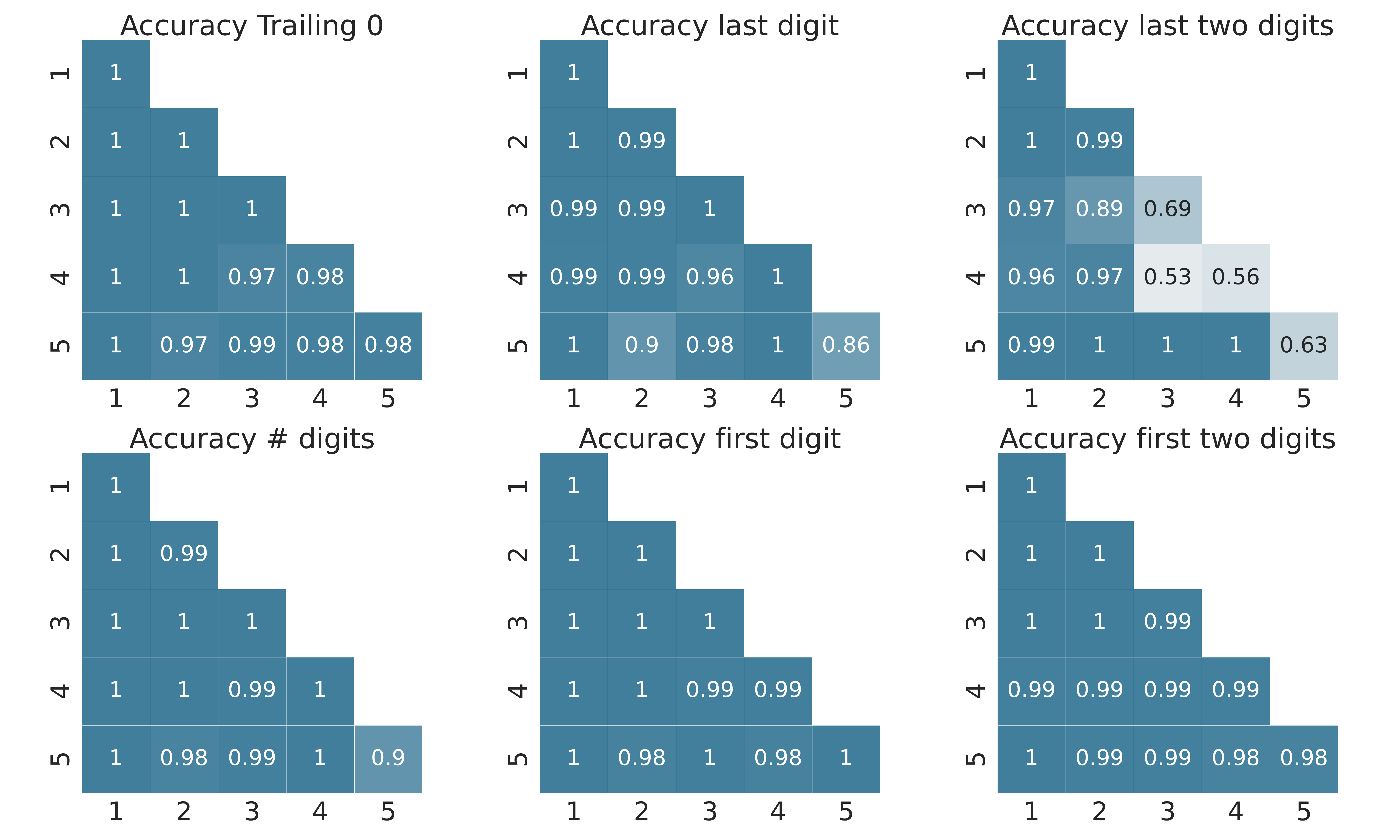}
\caption{GPT4 five-shot accuracy in predicting partially correct responses. We observe the same trend for ChatGPT, GPT3 few-shot predictions. }
\label{fig:shallow-patterns-gpt4-few-shot}
\end{figure}

\begin{figure}[h]
\centering
\includegraphics[width=0.9\linewidth]{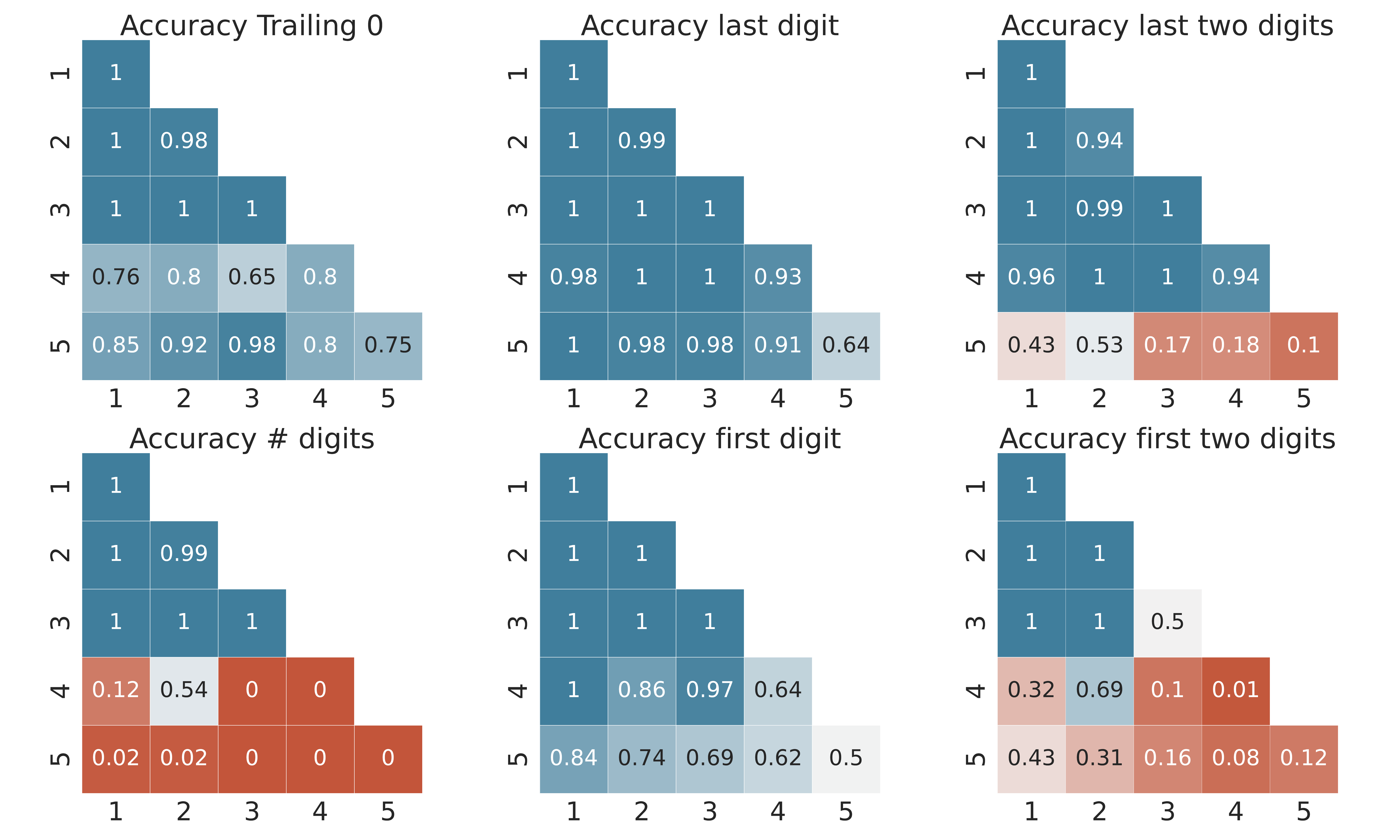}
\caption{GPT3 finetuned on question-scratchpad pairs. Accuracy of predicting partially correct responses.  }
\label{fig:shallow-patterns-3-finetuned}
\end{figure}

\newpage
\subsection{Relative Information Gain Predictions for Dynamic Programming Task}
\label{empirical_surface patetrns}

\begin{table}[h!]
\centering
\begin{tabular}{ccccccccccc}
\toprule
                                                         &                                                           & \multicolumn{9}{c}{Relative Information Gain for each problem size}                                                                                                                                                                                                                        \\ \midrule
\begin{tabular}[c]{@{}c@{}}Input\\ variable\end{tabular} & \begin{tabular}[c]{@{}c@{}}Output\\ variable\end{tabular} & 2                            & 3                            & 4                            & 5                            & 6                            & 7                            & 8                            & 9                            & 10                           \\ \hline \midrule
$a_1$                                                   & $o_2$                                                 & \cellcolor[HTML]{EDA39D}0.15 & \cellcolor[HTML]{EC9E98}0.13 & \cellcolor[HTML]{ECA099}0.14 & \cellcolor[HTML]{ECA099}0.14 & \cellcolor[HTML]{ECA099}0.14 & \cellcolor[HTML]{ECA099}0.14 & \cellcolor[HTML]{ECA099}0.14 & \cellcolor[HTML]{ECA099}0.14 & \cellcolor[HTML]{ECA099}0.14 \\ \midrule
$a_1$                                                   & $o_{1}$                                                 & \cellcolor[HTML]{D1EDDF}0.64 & \cellcolor[HTML]{B9E3CE}0.71 & \cellcolor[HTML]{C1E6D4}0.69 & \cellcolor[HTML]{BEE5D2}0.69 & \cellcolor[HTML]{BFE6D3}0.69 & \cellcolor[HTML]{BFE5D3}0.69 & \cellcolor[HTML]{BFE6D3}0.69 & \cellcolor[HTML]{BFE5D3}0.69 & \cellcolor[HTML]{C0E6D3}0.69 \\
$a_2$                                                   & $o_{2}$                                                 & \cellcolor[HTML]{F5FBF8}0.53 & \cellcolor[HTML]{FBEAE9}0.42 & \cellcolor[HTML]{FCF2F1}0.45 & \cellcolor[HTML]{FCEFEE}0.44 & \cellcolor[HTML]{FCF0EF}0.45 & \cellcolor[HTML]{FCF0EF}0.44 & \cellcolor[HTML]{FCF0EF}0.44 & \cellcolor[HTML]{FCF0EF}0.45 & \cellcolor[HTML]{FCEFEE}0.44 \\
$a_3$                                                   & $o_{3}$                                                 &                              & \cellcolor[HTML]{CFECDE}0.64 & \cellcolor[HTML]{FEFBFB}0.49 & \cellcolor[HTML]{F6FBF9}0.53 & \cellcolor[HTML]{FAFDFC}0.52 & \cellcolor[HTML]{F9FDFB}0.52 & \cellcolor[HTML]{F9FDFB}0.52 & \cellcolor[HTML]{F9FDFB}0.52 & \cellcolor[HTML]{FAFDFC}0.52 \\
$a_4$                                                   & $o_{4}$                                                 &                              &                              & \cellcolor[HTML]{DEF2E9}0.60 & \cellcolor[HTML]{FDF4F4}0.46 & \cellcolor[HTML]{FEFEFE}0.50 & \cellcolor[HTML]{FEFBFB}0.49 & \cellcolor[HTML]{FEFCFC}0.49 & \cellcolor[HTML]{FEFCFB}0.49 & \cellcolor[HTML]{FEFCFC}0.49 \\
$a_5$                                                   & $o_{5}$                                                 &                              &                              &                              & \cellcolor[HTML]{D9F0E5}0.62 & \cellcolor[HTML]{FDF7F6}0.47 & \cellcolor[HTML]{FCFEFD}0.51 & \cellcolor[HTML]{FEFEFE}0.50 & \cellcolor[HTML]{FFFFFF}0.50 & \cellcolor[HTML]{FFFFFF}0.50 \\
$a_6$                                                   & $o_{6}$                                                 &                              &                              &                              &                              & \cellcolor[HTML]{DBF1E6}0.61 & \cellcolor[HTML]{FDF6F6}0.47 & \cellcolor[HTML]{FEFFFE}0.51 & \cellcolor[HTML]{FEFDFD}0.49 & \cellcolor[HTML]{FEFEFE}0.50 \\
$a_7$                                                 & $o_{7}$                                                &                              &                              &                              &                              &                              & \cellcolor[HTML]{DAF0E6}0.61 & \cellcolor[HTML]{FDF6F6}0.47 & \cellcolor[HTML]{FDFFFE}0.51 & \cellcolor[HTML]{FEFDFD}0.50 \\
$a_8$                                                   & $o_{8}$                                                 &                              &                              &                              &                              &                              &                              & \cellcolor[HTML]{DBF1E6}0.61 & \cellcolor[HTML]{FDF6F6}0.47 & \cellcolor[HTML]{FDFFFE}0.51 \\
$a_9$                                                   & $o_{9}$                                                 &                              &                              &                              &                              &                              &                              &                              & \cellcolor[HTML]{DAF0E5}0.61 & \cellcolor[HTML]{FDF6F6}0.47 \\
$a_{10}$                                                   & $o_{10}$                                                 &                              &                              &                              &                              &                              &                              &                              &                              & \cellcolor[HTML]{DBF1E6}0.61 \\ \midrule
$a_{n-1}$                                                  & $o_{n-1}$                                                &                              & \cellcolor[HTML]{CFECDE}0.64 & \cellcolor[HTML]{DEF2E9}0.60 & \cellcolor[HTML]{D9F0E5}0.62 & \cellcolor[HTML]{DBF1E6}0.61 & \cellcolor[HTML]{DAF0E6}0.61 & \cellcolor[HTML]{DBF1E6}0.61 & \cellcolor[HTML]{DAF0E5}0.61 & \cellcolor[HTML]{DBF1E6}0.61 \\
$a_{n-2}$                                                 & $o_{n-2}$                                                &                              &                              &                              & \cellcolor[HTML]{FDF4F4}0.46 & \cellcolor[HTML]{FDF7F6}0.47 & \cellcolor[HTML]{FDF6F6}0.47 & \cellcolor[HTML]{FDF6F6}0.47 & \cellcolor[HTML]{FDF6F6}0.47 & \cellcolor[HTML]{FDF6F6}0.47 \\
$a_{n-3}$                                                  & $o_{n-3}$                                                &                              &                              &                              &                              &                              & \cellcolor[HTML]{FCFEFD}0.51 & \cellcolor[HTML]{FEFFFE}0.51 & \cellcolor[HTML]{FDFFFE}0.51 & \cellcolor[HTML]{FDFFFE}0.51 \\
$a_{n-4}$                                                  & $o_{n-4}$                                                &                              &                              &                              &                              &                              &                              &                              & \cellcolor[HTML]{FEFDFD}0.49 & \cellcolor[HTML]{FEFDFD}0.50 \\ \bottomrule
\end{tabular}
\caption{\textbf{Highest Relative Information Gain Elements}, for DP problems of size $2\leq n \leq 10$. \\We only show the (input, output) pairs where at least three problem sizes have RelativeIG>0, and at least one with RelativeIG>0.1. $a_{n-1}$ refers to the last element of the sequence, regardless of its actual id in the sequence. }\label{relative_info_gain_one_element}
\end{table}

Let $a_i$ be the $i$-th element of the input sequence, and let $o_i$ be the $i$-th element of the output sequence. As shown in Table \ref{relative_info_gain_one_element}, $a_i$ is a good predictor of $o_i$, and this is especially true for $a_1$ and $a_{n-1}$, the first and last elements of the sequence. This matches the task intuition, since one would never pick an element $a_i<0$ and decrease the final sum (one may pick $a_i=0$ if it makes a lexicographically smaller output sequence). 

$a_i$ weakly helps to predict its neighbors. The only case of this behavior with RelativeIG>0.1 is at the start of the sequence, where the first element helps predict the value of the second. This again matches intuition, since a very high $a_1$ indicates that with high probability $o_2$ will not be selected for the final subsequence.

Similar behaviors, but with higher relative information gains overall, are observed when analyzing triples of consecutive elements in the list. Table \ref{relative_info_gain_three_elements} shows that $o_i$ is highly predicted by $(a_{i-1}, a_i, a_{i+1})$. Moreover, $o_i$ is highly predicted by both $(a_{i-2}, a_{i-1}, a_{i})$ and $(a_{i}, a_{i+1}, a_{i+2})$, with the former generally having higher scores than the latter. This again matches the task intuitions, since the value of the neighbors helps determine whether to select a number for the subsequence; and asking for the lexicographically smallest sequence biases the output subsequence to care more about the previous numbers rather than the following ones. We believe that this last point is the cause of the weakly predictive power of $(a_{i-3}, a_{i-2}, a_{i-1})$ to predict $o_i$; whereas $(a_{i+1}, a_{i+2}, a_{i+3})$ is not shown, since all the relative information gain values were below 0.1.

\begin{table}[h!]
\centering
\begin{tabular}{@{}cccccccccc@{}}
\toprule
                                                         &                                                           & \multicolumn{8}{c}{Relative Information Gain for each problem size}                                                                                                                                                                                                                        \\ \midrule
\begin{tabular}[c]{@{}c@{}}Input\\ variable\end{tabular} & \begin{tabular}[c]{@{}c@{}}Output\\ variable\end{tabular}                          & 3                            & 4                            & 5                            & 6                            & 7                            & 8                            & 9                            & 10                           \\ \hline \midrule
$a_{n-3}$ $a_{n-2}$ $a_{n-1}$                                  & $o_{n-1}$                                                &                           &                              &                              &                              & \cellcolor[HTML]{6AC397}0.95 & \cellcolor[HTML]{6AC397}0.95 & \cellcolor[HTML]{6AC397}0.95 & \cellcolor[HTML]{69C397}0.95 \\
$a_{n-3}$ $a_{n-2}$ $a_{n-1}$                                  & $o_{n-2}$                                                &                           &                              &                              &                              & \cellcolor[HTML]{83CDA9}0.87 & \cellcolor[HTML]{84CDA9}0.87 & \cellcolor[HTML]{84CDA9}0.87 & \cellcolor[HTML]{83CDA9}0.87 \\
$a_{n-3}$ $a_{n-2}$ $a_{n-1}$                                  & $o_{n-3}$                                                &                           &                              &                              &                              & \cellcolor[HTML]{D0ECDE}0.64 & \cellcolor[HTML]{D2EDE0}0.64 & \cellcolor[HTML]{D2EDE0}0.64 & \cellcolor[HTML]{D0ECDE}0.64 \\ \midrule
$a_{1}$ $a_{2}$ $a_{3}$                                     & $o_{1}$                                                 & \cellcolor[HTML]{57BB8A}1.00 & \cellcolor[HTML]{63C093}0.96 & \cellcolor[HTML]{61BF91}0.97 & \cellcolor[HTML]{62C092}0.97 & \cellcolor[HTML]{62C092}0.97 & \cellcolor[HTML]{62C092}0.97 & \cellcolor[HTML]{62C092}0.97 & \cellcolor[HTML]{62C092}0.97 \\
$a_{1}$ $a_{2}$ $a_{3}$                                     & $o_{2}$                                                 & \cellcolor[HTML]{57BB8A}1.00 & \cellcolor[HTML]{77C8A1}0.91 & \cellcolor[HTML]{73C69D}0.92 & \cellcolor[HTML]{74C79F}0.91 & \cellcolor[HTML]{74C79E}0.92 & \cellcolor[HTML]{74C79E}0.91 & \cellcolor[HTML]{74C79E}0.92 & \cellcolor[HTML]{74C79E}0.91 \\
$a_{2}$ $a_{3}$ $a_{4}$                                     & $o_{2}$                                                 &                           & \cellcolor[HTML]{EBF7F1}0.56 & \cellcolor[HTML]{EEF8F3}0.55 & \cellcolor[HTML]{EDF8F3}0.55 & \cellcolor[HTML]{EEF8F3}0.55 & \cellcolor[HTML]{EDF8F3}0.55 & \cellcolor[HTML]{EDF8F3}0.55 & \cellcolor[HTML]{EDF8F2}0.56 \\
$a_{1}$ $a_{2}$ $a_{3}$                                     & $o_{3}$                                                 & \cellcolor[HTML]{57BB8A}1.00 & \cellcolor[HTML]{C8E9D9}0.66 & \cellcolor[HTML]{B1E0C9}0.73 & \cellcolor[HTML]{B9E3CE}0.71 & \cellcolor[HTML]{B6E2CC}0.72 & \cellcolor[HTML]{B7E2CD}0.72 & \cellcolor[HTML]{B6E2CD}0.72 & \cellcolor[HTML]{B6E2CC}0.72 \\
$a_{2}$ $a_{3}$ $a_{4}$                                     & $o_{3}$                                                 &                           & \cellcolor[HTML]{88CFAC}0.86 & \cellcolor[HTML]{A5DBC1}0.77 & \cellcolor[HTML]{A0D9BD}0.78 & \cellcolor[HTML]{A2DABE}0.78 & \cellcolor[HTML]{A2DABE}0.78 & \cellcolor[HTML]{A1D9BE}0.78 & \cellcolor[HTML]{A2DABE}0.78 \\
$a_{3}$ $a_{4}$ $a_{5}$                                     & $o_{3}$                                                 &                           &                              & \cellcolor[HTML]{C6E8D8}0.67 & \cellcolor[HTML]{CBEADB}0.66 & \cellcolor[HTML]{CAEADA}0.66 & \cellcolor[HTML]{CAEADB}0.66 & \cellcolor[HTML]{CAEADA}0.66 & \cellcolor[HTML]{CAEADA}0.66 \\
$a_{2}$ $a_{3}$ $a_{4}$                                     & $o_{4}$                                                 &                           & \cellcolor[HTML]{6CC499}0.94 & \cellcolor[HTML]{D0ECDF}0.64 & \cellcolor[HTML]{BBE4D0}0.7  & \cellcolor[HTML]{C2E7D5}0.68 & \cellcolor[HTML]{C0E6D3}0.69 & \cellcolor[HTML]{C0E6D3}0.69 & \cellcolor[HTML]{BFE5D2}0.69 \\
$a_{3}$ $a_{4}$ $a_{5}$                                     & $o_{4}$                                                 &                           &                              & \cellcolor[HTML]{81CCA7}0.88 & \cellcolor[HTML]{9ED8BB}0.79 & \cellcolor[HTML]{99D6B8}0.81 & \cellcolor[HTML]{9BD7BA}0.8  & \cellcolor[HTML]{9AD7B9}0.8  & \cellcolor[HTML]{99D6B8}0.8  \\
$a_{4}$ $a_{5}$ $a_{6}$                                     & $o_{4}$                                                 &                           &                              &                              & \cellcolor[HTML]{D5EEE2}0.63 & \cellcolor[HTML]{D9F0E4}0.62 & \cellcolor[HTML]{D8F0E4}0.62 & \cellcolor[HTML]{D9F0E4}0.62 & \cellcolor[HTML]{D7EFE3}0.62 \\
$a_{3}$ $a_{4}$ $a_{5}$                                     & $o_{5}$                                                 &                           &                              & \cellcolor[HTML]{69C296}0.95 & \cellcolor[HTML]{CEECDD}0.65 & \cellcolor[HTML]{B8E3CE}0.71 & \cellcolor[HTML]{C0E6D3}0.69 & \cellcolor[HTML]{BDE5D1}0.7  & \cellcolor[HTML]{BCE4D1}0.7  \\
$a_{4}$ $a_{5}$ $a_{6}$                                     & $o_{5}$                                                 &                           &                              &                              & \cellcolor[HTML]{84CEAA}0.87 & \cellcolor[HTML]{A1D9BE}0.78 & \cellcolor[HTML]{9DD8BB}0.79 & \cellcolor[HTML]{9ED8BC}0.79 & \cellcolor[HTML]{9DD8BB}0.79 \\
$a_{5}$ $a_{6}$ $a_{7}$                                     & $o_{5}$                                                 &                           &                              &                              &                              & \cellcolor[HTML]{D0ECDE}0.64 & \cellcolor[HTML]{D4EEE1}0.63 & \cellcolor[HTML]{D3EEE1}0.63 & \cellcolor[HTML]{D2EDE0}0.64 \\
$a_{4}$ $a_{5}$ $a_{6}$                                     & $o_{6}$                                                 &                           &                              &                              & \cellcolor[HTML]{6AC398}0.94 & \cellcolor[HTML]{CFECDE}0.64 & \cellcolor[HTML]{B9E3CF}0.71 & \cellcolor[HTML]{C1E6D4}0.69 & \cellcolor[HTML]{BEE5D2}0.7  \\
$a_{5}$ $a_{6}$ $a_{7}$                                     & $o_{6}$                                                 &                           &                              &                              &                              & \cellcolor[HTML]{83CDA9}0.87 & \cellcolor[HTML]{A0D9BD}0.78 & \cellcolor[HTML]{9BD7BA}0.8  & \cellcolor[HTML]{9CD7BA}0.8  \\
$a_{6}$ $a_{7}$ $a_{8}$                                     & $o_{6}$                                                 &                           &                              &                              &                              &                              & \cellcolor[HTML]{D2EDE0}0.64 & \cellcolor[HTML]{D6EFE2}0.62 & \cellcolor[HTML]{D2EDE0}0.63 \\
$a_{5}$ $a_{6}$ $a_{7}$                                     & $o_{7}$                                                 &                           &                              &                              &                              & \cellcolor[HTML]{6AC397}0.95 & \cellcolor[HTML]{CFECDE}0.64 & \cellcolor[HTML]{B9E3CE}0.71 & \cellcolor[HTML]{BFE5D3}0.69 \\
$a_{6}$ $a_{7}$ $a_{8}$                                     & $o_{7}$                                                 &                           &                              &                              &                              &                              & \cellcolor[HTML]{84CDA9}0.87 & \cellcolor[HTML]{A1D9BD}0.78 & \cellcolor[HTML]{9BD7B9}0.8  \\
$a_{6}$ $a_{7}$ $a_{8}$                                     & $o_{8}$                                                 &                           &                              &                              &                              &                              & \cellcolor[HTML]{6AC397}0.95 & \cellcolor[HTML]{CFECDE}0.64 & \cellcolor[HTML]{B8E2CE}0.71 \\ \midrule
$a_{1}$ $a_{2}$ $a_{3}$                                     & $o_{4}$                                                 &                           & \cellcolor[HTML]{EB9A93}0.12 & \cellcolor[HTML]{EB968F}0.1  & \cellcolor[HTML]{EB9790}0.11 & \cellcolor[HTML]{EB9790}0.11 & \cellcolor[HTML]{EB9790}0.11 & \cellcolor[HTML]{EB9891}0.11 & \cellcolor[HTML]{EB9A93}0.11 \\
$a_{2}$ $a_{3}$ $a_{4}$                                     & $o_{5}$                                                 &                           &                              & \cellcolor[HTML]{EB9790}0.1  & \cellcolor[HTML]{EA938C}0.09 & \cellcolor[HTML]{EA958D}0.1  & \cellcolor[HTML]{EA948D}0.09 & \cellcolor[HTML]{EA948D}0.1  & \cellcolor[HTML]{EB968F}0.1  \\
$a_{3}$ $a_{4}$ $a_{5}$                                     & $o_{6}$                                                 &                           &                              &                              & \cellcolor[HTML]{EB9891}0.11 & \cellcolor[HTML]{EA948D}0.1  & \cellcolor[HTML]{EA968E}0.1  & \cellcolor[HTML]{EA958E}0.1  & \cellcolor[HTML]{EB9890}0.11 \\
$a_{4}$ $a_{5}$ $a_{6}$                                     & $o_{7}$                                                 &                           &                              &                              &                              & \cellcolor[HTML]{EB9891}0.11 & \cellcolor[HTML]{EA948D}0.09 & \cellcolor[HTML]{EA958E}0.1  & \cellcolor[HTML]{EB9790}0.11 \\
$a_{5}$ $a_{6}$ $a_{7}$                                     & $o_{8}$                                                 &                           &                              &                              &                              &                              & \cellcolor[HTML]{EB9891}0.11 & \cellcolor[HTML]{EA948D}0.09 & \cellcolor[HTML]{EB9790}0.11 \\ \bottomrule

\end{tabular}
\caption{\textbf{Highest Relative Information Gain Contiguous Triples}, for DP problems of size $3~\leq~n~\leq~10$. We only show the (input, output) pairs where at least three problem sizes have RelativeIG>0, and at least one with RelativeIG>0.1.  $a_{n-1}$ refers to the last element of the sequence, regardless of its actual id in the sequence. }\label{relative_info_gain_three_elements}

\end{table}

\newpage
\subsection{Empirical Surface Pattern Results for Dynamic Programming Task}

We observe that all analyzed models match the Relative Information Gain prediction that $o_1$ (whether the first element goes into the output sequence or not) should be the easiest value to predict (see Figures \ref{dp_surface_patterns_gpt4_few_shot_scratchpad}, \ref{dp_surface_patterns_gpt3_few_shot_no_scratchpad}, and \ref{dp_surface_patterns_gpt3_finetuned_no_scratchpad}). However, since GPT3 often predicts shorter output sequences than the required size, the analysis of the predictive power of $o_{n-1}$ is only done for GPT4. In GPT4, we observe that $o_{n-1}$ is among the easiest values to predict as expected by Relative Information Gain.

\begin{figure}[h]
    \includegraphics[width=0.32\linewidth]{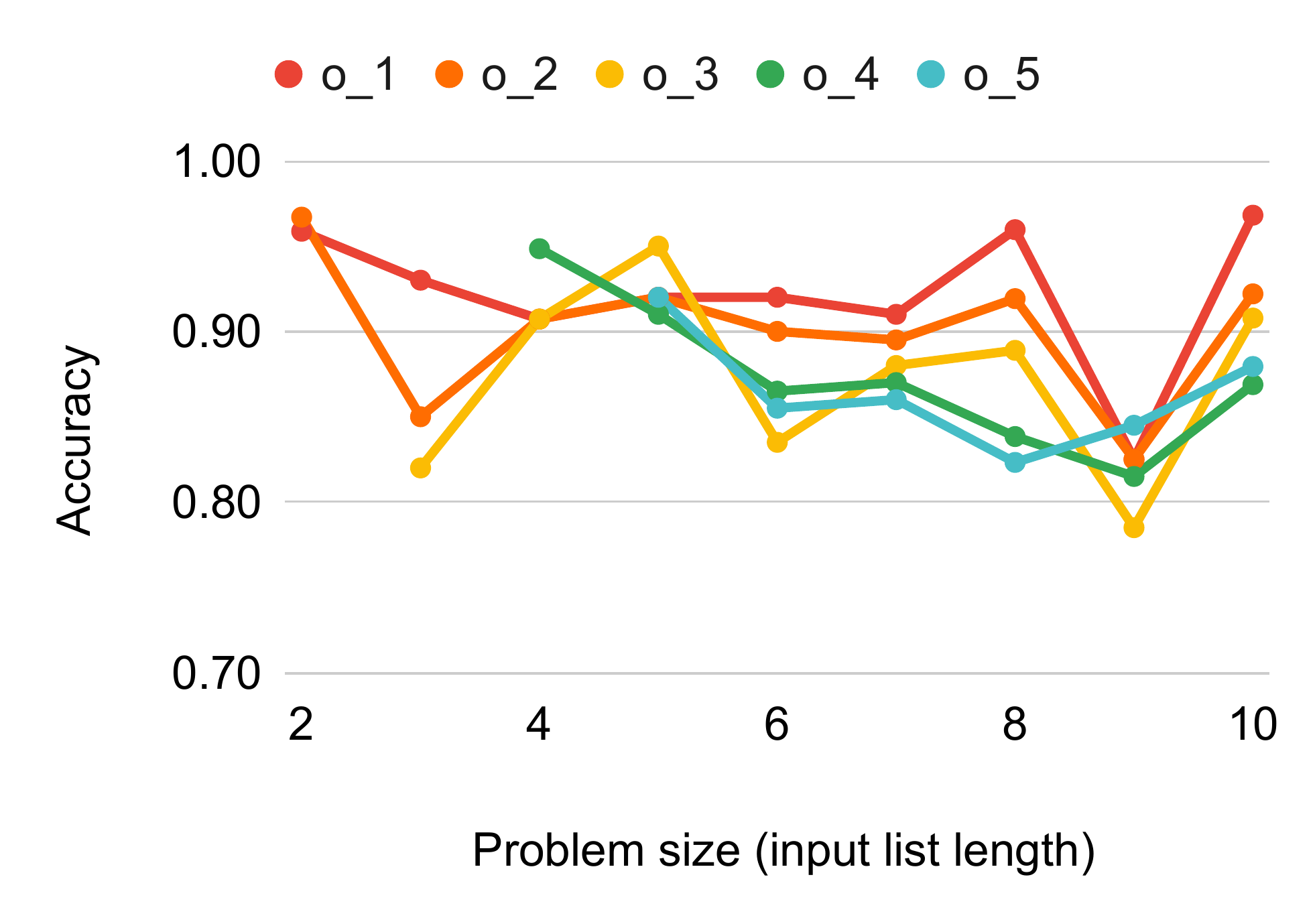}
    \includegraphics[width=0.32\linewidth]{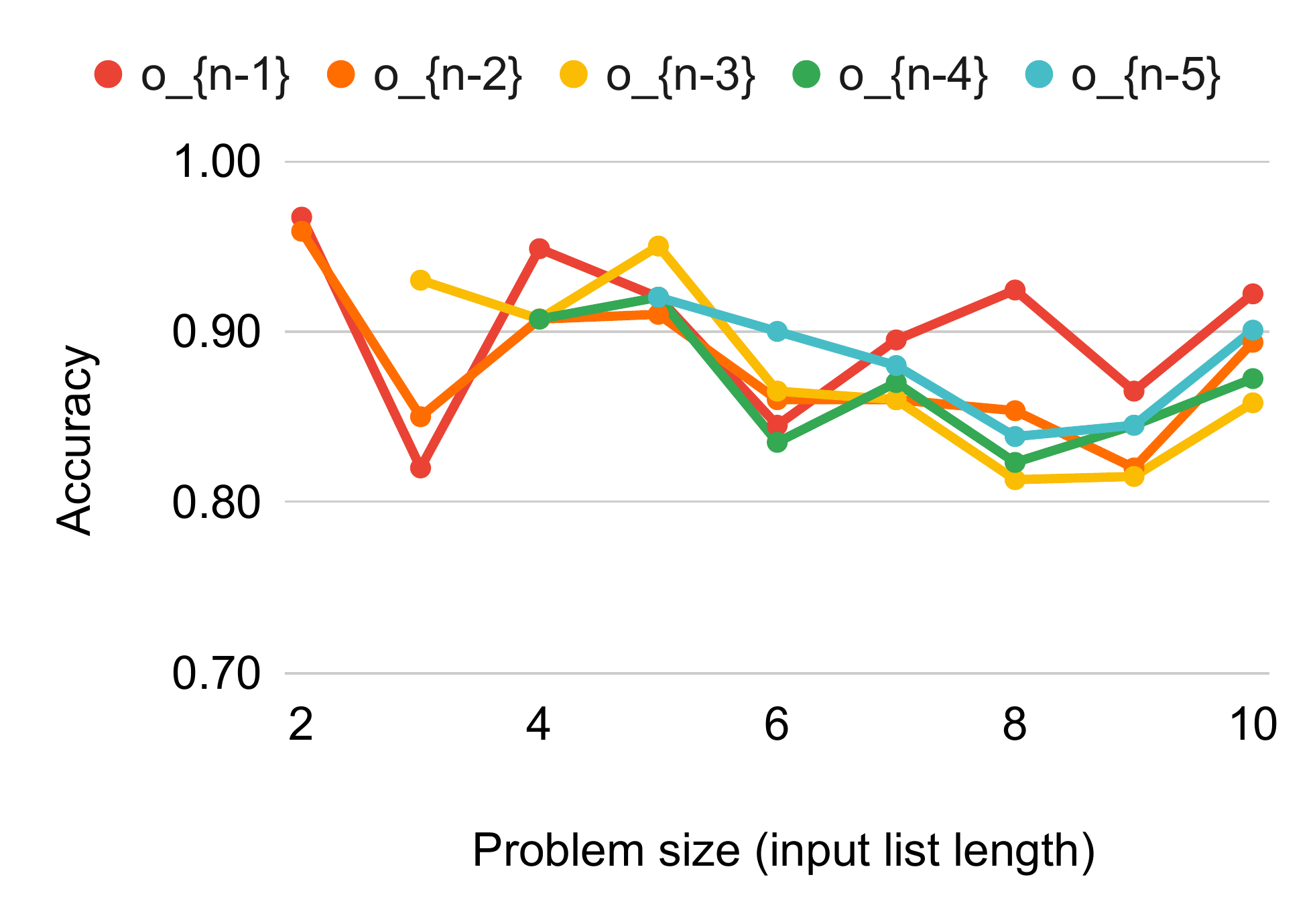}
    \includegraphics[width=0.32\linewidth]{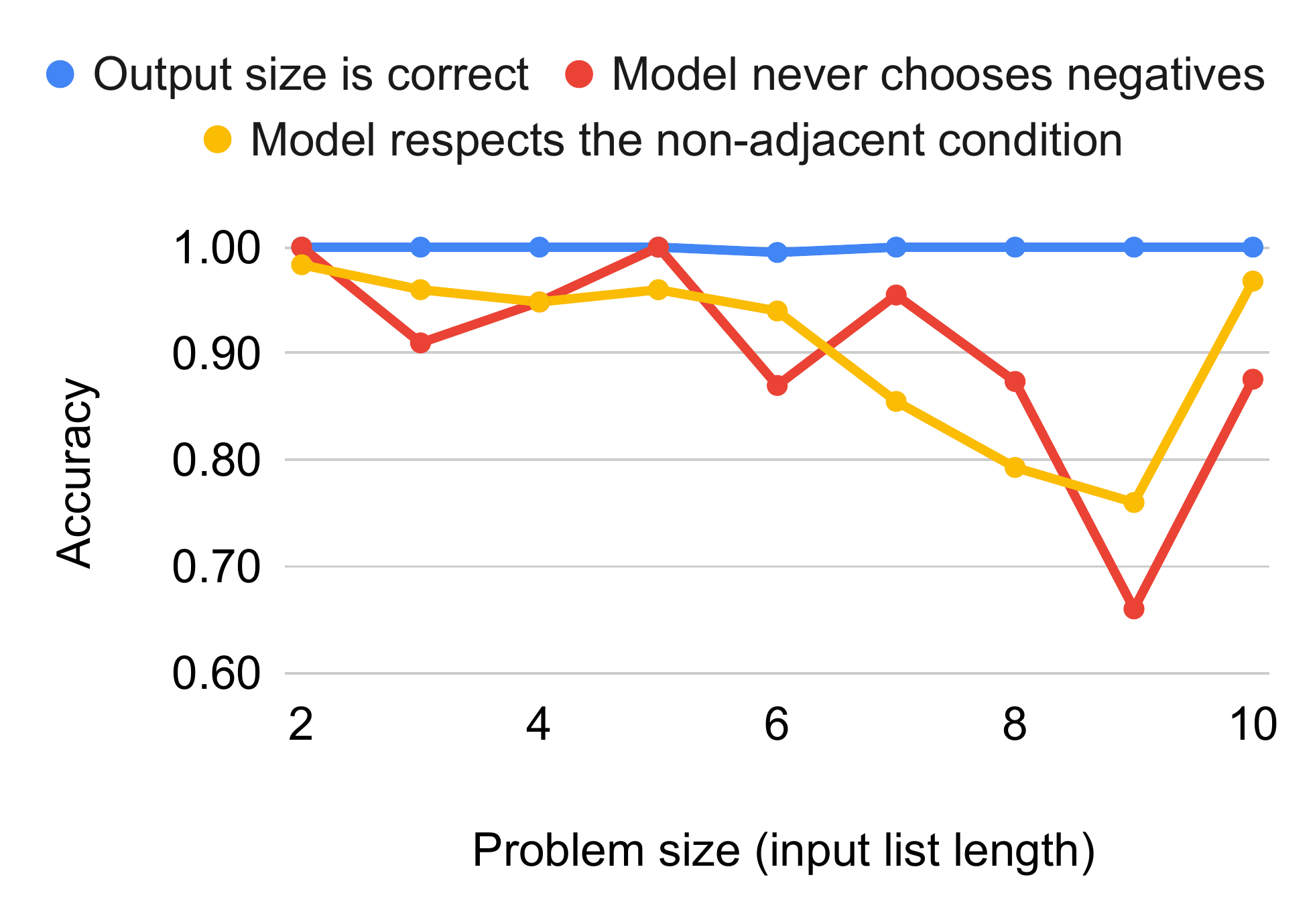}
    \caption{GPT4 five-shot with scratchpad accuracy in predicting output elements $o_i$ in the DP task. All $o_i$ are predicted with high accuracy with $o_1$ and $o_{n-1}$ being consistently among the highest. These observations go in line with the Relative Information Gain prediction.}\label{dp_surface_patterns_gpt4_few_shot_scratchpad}
\end{figure}

\begin{figure}
    \centering
    \includegraphics[width=0.49\linewidth]{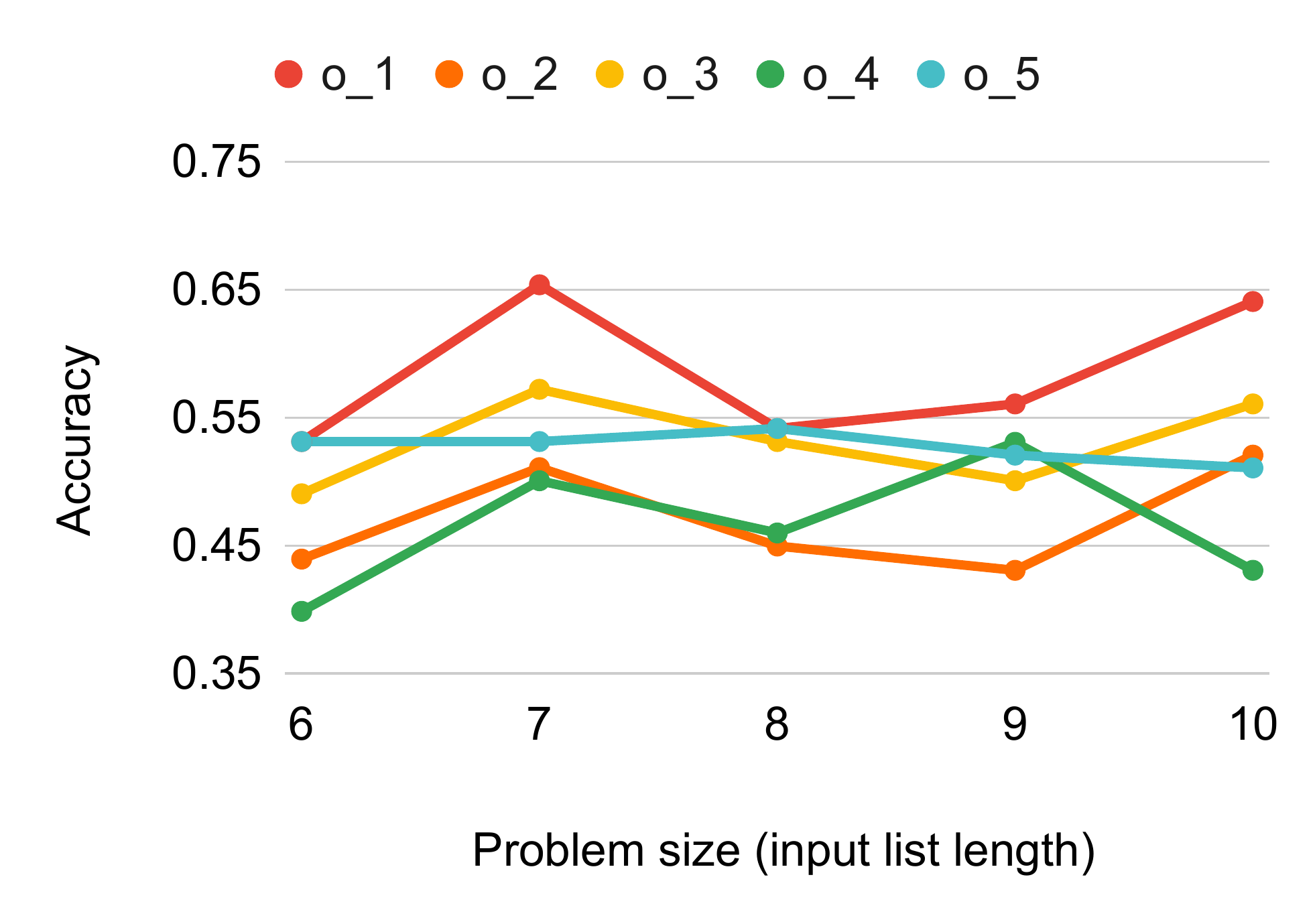}
    \includegraphics[width=0.49\linewidth]{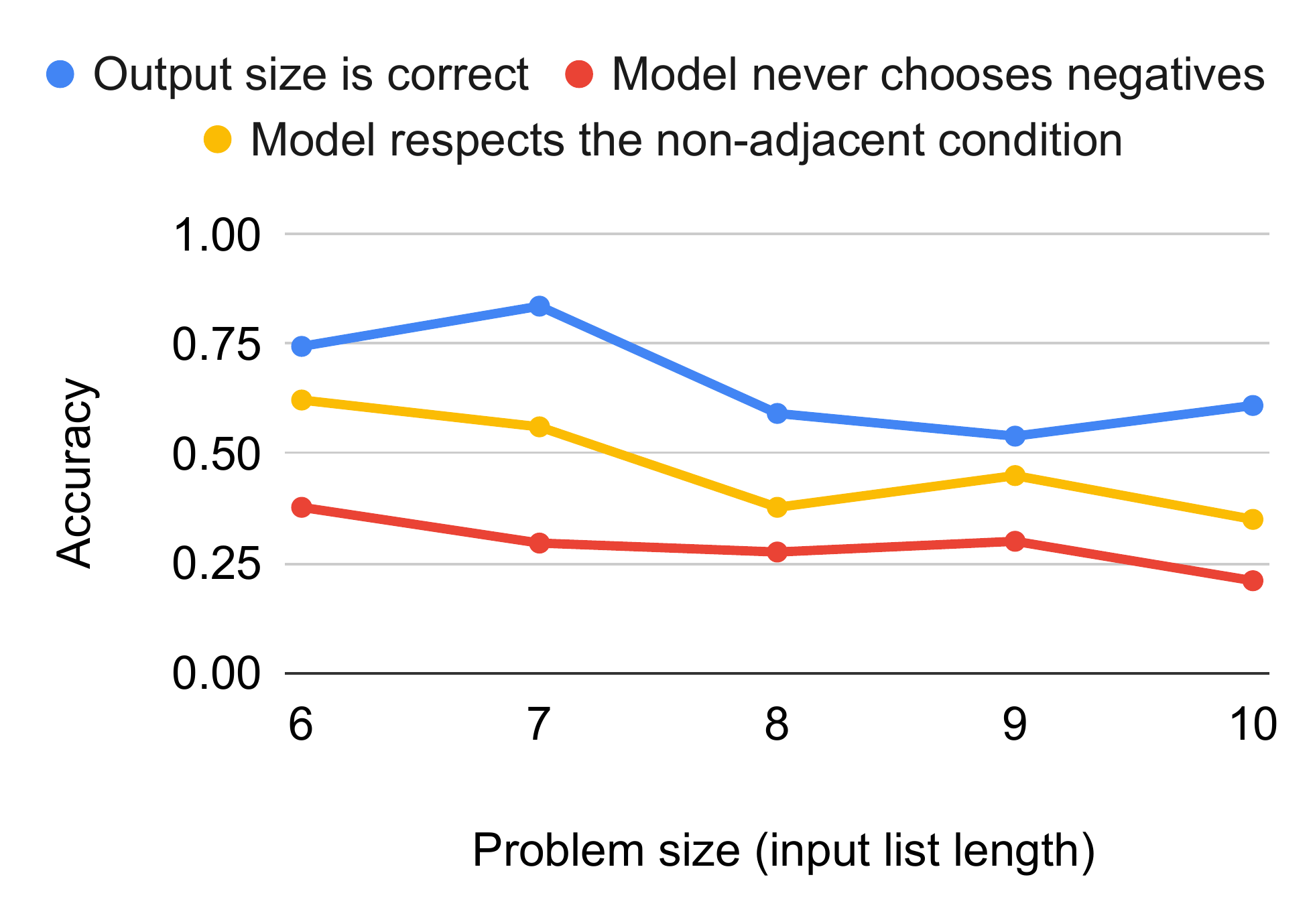}
    \caption{GPT3 few-shot without scratchpad accuracy in predicting output elements $o_i$ in the DP task. As predicted by Relative Information Gain, the model predicts $o_1$ correctly with the highest probability. However, because GPT3 often does not produce the correct output size, it hinders us from analyzing $o_{n-1}$.}
    \label{dp_surface_patterns_gpt3_few_shot_no_scratchpad}
\end{figure}

\begin{figure}
    \centering
    \includegraphics[width=0.49\linewidth]{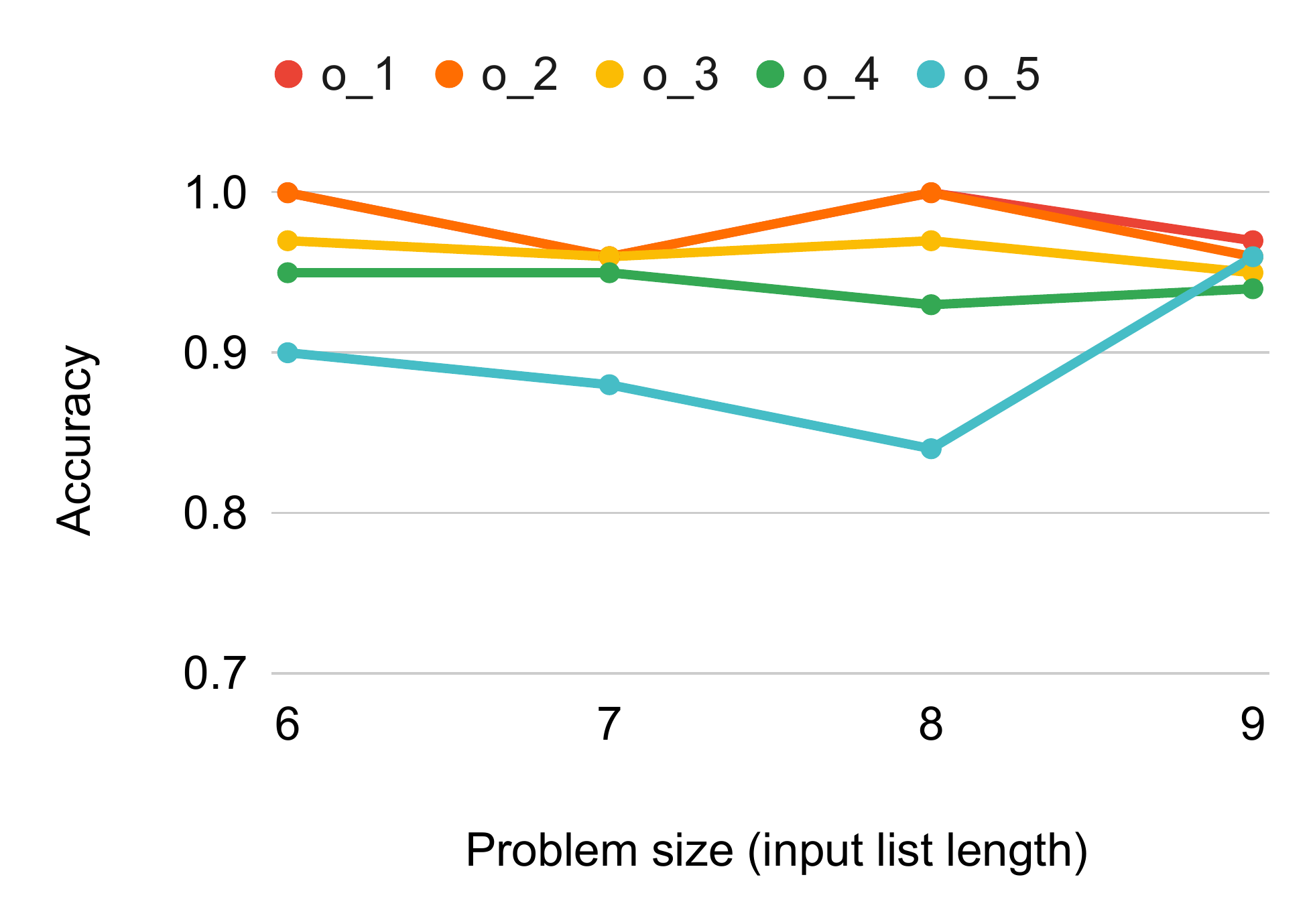}
    \includegraphics[width=0.49\linewidth]{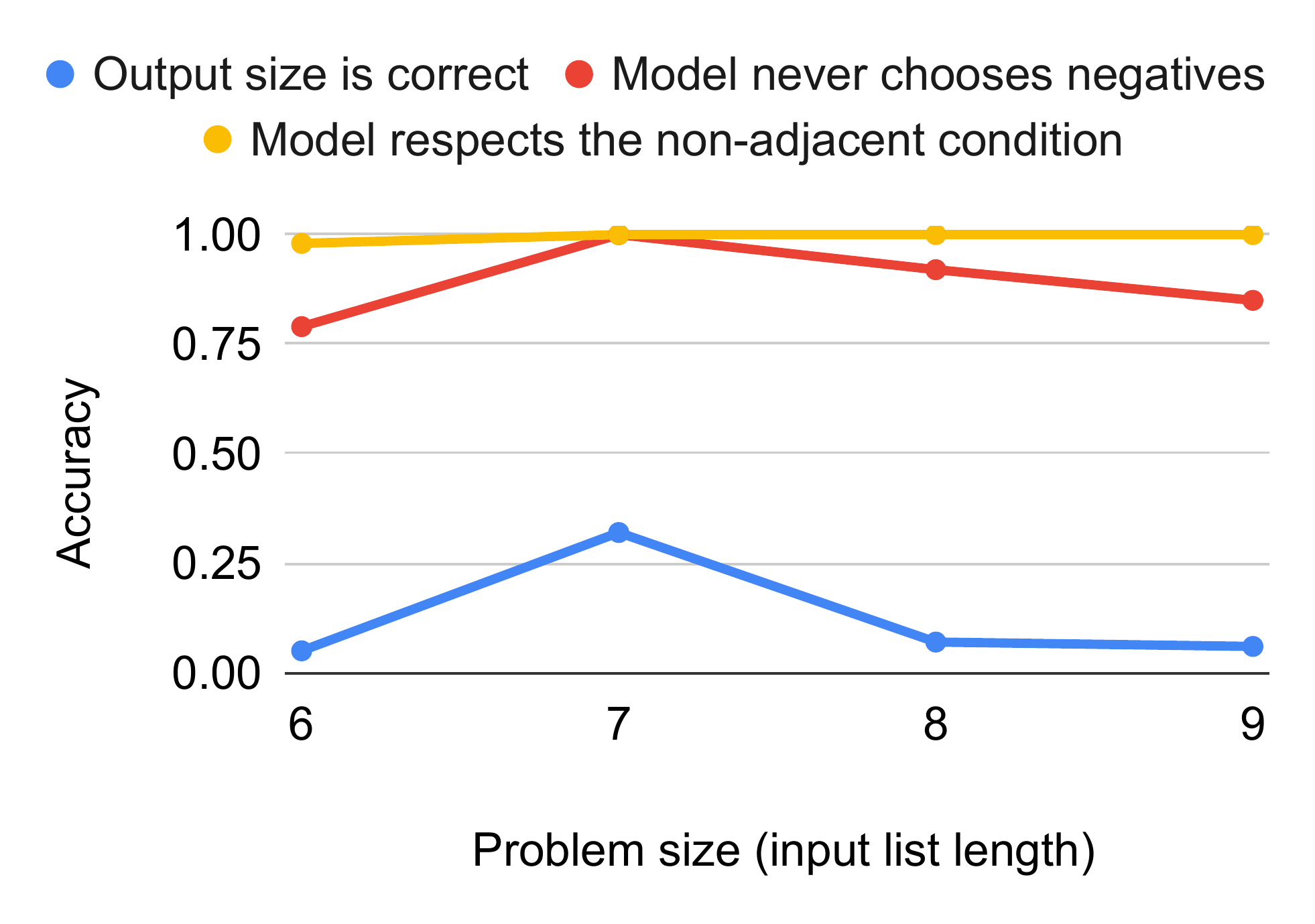}
    \caption{GPT3 fine-tuned without scratchpad accuracy in predicting output elements $o_i$ in the DP task. As predicted by Relative Information Gain, the model predicts $o_1$ correctly with the highest probability. However, because GPT3 often does not produce the correct output size, it hinders us from analyzing $o_{n-1}$.
    }
    \label{dp_surface_patterns_gpt3_finetuned_no_scratchpad}
\end{figure}

\newpage~\newpage
\section{Theoretical Results: Derivations}\label{sec:theory_app}

\subsection{
Error accumulates with larger parallel applications of an estimated function (\textit{width})
}\label{derivations:lemmawidth}

Here we provide formal statements and derivations to Propositions \ref{lemmawidth} and \ref{lemmadepth} shown in the main paper. The mathematical framework used is a simplified representation of how multi-step reasoning works, showing two quintessential reasoning types: independent applications of the same step, or consecutive applications of the same step. We take an error estimation and accumulation perspective, since transformers are still being investigated from a theoretical standpoint.

\begin{restatable}{prop}{lemmawidth}
\label{lemmawidth_formal}
Let $f_n(\mathbf{x}) = h_n(g(\mathbf{x}, 1), g(\mathbf{x}, 2)), \ldots, g(\mathbf{x}, n))$. Let $\widehat{h}_n, \widehat{g}, \widehat{f}_n$ be estimators of $h_n, g, f_n$ respectively. Assume $\proba(h_n = \widehat{h}_n) = 1$ and $\proba(h_n(X) = h_n(Y) \ | \ X \neq Y) < c_n$, where $c_n < c$ for some constant $c\ll 1$ (i.e. $\widehat{h}_n$ perfectly estimates $h_n$, and $h_n$ is almost injective). If $\proba(g \neq \widehat{g}) = \epsilon > 0$ and errors in $\widehat{g}$ are independent, $\displaystyle \liminf_{n\rightarrow+\infty} \proba(f_n \neq \widehat{f}_n) \geq 1 - c$.

Moreover, if $c_n\leq \beta \alpha^n$ for some some $\alpha \in (0, 1)$ and $\beta > 0$, then $\displaystyle\lim_{n\rightarrow+\infty} \proba(f_n \neq \widehat{f}_n) = 1$. \end{restatable}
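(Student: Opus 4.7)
\noindent\textbf{Proof plan for Proposition~\ref{lemmawidth_formal}.}

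The plan is to reduce the event $\{f_n = \widehat{f}_n\}$ to two simpler events: either the inner applications all agree (no error in any $\widehat{g}(\mathbf{x}, i)$), or they disagree but are unluckily mapped to the same image by $h_n$ (a collision). Since the two alternatives are exhaustive and disjoint conditional on the value of the inner tuple, a clean upper bound on $\proba(f_n = \widehat{f}_n)$ will follow and the claimed lower bound on $\proba(f_n \neq \widehat{f}_n)$ will be its complement.

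First, since $\proba(h_n = \widehat{h}_n) = 1$, I may replace $\widehat{h}_n$ by $h_n$ throughout without altering any probabilities, so that $\widehat{f}_n(\mathbf{x}) = h_n(\widehat{g}(\mathbf{x},1), \ldots, \widehat{g}(\mathbf{x},n))$ almost surely. Introduce the shorthand $G := (g(\mathbf{x},1),\ldots, g(\mathbf{x},n))$ and $\widehat{G} := (\widehat{g}(\mathbf{x},1),\ldots, \widehat{g}(\mathbf{x},n))$. Because the errors in $\widehat{g}$ at distinct indices are independent and each has error probability $\epsilon$,
\[
\proba(\widehat{G} = G) = (1-\epsilon)^n.
\]
Conditioning on whether $\widehat{G} = G$ or not,
\[
\proba(f_n = \widehat{f}_n) = \proba(\widehat{G} = G) + \proba(\widehat{G} \neq G)\,\proba\bigl(h_n(\widehat{G}) = h_n(G) \,\big|\, \widehat{G} \neq G\bigr).
\]
Applying the almost-injectivity hypothesis $\proba(h_n(X) = h_n(Y) \mid X \neq Y) < c_n$ to the pair $(X,Y) = (G, \widehat{G})$, the second conditional probability is bounded by $c_n$, giving
\[
\proba(f_n = \widehat{f}_n) \leq (1-\epsilon)^n + \bigl(1 - (1-\epsilon)^n\bigr) c_n.
\]
Taking complements yields the explicit bound from the informal statement:
\[
\proba(f_n \neq \widehat{f}_n) \geq 1 - c_n - (1-\epsilon)^n (1-c_n).
\]

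The asymptotic claims are then immediate. Since $\epsilon > 0$, $(1-\epsilon)^n \to 0$ geometrically, and since $c_n < c$ for all $n$,
\[
\liminf_{n\to\infty} \proba(f_n \neq \widehat{f}_n) \geq 1 - c.
\]
For the refined claim under $c_n \leq \beta\alpha^n$ with $\alpha \in (0,1)$, both $c_n$ and $(1-\epsilon)^n(1-c_n)$ tend to $0$, so the lower bound tends to $1$; moreover the rate of convergence is exponential, governed by $\max(\alpha, 1-\epsilon)$.

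The one step requiring care is the application of the almost-injectivity assumption: the hypothesis is stated in terms of generic random inputs $X, Y$, and I need to make sure it applies when $X = G$ and $Y = \widehat{G}$ are themselves functions of $\mathbf{x}$ and of $\widehat{g}$'s randomness. I will handle this by conditioning on the realized pair $(G, \widehat{G}) = (u, v)$ with $u \neq v$, applying the pointwise collision bound $\proba(h_n(u) = h_n(v)) < c_n$ (now only over whatever residual randomness $h_n$ itself may carry), and integrating back; if $h_n$ is deterministic the bound is to be interpreted in the natural averaged sense over the joint law of $(G,\widehat{G})$ conditioned on $G \neq \widehat{G}$, which is precisely what the assumption provides. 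This is expected to be the only non-routine aspect of the argument.
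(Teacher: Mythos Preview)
Your proposal is correct and follows essentially the same route as the paper: condition on whether the inner tuples $G$ and $\widehat{G}$ agree, use independence to get $\proba(\widehat{G}=G)=(1-\epsilon)^n$, bound the collision term by $c_n$, and read off the complement bound and its limits. Your upfront replacement of $\widehat{h}_n$ by $h_n$ is a minor streamlining of the paper's derivation, and your closing remark on the applicability of the collision hypothesis to the specific pair $(G,\widehat{G})$ is a point of care the paper leaves implicit.
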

\begin{proof}
For ease of writing, let $X_i = g(X,i)$ and $Y_i=\widehat{g}(X, i)$, and let $\boldsymbol{X} = (X_1, \ldots, X_n)$ and $\boldsymbol{Y} = (Y_1, \ldots, Y_n)$. We will compute some auxiliary probabilities, and then upper bound $\proba(f = \widehat{f})$, to finally compute its limit.
\begin{align}\label{proba_x_eq_y}
    \proba(\boldsymbol{X} = \boldsymbol{Y}) &= \proba(X_1 = Y_1, X_2 = Y_2, \ldots, X_n = Y_n) \notag \\
    &= \proba(X_1 = Y_1) \cdot \proba(X_2 = Y_2) \ldots \cdot \proba(X_n = Y_n) = \proba(g = \widehat{g})^n = (1-\epsilon)^n
\end{align}

Since by hypothesis we know $\proba(h_n(\boldsymbol{Y}) = \widehat{h}_n(\boldsymbol{Y})) = 1$, we have that:
{\footnotesize
\begin{align}\label{proba_h_correct_given_incorrect_param}
\proba(h_n(\boldsymbol{X}) = \widehat{h}_n(\boldsymbol{Y}) \ | \ \boldsymbol{X} \neq \boldsymbol{Y}) &= 
\proba(h_n(\boldsymbol{X}) = \widehat{h}_n(\boldsymbol{Y}) \cap h_n(\boldsymbol{Y}) = \widehat{h}_n(\boldsymbol{Y}) \ | \ \boldsymbol{X} \neq \boldsymbol{Y}) \notag \\
&= \proba(h_n(\boldsymbol{X}) = h_n(\boldsymbol{Y}) = \widehat{h}_n(\boldsymbol{Y}) \ | \ \boldsymbol{X} \neq \boldsymbol{Y}) \notag \\
&\leq \proba(h_n(\boldsymbol{X}) = h_n(\boldsymbol{Y}) \ | \ \boldsymbol{X} \neq \boldsymbol{Y}) \notag \\
& < c_n %
\end{align}
}

We will now estimate $\proba(f_n = \widehat{f}_n)$ using the law of total probability w.r.t. the event $\boldsymbol{X} = \boldsymbol{Y}$.
{\footnotesize
\begin{align*}
    \proba(f_n = \widehat{f}_n) &= \proba(h_n(\boldsymbol{X}) = \widehat{h}_n(\boldsymbol{Y}))\notag \\
    &= \proba(h_n(\boldsymbol{X}) = \widehat{h}_n(\boldsymbol{Y}) \ | \ \boldsymbol{X} = \boldsymbol{Y}) \cdot \proba(\boldsymbol{X} = \boldsymbol{Y}) + \proba(h_n(\boldsymbol{X}) = \widehat{h}_n(\boldsymbol{Y}) \ | \ \boldsymbol{X} \neq \boldsymbol{Y}) \cdot \proba(\boldsymbol{X} \neq \boldsymbol{Y})\notag \\
    &= \proba(h_n(\boldsymbol{X}) = \widehat{h}_n(\boldsymbol{X})) \cdot \proba(\boldsymbol{X} = \boldsymbol{Y}) + \proba(h_n(\boldsymbol{X}) = \widehat{h}_n(\boldsymbol{Y}) \ | \ \boldsymbol{X} \neq \boldsymbol{Y}) \cdot (1 - \proba(\boldsymbol{X} = \boldsymbol{Y}))\notag \\
    &= 1 \cdot (1-\epsilon)^n + \proba(h_n(\boldsymbol{X}) = \widehat{h}_n(\boldsymbol{Y}) \ | \ \boldsymbol{X} \neq \boldsymbol{Y}) \cdot (1 - (1-\epsilon)^n)\notag \text{ \ \ \ \ \ \ (using \ref{proba_x_eq_y} and hypothesis)}\notag \\
    &< (1-\epsilon)^n + c_n \cdot (1 - (1-\epsilon)^n) \text{ \ \ \ \ \ \ (using \ref{proba_h_correct_given_incorrect_param})}\notag \\
    &< c_n + (1-\epsilon)^n \cdot (1 - c_n)
\end{align*}
}

To conclude our proof, we will compute a lower bound for $\liminf_{n\rightarrow+\infty} \proba(f_n\neq\widehat{f}_n)$.
Note that since $c_n < c$ for all $n$, we know that $\proba(f_n = \widehat{f}_n) < c + (1-\epsilon)^n \cdot (1 - c)$. Then, $\proba(f_n~\neq~\widehat{f}_n) > 1 - c - (1-\epsilon)^n \cdot (1 - c)$. Since $1 - \epsilon \in [0, 1)$, $\displaystyle\lim_{n\rightarrow+\infty} 1 - c - (1-\epsilon)^n \cdot (1 - c) = 1 - c$. Thus,
\begin{align*}
    \liminf_{n\rightarrow+\infty} \proba(f_n\neq\widehat{f}_n) \geq \liminf_{n\rightarrow+\infty} 1 - c - (1-\epsilon)^n \cdot (1 - c) = 1 - c
\end{align*}

which concludes our proof.
\bigskip

\textbf{Note:} In the case where $c_n\leq \beta \alpha^n$, we can derive an even stronger conclusion. In this case, we can prove that $\displaystyle\lim_{n\rightarrow+\infty} \proba(f_n = \widehat{f}_n) = 1$. Recall that $\proba(f_n = \widehat{f}_n) < \alpha \beta^n + (1-\epsilon)^n \cdot (1 - \alpha \beta^n)$. Note that since $1 - \epsilon \in [0, 1)$ and $\alpha \in (0, 1)$, trivially $\displaystyle\lim_{n\rightarrow+\infty} \beta \alpha^n + (1-\epsilon)^n \cdot (1 - \beta \alpha^n) = 0$.
\begin{align*}
    0 \leq \liminf_{n\rightarrow+\infty} \proba(f_n=\widehat{f}_n) \leq \limsup_{n\rightarrow+\infty} \proba(f_n=\widehat{f}_n) \leq \limsup_{n\rightarrow+\infty} \beta \alpha^n + (1-\epsilon)^n \cdot (1 - \beta \alpha^n) = 0
\end{align*}

Then, $\lim_{n\rightarrow+\infty} \proba(f_n=\widehat{f}_n) = 0$ and we conclude $\lim_{n\rightarrow+\infty} \proba(f_n\neq\widehat{f}_n) = 1$, assuming that $c_n\leq \beta\alpha^n$ for some adequate $\alpha, \beta$.
\end{proof}

\begin{restatable}{corollary}{corollarymultiplication}
\label{appendix:corollary_multiplication_width}
Assume that a model $\mathcal{M}$ solves shifted addition perfectly, but it incorrectly solves \textbf{at least one} $m$ digit by 1 digit multiplication for some fixed $m$. Then, the probability that $\mathcal{M}$ will solve \textbf{any} $m$ digit by $n$ digit multiplication using the long-form multiplication algorithm tends to 0 when $n$ tends to infinity.
\end{restatable}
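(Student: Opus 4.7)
The plan is to exhibit $m$-by-$n$ long-form multiplication as an instance of the setting of Proposition~\ref{lemmawidth_formal} and then directly apply it. Let $\mathbf{x} = (x, y)$ be the input, where $x$ is the $m$-digit multiplicand and $y = (y_1, \ldots, y_n)$ is the $n$-digit multiplier. Define $g(\mathbf{x}, i) := (x \cdot y_i)\cdot 10^{i-1}$, the $i$-th shifted partial product, so each call to $g$ is exactly one $m$-by-1 digit multiplication followed by a deterministic positional shift. Define $h_n(p_1, \ldots, p_n) := \sum_{i=1}^n p_i$, which is precisely the shifted-addition step that aggregates the $n$ partial products into $x\cdot y$. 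With these definitions, $f_n(\mathbf{x}) = h_n(g(\mathbf{x},1),\ldots,g(\mathbf{x},n))$ is exactly the long-form multiplication algorithm.

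Next I would verify each hypothesis of Proposition~\ref{lemmawidth_formal}. The assumption that $\mathcal{M}$ solves shifted addition perfectly gives $\proba(h_n = \widehat{h}_n) = 1$. The assumption that $\mathcal{M}$ fails on at least one $m$-by-$1$ multiplication gives $\proba(g \neq \widehat{g}) = \epsilon > 0$, since the underlying distribution over multipliers assigns positive mass to the offending digit. Independence of errors across the $n$ calls to $\widehat{g}$ is inherited from the modeling assumption of the proposition, reflecting that the model treats each partial product as a separate computation.

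The key remaining hypothesis is low collision of $h_n$. Here I would argue that on the relevant support, $h_n$ is actually \emph{injective}. For any fixed nonzero $x$, the map $y_i \mapsto x\cdot y_i$ is injective on digits, hence $y \mapsto (g(\mathbf{x},1),\ldots,g(\mathbf{x},n))$ is injective on $n$-digit multipliers; and the map $y \mapsto x\cdot y$ itself is injective, so distinct tuples of partial products yield distinct shifted sums. Thus $\proba(h_n(X)=h_n(Y)\mid X\neq Y) = 0$ on the support induced by $g$, and we may take $c_n = 0$ for every $n$. This trivially satisfies $c_n\leq \beta\alpha^n$ for any $\alpha\in(0,1)$, $\beta>0$, so the stronger conclusion of Proposition~\ref{lemmawidth_formal} applies and gives $\lim_{n\to\infty}\proba(f_n\neq \widehat{f}_n) = 1$, i.e., the success probability tends to $0$.

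The main obstacle is the bookkeeping between the two framings: carefully matching the static computation graph of long-form multiplication to the decomposition $h_n\circ(g(\cdot,1),\ldots,g(\cdot,n))$, and handling the degenerate case $x=0$ (which either carries zero probability mass or produces the trivial output and is handled separately). A secondary subtlety is the independence-of-errors assumption on $\widehat{g}$, which I would flag as an abstraction of how a memoryless estimator behaves rather than attempt to justify from transformer internals; this is precisely the modeling assumption that Proposition~\ref{lemmawidth_formal} is built on, and the corollary inherits it without modification.
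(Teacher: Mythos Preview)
Your overall structure matches the paper's: decompose long-form $m$-by-$n$ multiplication as $f_n = h_n \circ (g(\cdot,1),\ldots,g(\cdot,n))$, with $g$ performing one $m$-by-1 multiplication and $h_n$ the shifted addition, then invoke Proposition~\ref{lemmawidth_formal}. The only cosmetic difference is that you fold the positional shift into $g$ and leave $h_n$ as plain summation, whereas the paper keeps $g$ as the unshifted $m$-by-1 product and defines $h_n(x_1,\ldots,x_n)=\sum_i x_i\,10^{n-i}$.

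There is, however, a genuine gap in your collision step. You argue $c_n=0$ by noting that, for fixed nonzero $x$, the map $y\mapsto(g(\mathbf{x},1),\ldots,g(\mathbf{x},n))$ is injective and $y\mapsto x\cdot y$ is injective, concluding that ``distinct tuples of partial products yield distinct shifted sums.'' But this only shows $h_n$ is injective on tuples lying in the image of the \emph{true} $g$. The collision hypothesis of Proposition~\ref{lemmawidth_formal} is applied in its proof with $\boldsymbol{X}=(g(\mathbf{x},i))_i$ and $\boldsymbol{Y}=(\widehat{g}(\mathbf{x},i))_i$, and nothing forces the estimator's erroneous output $\widehat{g}(\mathbf{x},i)$ to be of the form $x\cdot d\cdot 10^{i-1}$ for some digit $d$. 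Even if one stipulates that errors respect the shift, carries across overlapping digit positions still produce collisions: e.g.\ with $x=11$, $n=2$, $y=(1,1)$ the true tuple $(11,110)$ sums to $121$, but so does the erroneous tuple $(21,100)$. Hence $c_n>0$ in general and your ``$c_n=0$, so any $\beta\alpha^n$ works'' shortcut fails.

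The paper does not claim injectivity; it instead asserts the quantitative bound $\proba(h_n(X)=h_n(Y)\mid X\neq Y)<\beta\alpha^n$ with $\alpha=0.1$ and $\beta=10^m$, which is what actually triggers the stronger conclusion of Proposition~\ref{lemmawidth_formal}. To repair your argument you should replace the injectivity claim with such an exponentially decaying bound, obtained from the fact that each partial product is bounded by $10^{m+1}$ while the range of $h_n$ grows like $10^{m+n}$.
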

\begin{proof}

Let $g = d \circ s$ define the base-10 multiplication between $m$-digit numbers ($x_1x_2\ldots x_m$) and 1-digit numbers ($x_{m+j}$), where $s: \mathbb{Z}^{m+n}_{10} \times \mathbb{N} \rightarrow \mathbb{N} \times \mathbb{N}$ denotes the selection of the numbers to multiply and $d: \mathbb{N} \times \mathbb{Z}_{10} \rightarrow \mathbb{N}$ denotes the actual multiplication:
\begin{align*}
g &:= d \circ s \\
d(x, y) &:= x \cdot y\\
s([x_1,\ldots, x_m, x_{m+1}, \ldots, x_{m+n}], j) &:= (x_1\doubleplus x_2\doubleplus \ldots\doubleplus x_m, \  x_{m+j})\\ 
& \text{\ \ \ \ \ \ \ \  where $x_1\doubleplus x_2\doubleplus \ldots\doubleplus x_m$ denotes concatenating digits $x_i$}
\end{align*}

Let $h_n: \mathbb{N}^n \rightarrow \mathbb{N}$ describe the shifted addition used at the end of long-form multiplication to combine $n$ $m$-digit by $1$-digit multiplications, and let $f_n: \mathbb{Z}^{m + n}_{10} \rightarrow \mathbb{N}$ describe the long-form multiplication of $m$-digit by $n$-digit numbers:
\begin{align*}
    h_n(x_1,\ldots, x_n) &:= \displaystyle\sum_{i=1}^{n} x_i 10^{n-i}\\
f_n(\mathbf{x})&:=h_n(g(\mathbf{x}, 1), g(\mathbf{x}, 2)), \ldots, g(\mathbf{x}, n))
\end{align*}
By hypothesis, $\proba(g \neq \widehat{g}) = \epsilon > 0$ and $\proba(h_n = \widehat{h}_n) = 1$, where $\widehat{g}$ and $\widehat{h}_n$ denote estimators using model $\mathcal{M}$. It can be shown that $\proba(h_n(X) = h_n(Y) \ | \ X \neq Y) < \beta \alpha^n$ for $\alpha=0.1$ and $\beta=10^m$. Using Lemma \ref{lemmawidth_formal}, $\displaystyle\lim_{n\rightarrow+\infty} \proba(f_n \neq \widehat{f}_n) = 1$, which concludes our proof.

\end{proof}

Note that Lemma \ref{lemmawidth_formal}'s proofs gives us empirical bounds once $\epsilon$ and $\alpha$ are approximated. Also \textbf{note that our definition of $g$ in the proof of Corollary \ref{appendix:corollary_multiplication_width} highlights two possible sources of exponentially-accumulating error}: errors in the selection of the numbers to multiply $s$, and errors in the actual $m$-digit by $1$-digit multiplication $d$.

\subsection{%
Error accumulates with larger iterative applications of an estimated function (\textit{depth})
}\label{derivations:lemma_depth}
\begin{restatable}{prop}{lemmadepth}
Let $f_n(\mathbf{x}) = g^n(\mathbf{x})$. Assume $\proba(g(X) = \widehat{g}(Y) \ | \ X \neq Y) \leq c$ (i.e. recovering from a mistake due to the randomness of applying the estimator on an incorrect input has probability at most $c$). If $\proba(g \neq \widehat{g}) = \epsilon > 0$ with $c +\epsilon < 1$, then $\displaystyle\liminf_{n\rightarrow+\infty} \proba(f_n \neq \widehat{f}_n) \geq 1 - c / (c+\epsilon)$.
\end{restatable}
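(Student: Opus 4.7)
My plan is to follow the sketch indicated in the excerpt: set up a scalar recursion for the success probability $s_n := \proba(f_n = \widehat{f}_n)$ and then solve it as a contraction toward its fixed point. Let $X_k := g^k(\mathbf{x})$ and $Y_k := \widehat{g}^k(\mathbf{x})$, so $X_k$ is the true trajectory of the iteration and $Y_k$ is the estimator's trajectory. The base case is immediate: $s_1 = \proba(g(\mathbf{x}) = \widehat{g}(\mathbf{x})) = 1 - \epsilon$. For the inductive step I would apply the law of total probability by conditioning on whether the two trajectories have already diverged at step $n-1$:
\begin{align*}
s_n \;=\; \proba\bigl(g(X_{n-1}) = \widehat{g}(Y_{n-1}) \,\big|\, X_{n-1} = Y_{n-1}\bigr)\, s_{n-1} \;+\; \proba\bigl(g(X_{n-1}) = \widehat{g}(Y_{n-1}) \,\big|\, X_{n-1} \neq Y_{n-1}\bigr)\,(1 - s_{n-1}).
\end{align*}
On the ``agreement'' branch the event inside the conditional probability is exactly $g = \widehat{g}$, which has probability $1-\epsilon$. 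On the ``disagreement'' branch the conditional probability is precisely the recovery quantity assumed bounded by $c$. Combining, this yields the key recursion $s_n \le (1 - \epsilon - c)\, s_{n-1} + c$.

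Next I would solve this recursion. Write $a := 1 - \epsilon - c$; the hypotheses $\epsilon > 0$ and $c + \epsilon < 1$ place $a \in [0,1)$, and the affine map $t \mapsto a t + c$ has the unique fixed point $s^\star := c/(c+\epsilon)$. Subtracting $s^\star$ from both sides of the recursion turns it into the contraction $s_n - s^\star \le a \,(s_{n-1} - s^\star)$, and a one-line induction from $s_1 = 1-\epsilon$ delivers the explicit bound
\begin{align*}
s_n \;\le\; \frac{c}{c+\epsilon} \;+\; (1 - \epsilon - c)^{n-1}\!\left(1 - \epsilon - \frac{c}{c+\epsilon}\right),
\end{align*}
which matches the non-recursive upper bound quoted in the excerpt. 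Taking complements gives $\proba(f_n \neq \widehat{f}_n) \ge 1 - s_n$, and since $a \in [0,1)$ the second term vanishes as $n \to \infty$, so $\liminf_{n\to\infty}\proba(f_n \neq \widehat{f}_n) \ge 1 - c/(c+\epsilon)$. This also exhibits the exponential rate of convergence claimed in the statement.

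The main obstacle, and essentially the only nontrivial step, is justifying the disagreement-branch bound cleanly: the pair $(X_{n-1}, Y_{n-1})$ is not a fresh input but a random variable that depends on the whole past history of $\widehat{g}$'s randomness, whereas the hypothesis $\proba(g(X) = \widehat{g}(Y) \mid X \neq Y) \le c$ is stated for generic inputs. The cleanest way to handle this is to condition on the value of the pair $(X_{n-1}, Y_{n-1})$: for every realization $(x, y)$ with $x \neq y$, the hypothesis bounds the inner probability by $c$, and then averaging over the distribution of $(X_{n-1}, Y_{n-1})$ on the disagreement event preserves the inequality. With that conditioning argument in place, the rest is a routine linear recurrence, and the limit drops out as described.
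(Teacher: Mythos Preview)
Your proposal is correct and follows essentially the same approach as the paper: derive the recursion $s_n \le (1-\epsilon-c)\,s_{n-1}+c$ via the law of total probability, solve it to the closed form $s_n \le (1-\epsilon-c)^{n-1}\bigl(1-\epsilon-\tfrac{c}{c+\epsilon}\bigr)+\tfrac{c}{c+\epsilon}$, and pass to the limit. The only cosmetic difference is that you solve the recurrence via its fixed point, whereas the paper expands the induction into a geometric sum and then closes it; your treatment of the conditioning subtlety on $(X_{n-1},Y_{n-1})$ is an explicit justification that the paper leaves implicit.
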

\begin{proof}
We first derive a recursive upper bound using the law of total probability, and then prove a non-recursive upper bound by induction. 
{\footnotesize
\begin{align*}
    s_n := \proba(f_n = \widehat{f}_n) &= \proba(g(g^{n-1}(Z)) = \widehat{g}(\widehat{g}^{n-1}(Z))) \\
    &= \proba(g(\boldsymbol{X}) = \widehat{g}(\boldsymbol{Y})) \text{\ \ \ \ \ \ where $\boldsymbol{X}:=g^{n-1}(Z)$ and $\boldsymbol{Y}:=\widehat{g}^{n-1}(Z)$} \\
    &= \proba(g(\boldsymbol{X}) = \widehat{g}(\boldsymbol{Y}) \ | \ \boldsymbol{X} = \boldsymbol{Y}) \cdot \proba(\boldsymbol{X} = \boldsymbol{Y}) + \proba(g(\boldsymbol{X}) = \widehat{g}(\boldsymbol{Y}) \ | \ \boldsymbol{X} \neq \boldsymbol{Y}) \cdot \proba(\boldsymbol{X} \neq \boldsymbol{Y}) \\
    &= \proba(g(\boldsymbol{X}) = \widehat{g}(\boldsymbol{X})) \cdot \proba(\boldsymbol{X} = \boldsymbol{Y}) + \proba(g(\boldsymbol{X}) = \widehat{g}(\boldsymbol{Y}) \ | \ \boldsymbol{X} \neq \boldsymbol{Y}) \cdot (1 - \proba(\boldsymbol{X} = \boldsymbol{Y})) \\
    &= \proba(g(\boldsymbol{X}) = \widehat{g}(\boldsymbol{X})) \cdot s_{n-1} + \proba(g(\boldsymbol{X}) = \widehat{g}(\boldsymbol{Y}) \ | \ \boldsymbol{X} \neq \boldsymbol{Y}) \cdot (1 - s_{n-1}) \\
    &\leq (1- \epsilon) \cdot s_{n-1} + c \cdot (1 - s_{n-1})\\
    &\leq (1- \epsilon - c) \cdot s_{n-1} + c
\end{align*}
}

We know $s_1 = (1-\epsilon)$ since $s_1 = \proba(f_1 = \widehat{f}_1) = \proba(g = \widehat{g})$. Let $b := 1 - \epsilon - c$ for ease of writing. Then, we have
\begin{equation}
    s_n \leq b \cdot s_{n-1} + c \label{inequality_depth}
\end{equation}

It can be easily shown by induction that $s_n \leq b^{n-1} (1 - \epsilon) + c \sum_{i=0}^{n-2} b^i$:
\begin{itemize}
    \item The \textbf{base case $n=2$} is true since we know $s_2 \leq b \cdot s_1 + c$, and $b \cdot s_1 + c = b (1 - \epsilon) + c = b^{2-1} (1-\epsilon) + c \sum_{i=0}^{2-2} b^i$, thus showing $s_2 \leq b^{2-1} (1-\epsilon) + c \sum_{i=0}^{2-2} b^i$ %
    \item The \textbf{inductive step} yields directly using Equation \ref{inequality_depth},
\begin{align*}
    s_n &\leq b \cdot s_{n-1} + c \\
    &\leq b \cdot \Big(b^{n-2} (1 - \epsilon) + c \sum_{i=0}^{n-3} b^i\Big) + c \leq b^{n-1}(1-\epsilon) + c \sum_{i=1}^{n-2} b^i + c  \leq b^{n-1}(1-\epsilon) + c \sum_{i=0}^{n-2} b^i
\end{align*}
\end{itemize}

We can rewrite the geometric series $\sum_{i=0}^{n-2} b^i$ in its closed form $\frac{1 - b^{n-1}}{1 - b}$, and recalling $b:= 1-\epsilon-c$,
\begin{align*}
    s_n &\leq b^{n-1} (1 - \epsilon) + c \frac{1 - b^{n-1}}{1 - b} = b^{n-1} (1 - \epsilon) + c \frac{1 - b^{n-1}}{c + \epsilon} \\
    &= b^{n-1} (1 - \epsilon) + \frac{c}{c + \epsilon} - b^{n-1} \frac{c}{c + \epsilon}\\
    &= b^{n-1} \Big(1 - \epsilon - \frac{c}{c + \epsilon}\Big) + \frac{c}{c + \epsilon}
\end{align*}

Recalling that $s_n = \proba(f_n = \widehat{f}_n)$, we compute the limit inferior of $\proba(f_n \neq \widehat{f}_n) = 1 - s_{n} \geq 1 - b^{n-1} (1 - \epsilon - \frac{c}{c + \epsilon}) - \frac{c}{c + \epsilon}$.
\begin{align*}
    \liminf_{n\rightarrow+\infty} \proba(f_n \neq \widehat{f}_n) \geq \lim_{n\rightarrow+\infty} 1 - b^{n-1} \Big(1 - \epsilon - \frac{c}{c + \epsilon}\Big) - \frac{c}{c + \epsilon} = 1 - \frac{c}{c+\epsilon}
\end{align*}

that concludes our proof.
\end{proof}

We can generalize the proof in Lemma \ref{lemmadepth} to tasks where there are potentially many valid reasoning chains with the following alternative state-transition framing.

\begin{restatable}{lemma}{lemmadepthalternative_appendix}\label{appendix:lemmadepth_alternative}
Let $S$ denote the set of all possible states a language model can generate, and let $z: S \rightarrow \{0, 1\}$ defines if a state is valid (0 = invalid).
Let $\widehat{g}: S \rightarrow \Pi(S)$ be a state-transition function representing a language model's probability distribution of generating each possible next state when attempting to perform a single reasoning step. Assume $\proba(z(\widehat{g}(X)) = 1 \ | \ z(X) = 0) \leq c$ and  $\proba(z(\widehat{g}(X)) = 0 \ | \ z(X) = 1) = \epsilon > 0$ 
 with $c + \epsilon < 1$.
Then, $\displaystyle\liminf_{n\rightarrow+\infty} \proba(z(\widehat{g}^n) = 0) = 1 - c / (c+\epsilon)$.
\end{restatable}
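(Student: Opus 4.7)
The plan is to reduce the statement to a two-state Markov-chain analysis on the validity indicator $z$, mirroring almost verbatim the argument used in Proposition \ref{lemmadepth}. Define $s_n := \proba(z(\widehat{g}^n(X_0)) = 1)$ with $X_0$ an initial valid state, so that the target quantity is $q_n := 1 - s_n = \proba(z(\widehat{g}^n(X_0)) = 0)$ and $s_1 = 1 - \epsilon$. The idea is that although the underlying state space $S$ can be arbitrarily rich, the two hypotheses give uniform bounds on validity-to-validity transitions, which lets the dynamics collapse to a single scalar recursion in $s_n$.

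Concretely, I would apply the law of total probability at step $n$, conditioning on whether $\widehat{g}^{n-1}(X_0)$ is valid:
\begin{align*}
    s_n = \proba\bigl(z(\widehat{g}^n) = 1 \mid z(\widehat{g}^{n-1}) = 1\bigr)\, s_{n-1} + \proba\bigl(z(\widehat{g}^n) = 1 \mid z(\widehat{g}^{n-1}) = 0\bigr)\,(1 - s_{n-1}).
\end{align*}
The first conditional equals $1 - \epsilon$ by the second hypothesis; the second is at most $c$ by the first. This produces the affine recurrence $s_n \leq (1 - c - \epsilon)\,s_{n-1} + c$, identical in form to the one derived in \S\ref{derivations:lemma_depth}. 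The same induction then yields $s_n \leq b^{n-1}\bigl(1 - \epsilon - c/(c+\epsilon)\bigr) + c/(c+\epsilon)$ with $b := 1 - c - \epsilon \in [0,1)$ by hypothesis, and since $b^{n-1} \to 0$ we conclude $\limsup_n s_n \leq c/(c+\epsilon)$, hence $\liminf_n q_n \geq 1 - c/(c+\epsilon)$, matching the bound in Proposition \ref{lemmadepth}.

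The main obstacle is justifying the collapse to a scalar Markov recursion despite $S$ being unstructured and $\widehat{g}$ being an arbitrary stochastic map on $S$. The key observation is that the two hypotheses are stated pointwise in $X$ — they hold for \emph{every} random variable $X$, in particular for $X = \widehat{g}^{n-1}(X_0)$ — which is exactly what allows us to bound the conditional probabilities appearing in the total-probability expansion without any further independence assumption across steps. Once that point is settled, the remainder (induction, closed form, and limit) imports directly from the proof of Proposition \ref{lemmadepth} with no new calculations.
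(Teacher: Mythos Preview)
Your proposal is correct and follows exactly the approach the paper intends: the paper does not prove this lemma separately but introduces it by saying ``We can generalize the proof in Lemma~\ref{lemmadepth}\ldots'', and your argument is precisely that generalization---the same total-probability decomposition on validity at step $n-1$, the same affine recursion $s_n \leq (1-\epsilon-c)\,s_{n-1} + c$, and the same induction and limit. (One minor remark: the lemma as stated writes ``$=$'' but your proof, like the paper's own for Proposition~\ref{lemmadepth}, yields only $\liminf_n q_n \geq 1 - c/(c+\epsilon)$; since the hypothesis on $c$ is an inequality, equality need not hold, and this is almost certainly a typo in the statement rather than a gap in your argument.)
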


If for task $T$ we know that all valid reasoning chains to arrive at a correct result have at least length $n$ (i.e., the equivalent of defining $f_n = g^n$ in Lemma \ref{lemmawidth_formal}) then the probability of solving task $T$ correctly tends to at most $c / (c+\epsilon)$.

\begin{corollary}\label{app:corollary_depth_tasks}
    The recursions for dynamic programming tasks, the $m$-by-$1$ digit multiplication, and the puzzle's elimination function are all tasks where there is a fixed reasoning step $g$ being repeatedly applied. Therefore, we can directly apply Proposition \ref{lemmadepth} to these tasks.%
\end{corollary}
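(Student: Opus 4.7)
The plan is to verify, task by task, that each problem admits a natural formulation as the iterated composition $f_n = g^n$ of a single reasoning step $g$ on a suitably chosen state space, and then check that the hypotheses of Proposition \ref{lemmadepth} hold. The corollary then follows by direct invocation of that proposition; no new probabilistic machinery is required. I would organize the argument as three short instantiations followed by a common verification of the collision condition $c + \epsilon < 1$ (and, ideally, $c \ll \epsilon$, to make the liminf bound informative).

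For the dynamic programming task, I would take the state to be the tuple $(i, a_i, a_{i+1}, \ldots, a_n, dp_{i+1}, dp_{i+2})$ consisting of the position index, the remaining suffix of the input, and the two most recently computed DP values. The step $g$ maps this to $(i-1, a_{i-1}, \ldots, a_n, dp_i, dp_{i+1})$ by evaluating the recursion $dp_i = \max(dp_{i+1},\, a_i + dp_{i+2},\, 0)$ once, exactly as in \S\ref{dp_solution_explained}. Starting from the base state obtained after initializing $dp_n$ and $dp_{n-1}$, a linear number of applications of $g$ produces all DP values; the subsequent reconstruction phase is likewise an iterated application of an analogous one-step function. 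Hence $f_n = g^{\Theta(n)}$.

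For $m$-by-$1$ multiplication, long multiplication processes the digits of $x_1 x_2 \ldots x_m$ against the single digit $y$ from right to left. I would take the state to be $(i, c_{i-1}, x, y, \text{partial output})$, where $c_{i-1}$ is the carry before step $i$, and let $g$ compute $t = x_{m-i+1} \cdot y + c_{i-1}$, update the carry to $\lfloor t/10 \rfloor$, and append $t \bmod 10$ to the output. After $m$ applications plus one final carry-flush step, the algorithm terminates, so $f_m = g^{m+1}$. For the puzzle, the state is the current partially-filled $K \times M$ matrix paired with the (fixed) clue set, and $g$ is a single invocation of the greedy elimination subroutine of \S\ref{appendix:puzzle_graph_construction}, which fills at least one cell per call. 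Since each step fills at least one cell, completing the matrix takes at most $KM$ applications, so $f$ equals $g^{\le KM}$ from the empty matrix.

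The only substantive obstacle is justifying the low-collision condition $c + \epsilon < 1$ (and, in line with \S\ref{app:low_collision_in_high_depth_proof}, $c \ll \epsilon$) in each setting. The key observation is that the state spaces are large and structured: the DP state is a long integer tuple, the multiplication state carries a growing output buffer, and the puzzle state is a partially-filled matrix with combinatorially many completions. In each case, the probability that $\widehat{g}(Y)$ coincides with the specific correct successor $g(X)$ when $Y \neq X$ is small, since only one of the many syntactically plausible outputs matches. I would make this precise per task by upper-bounding $c$ as the proportion of next-state candidates that agree with $g(X)$, which in each of the three cases is bounded well below $1 - \epsilon$. With $c + \epsilon < 1$ thus established, Proposition \ref{lemmadepth} applies verbatim, yielding the claimed exponential decay of success probability with the number of iterated reasoning steps.
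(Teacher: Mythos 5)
Your proposal is correct and takes essentially the same route as the paper: each task is encoded as iterated application of a fixed single-step function $g$ on an explicit state tuple (carry-and-index state for $m$-by-$1$ multiplication, suffix-plus-two-DP-values state for the recursion and an analogous state for the reconstruction, and the partially-filled matrix for the puzzle), after which Proposition~\ref{lemmadepth} is invoked directly. The only difference is that you additionally sketch a justification of $c+\epsilon<1$ and $c\ll\epsilon$ inside the corollary, whereas the paper states the corollary purely as the structural reduction and defers the low-collision discussion to its separate heuristic argument in \S\ref{app:low_collision_in_high_depth_proof}.
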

\begin{proof}
Let's analyze the three tasks separately below.

\paragraph{$m$-by-$1$ digit multiplication may be viewed as $f^m(\mathbf{x})$}

Let $x = (x_1,\ldots,x_m)$ be the $m$-digit number that we multiply by the 1-digit number $y$ ($0 \leq y <10$). Let $z=(z_1,\ldots,z_{m+1})$ denote $z = x\cdot y$, which is guaranteed to have exactly $m+1$ digits (with possibly leading zeros). We define $f$ as:
\begin{align*}
    f(x_1, \ldots, x_m, y, i, c) := (x_1, \ldots, x_{i-1}, x_i', x_{i+1}, \ldots x_m, y, i-1, c')
\end{align*}

where $x_i':= (x_i \cdot y + c) \mod 10$ and $c' := \lfloor (x_i \cdot y + c) / 10 \rfloor$. Note that $x_i' = z_{i+1}$ since $f$ is performing one step of the long-form multiplication algorithm.

Let the initial input be $\mathbf{x} := (x_1, \ldots, x_m, y, m, 0)$. Then, it can be easily shown that $f^m(\mathbf{x})~=~(z_2, \ldots, z_{m+1}, y, 0, c)$. Since $c$ is the left-most carry, it is the leading digit of $z$, i.e. $c= z_1$ (possibly zero) . Thus, the value of $z$ can be directly extracted from $f^m(\mathbf{x})~=~(z_2, \ldots, z_{m+1}, y, 0, z_1)$.

\paragraph{In the DP task, $dp$'s computation may be viewed as $f^{m-2}(x)$ for a list of size $m$}

See \S\ref{dp_solution_explained} for details on the solution to this problem. We will use identical notation. Let $a_1,\ldots,a_m$ be an input list. Let $\mathbf{x} = (a_1,\ldots, a_{m-2}, a_{m-1}', a_m', m-2)$, where $a_m':= \max(a_m, 0)$ and $a_{m-1}':=\max(a_{m-1}, a_m, 0)$. Intuitively, this means that we have applied the first two steps of the $dp$ computation, and stored the results in $a_{m-1}'$ and $a_m'$. Let $f$ be a function representing the recursive computation of $dp_i$:
\begin{align*}
    f(a_1,\ldots,a_i, a_{i+1}', \ldots, a_m', i) = (a_1,\ldots, a_{i-1}, a_i', \ldots, a_m', i-1)
\end{align*}
where $a_i':=\max(a_{i+1}', a_i + a_{i+2}', 0)$.

Note that since $a_{i+1}'$ stores the value of $dp_{i+1}$ and $a_{i+2}'$ stores the value of $dp_{i+2}$, it can be easily shown that $f^{m-2}(\mathbf{x}) = (a_1', \ldots, a_m', 0) = (dp_1, \ldots, dp_m, 0)$. Therefore, $f^{m-2}$ computes all recursive values of $dp_i$ when given the base cases.

\paragraph{In the DP task, the reconstruction of the desired subsequence given already computed $dp$ values may be viewed as $f^m(x)$ for an input list of size $m$.}

This case is similar to the previous one. Let $r=(r_1,\ldots,r_m)$ be the result, where $r_i=1$ if $a_i$ was selected for the desired subsequence, and $r_i=2$ otherwise. Let $\mathbf{x}:=(dp_1,\ldots, dp_m, 0, 0, a_1, \ldots, a_m, 1, 1)$. Let $f$ be defined as follows:

{\small
\begin{align*}
    f(dp_1,\ldots, dp_m, 0, 0, a_1', \ldots, a_{i-1}', a_i, \ldots, a_m, i, u) = (dp_1,\ldots, dp_m, 0, 0, a_1', \ldots, a_{i}', a_{i+1}, \ldots, a_m, i+1, u')
\end{align*}
}

where $a_i':= 2 - \mathds{1}\{dp_i = a_i + dp_{i+2} \text{ and } u = 1\}$ and $u:= 1 - \mathds{1}\{dp_i = a_i + dp_{i+2} \text{ and } u = 1\}$. Intuitively, $a_i'$ stores whether the $i$-th element of the list should be selected for the final subsequence, assigning 1 if the element should be taken, and 2 otherwise (i.e., $a_i' = r_i$). Moreover, if the $i$-th element has been selected, we mark that the next item will not be available using $u'$. Therefore, $f$ performs one step of the final output reconstruction as defined in \S\ref{dp_solution_explained}.

It can be easily shown that $f^m(\mathbf{x}):=(dp_1,\ldots, dp_m, 0, 0, a_1', \ldots, a_m', m+1, u') = (dp_1,\ldots, dp_m, 0, 0, r_1, \ldots, r_m, m+1, u')$. Note that the extra two elements in the input state allow lifting the special cases $m-1$ and $m$ in the solution shown in  \S\ref{dp_solution_explained} without falling out of bounds.

\paragraph{Solving the puzzle task may be seen as $f^m$ for some $m$, where $f$ is the elimination function}

Let $c_1,\ldots, c_n$ be the list of clues, let $H$ be the number of houses, and let $A$ be a partially filled solution of size $K \times M$ as defined in \S\ref{sec:tasks}. Each cell $A_{ij}$ can take $H+1$ values: the $H$ options for the cell and the value \o, implying this cell has not been filled.
An elimination step $f$ may be defined as:
\begin{align*}
    f(c_1,\ldots,c_n, A_{11}, \ldots A_{1M}, \ldots, A_{K1}, \ldots A_{KM}) = (c_1, \ldots, c_n, A'_{11}, \ldots A'_{1M}, \ldots, A'_{K1}, \ldots A'_{KM})
\end{align*}
where $A'$ is also a partially filled matrix, with $A_{ij}=A'_{ij}$ for every $A_{ij}\neq$\o\>and where $A'$ has at least one more filled cell.

Let $\mathbf{x}=(c_1,\ldots,c_n, E)$ where $E$ is an empty matrix of size $K \times M$ (all cell values of $E$ are \o).

Then, a full solution is computed as $f^m(\textbf{x})$ for some value of $m$ that increases with the problem size. In contrast to other tasks, the value of $m$ is not fixed, and depends on the task instance, but using solvers we know that $m$ increases with problem size.
\end{proof}

\subsection{Discussing \texorpdfstring{$c\ll \epsilon$}{c«ε} in the context of Proposition 4.2}\label{app:low_collision_in_high_depth_proof}

Note that in Proposition~\ref{lemmadepth}, if $c\!\ll\!\epsilon$ then $\displaystyle\liminf_{n\rightarrow+\infty} \proba(f_n~\neq~\widehat{f}_n) \approx 1$. This is because assuming $\epsilon = m \cdot c$ for some $m>0$, we have $1 - \frac{c}{c+\epsilon} = 1 - \frac{c}{c + m \cdot c} = 1 - \frac{1}{m+1} = \frac{m}{m+1}$, and $\frac{m}{m+1}$ is a monotonically increasing function for all $m>0$ that tends to $1$ when $m$ goes to infinity. Therefore, large $m$'s (or alternatively, $c\ll \epsilon$) imply $\frac{m}{m+1}$ will be close to 1.

It is reasonable to assume $c\!\ll\!\epsilon$ when $g$ has low collision, since $c$ represents the probability of the estimator $\widehat{g}(y)$ arriving at the correct output $g(x)$ by chance when given the wrong input $y \neq x$.

If $g$ is discrete, it can take $|\text{Im}(g)|$ values, where  $|\text{Im}(g)|$ denotes the cardinal of the image space of $g$. Assuming approximately uniform errors, $c \approx \epsilon/|\text{Im}(g)|$, which in turn implies $c\ll \epsilon$ since $g$ being low collision implies $|\text{Im}(g)|$ is large.

If $g$ is continuous, under appropriate assumptions it seems plausible that we can prove that $c\approx 0$ (e.g. if errors are approximately uniform).

Summarizing both cases, if errors are approximately evenly distributed we obtain that $\displaystyle\liminf_{n\rightarrow+\infty} \proba(f_n~\neq~\widehat{f}_n) \approx 1$.

\newpage
\section{Additional Literature and Societal Impact}
\label{sec:socieal_impact}

\subsection{Additional Literature}
\paragraph{Iterated Functions} The process of repeatedly applying a noisy single operation or function $f$ can be related to iterated random functions~\citep{diaconis1999iterated}. In this latter literature, the focus is usually on the contractive regime in which accrued errors can be kept under control, and the subsequent convergence guarantees (e.g.,~\cite{delyon1999}). When $f$ is an affine transformation, the process falls simultaneously between two perspectives: time series~\citep{hamilton1994time} and dynamic programming and control \citep{bertsekas2022abstract}. We leverage the former to discuss the often explosive\,errors\,of\,$f^n$.

\subsection{Societal Impact Discussion}
Our work on analyzing the limitations of current transformers in compositional tasks can have a positive societal impact in several ways. By shedding light on these limitations, we contribute to a deeper understanding of the capabilities and constraints of these models. This knowledge is essential for researchers, developers, and policymakers in making informed decisions regarding the application of transformers in various domains.

Understanding the limitations of transformers in compositional reasoning is crucial for developing more reliable and robust AI systems. By identifying these shortcomings, we can direct future research efforts toward addressing these limitations and developing models that exhibit improved performance in handling complex tasks requiring compositional reasoning.

We do not foresee any negative societal impacts, as our analysis aims to understand the reasons behind transformers' failures and successes, but does not introduce any new model or dataset that future work may leverage.

\end{appendices}

\end{document}